
\documentclass[10pt]{article} 

\usepackage[accepted]{rlj}           

%
%

\usepackage{amssymb}            
\usepackage{mathtools}          
\usepackage{mathrsfs}           
\usepackage{graphicx}           
\usepackage{subcaption}         
\usepackage[space]{grffile}     
\usepackage{url}                
\usepackage{lipsum}             

\usepackage{macros}
\usepackage{dirtytalk}
\usepackage[capitalize,noabbrev]{cleveref}

\theoremstyle{plain}
\newtheorem{theorem}{Theorem}[section]

\theoremstyle{definition}
\newtheorem{definition}[theorem]{Definition}
\newtheorem{assumption}[theorem]{Assumption}
\theoremstyle{remark}
\newtheorem{remark}[theorem]{Remark}

\usepackage[textsize=tiny]{todonotes}

\newcounter{findingcounter}
\newcommand{\customfinding}[1][\normalfont]{\stepcounter{findingcounter}\textbf{Finding \thefindingcounter:}#1 }



\title{Pretraining Decision Transformers with Reward Prediction for In-Context Multi-task Structured Bandit Learning

}

\setrunningtitle{Pretraining Decision Transformers with Reward Prediction for Multi-task Bandit Learning}


\author{Subhojyoti Mukherjee\textsuperscript{1,$\dagger$}, Josiah P.\ Hanna\textsuperscript{2}, Qiaomin Xie\textsuperscript{3}, Robert Nowak\textsuperscript{4}}


\emails{subhomuk@adobe.com, \ \{jphanna,qiaomin.xie,rdnowak\}@wisc.edu}

\affiliations{
$^{1}$\textbf{Adobe Research, San Jose, CA, USA}\\
$^{2}$\textbf{Computer Sciences Department, University of Wisconsin -- Madison}\\
$^{3}$\textbf{ISYE, University of Wisconsin -- Madison} \\
$^{4}$\textbf{ECE Department, University of Wisconsin -- Madison} 
\par 
$^\dagger$ Work was done as a PhD candidate at the ECE Department, University of Wisconsin-Madison
}

\contribution{
    We introduce a new pre-training and test time decision-making procedure that in-context learns the underlying reward structure for structured bandit settings, resulting in a near-optimal policy without access to privileged information even when training data comes from a sub-optimal demonstrator.  
    }
    {
    Previous works like \dpt\ \citep{lee2023supervised} required access to the optimal action per task, Algorithmic Distillation (\ad) could not outperform the demonstrator, other works need to know the structure to perform optimally.
    }

\contribution{
    We show that our approach enables successful in-context learning across a diverse set of structured bandit settings where it matches the performance of existing algorithms that were developed with knowledge of the structure.
    }
    {
    We evaluate our approach in linear,non-linear, bilinear, and latent bandit settings as well as bandit experiments based on real-life datasets and show that it lowers regret compared to \dpt\ and \ad\ while matching the near-optimal performance of specialized algorithms.
    }

\contribution{
    We show that our algorithm leverages the latent structure and conducts a two-phase exploration to minimize regret. 
    }
    {
    We analyze the exploration of the pretrained decision transformer in the simplified linear bandit setting where the optimal policy is well-understood. Previous works like \dpt\ do not study the exploration conducted by such transformer algorithms. We introduce new actions both at train and test time. Since new actions are not shared across tasks now, the transformer algorithm fails to learn the latent structure as we scale up the number of new actions, thus indicating that it is relying on a discovered underlying structure.
    %
    We observed in our experiments that our proposed algorithm implicitly conducts two-phase exploration, following the distribution of optimal action across training tasks and then switching to the most rewarding action for the task after observing a few in-context examples. 
    }

\keywords{Structured Bandit, Multi-task Learning, Decision Transformer} 

\summary{We study learning to learn for the multi-task structured bandit problem where the goal is to learn a near-optimal algorithm that minimizes cumulative regret. The tasks share a common structure and an algorithm should exploit the shared structure to minimize the cumulative regret for an unseen but related test task. We use a transformer as a decision-making algorithm to learn this shared structure from data collected by a demonstrator on a set of training task instances. Our objective is to devise a training procedure such that the transformer will learn to outperform the demonstrator's learning algorithm on unseen test task instances. Prior work on pretraining decision transformers either requires privileged information like access to optimal arms or cannot outperform the demonstrator. Going beyond these approaches, we introduce a pre-training approach that trains a transformer network to learn a near-optimal policy in-context. This approach leverages the shared structure across tasks, does not require access to optimal actions, and can outperform the demonstrator. 
We validate these claims over a wide variety of structured bandit problems to show that our proposed solution is general and can quickly identify expected rewards on unseen test tasks to support effective exploration. 
}

\begin{document}

\makeCover  
\maketitle  

\begin{abstract}
We study learning to learn for the multi-task structured bandit problem where the goal is to learn a near-optimal algorithm that minimizes cumulative regret. The tasks share a common structure and an algorithm should exploit the shared structure to minimize the cumulative regret for an unseen but related test task. We use a transformer as a decision-making algorithm to learn this shared structure from data collected by a demonstrator on a set of training task instances. Our objective is to devise a training procedure such that the transformer will learn to outperform the demonstrator's learning algorithm on unseen test task instances. Prior work on pretraining decision transformers either requires privileged information like access to optimal arms or cannot outperform the demonstrator. Going beyond these approaches, we introduce a pre-training approach that trains a transformer network to learn a near-optimal policy in-context. This approach leverages the shared structure across tasks, does not require access to optimal actions, and can outperform the demonstrator. We validate these claims over a wide variety of structured bandit problems to show that our proposed solution is general and can quickly identify expected rewards on unseen test tasks to support effective exploration. 
\end{abstract}

\vspace*{-0.5em}
\section{Introduction}
\vspace*{-0.5em}
\label{sec:intro}

In this paper, we study multi-task bandit learning with the goal of learning an algorithm that discovers and exploits structure in a family of related tasks.
In multi-task bandit learning, we have multiple distinct bandit tasks for which we want to learn a policy. Though distinct, the tasks share some structure, which we hope to leverage to speed up learning on new instances in this task family.
Traditionally, the study of such structured bandit problems has relied on knowledge of the problem structure like linear bandits \citep{li2010contextual, abbasi2011improved, degenne2020gamification}, bilinear bandits \citep{jun2019bilinear}, hierarchical bandits \citep{hong2022deep, hong2022hierarchical}, Lipschitz bandits \citep{bubeck2008online, bubeck2011lipschitz, magureanu2014lipschitz},  other structured bandits settings \citep{riquelme2018deep, lattimore2019information, dong2021provable} and even linear and bilinear multi-task bandit settings \citep{yang2022nearly, du2023multi, mukherjee2023multi}.
When structure is unknown an alternative is to adopt sophisticated model classes, such as kernel machines or neural networks, exemplified by kernel or neural bandits \citep{valko2013finite, chowdhury2017kernelized, zhou2020neural, dai2022federated}. However, these approaches are also costly as they learn complex, nonlinear models from the ground up without any prior data \citep{justus2018predicting, zambaldi2018relational}.

 In this paper, we consider an alternative approach of synthesizing a bandit algorithm from historical data where the data comes from recorded bandit interactions with past instances of our target task family.
Concretely, we are given a set of state-action-reward tuples obtained by running some bandit algorithm in various instances from the task family.
We then aim to train a transformer \citep{vaswani2017attention} from this data such that it can learn in-context to solve new task instances.
\citet{laskin2022context} consider a similar goal and introduce the Algorithm Distillation (AD) method, however, AD aims to copy the algorithm used in the historical data and thus is limited by the ability of the data collection algorithm.
%
\citet{lee2023supervised} develop an approach, DPT, that enables learning a transformer that obtains lower regret in-context bandit learning compared to the algorithm used to produce the historical data. However, this approach requires knowledge of the optimal action at each stage of the decision process. In real problems, this assumption is hard to satisfy and we will show that DPT performs poorly when the optimal action is only approximately known.
With this past work in mind, the goal of this paper is to answer the question:
\begin{tcolorbox}
\begin{center}
    \textit{Can we learn an in-context bandit learning algorithm that obtains lower regret than the algorithm used to produce the training data without knowledge of the optimal action in each training task?}
\end{center}
\end{tcolorbox}    
To answer this question, we introduce a new pre-training methodology, called \textbf{Pre}-trained \textbf{De}cision \textbf{T}ransf\textbf{o}rmer with \textbf{R}eward Estimation (\pred) that obviates the need for knowledge of the optimal action in the in-context data –- a piece of information that is often inaccessible. 
%
%
Our key observation is that while the mean rewards of each action change from task to task, certain probabilistic dependencies are persistent across all tasks with a given structure \citep{yang2020impact, yang2022nearly, mukherjee2023multi}. These probabilistic dependencies can be learned from the pretraining data and exploited to better estimate mean rewards and improve performance in a new unknown test task.  
The nature of the probabilistic dependencies depends on the specific structure of the bandit and can be complex (i.e., higher-order dependencies beyond simple correlations).  
We propose to use transformer models as a general-purpose architecture to capture the unknown dependencies by training transformers to predict the mean rewards in each of the given trajectories \citep{mirchandani2023large, zhao2023expel}.  The key idea is that transformers have the capacity to discover and exploit complex dependencies in order to predict the rewards of all possible actions in each task from a \emph{small} history of action-reward pairs in a new task. 
This paper demonstrates how such an approach can achieve lower regret by outperforming state-of-the-art baselines, relying solely on historical data, without the need for any supplementary information like the action features or knowledge of the complex reward models. 
We also show that the shared actions across the tasks are vital for \pred\ to exploit the latent structure.  
We show that \pred\ learns to adapt, in-context, to novel actions and new tasks as long as the number of new actions is small compared to shared actions across the tasks.
%

\paragraph{Contributions}
\begin{enumerate}
    \item We introduce a new pre-training procedure, \pred, for learning the underlying reward structure and using this to circumvent the issue of requiring access to the optimal (or approximately optimal) action during training time.
    \item We demonstrate empirically that this training procedure results in lower regret in a wide series of tasks (such as linear, nonlinear, bilinear, and latent bandits) compared to prior in-context learning algorithms and bandit algorithms with privileged knowledge of the common structure.
    %
    %
    \item We also show that our training procedure leverages the shared latent structure. We systematically show that when the shared structure breaks down no reward structure or exploration is learned.
    \item Finally, we theoretically analyze the generalization ability of \pred\ through the lens of algorithmic stability and new results for the transformer setting.
\end{enumerate}
\section{Background}
\label{sec:prelim}
In this section, we first introduce our notation and the multi-task, structured bandit setting. 
We then formalize the in-context bandit learning model studied in \citet{laskin2022context, lee2023supervised, sinii2023context, lin2023transformers, ma2023rethinking, liu2023reason, liu2023self}. 
%


\subsection{Preliminaries}

In this paper, we consider the multi-task linear bandit setting \citep{du2023multi, yang2020impact, yang2022nearly}.  
In the multi-task setting, we have a family of related bandit problems that share an action set $\A$ and also a common action feature space $\X$.
%
The actions in $\A$ are indexed by $a=1,2,\ldots, A$. The feature of each action is denoted by $\bx(a) \in \R^d$ and $d \ll A$. 
%
%
A policy, $\pi$, is a probability distribution over the actions.

Define $[n] = \{1,2,\ldots,n\}$. 
In a multi-task structured bandit setting the expected reward for each action in each task is assumed to be an unknown function of the hidden parameter and action features \citep{lattimore2020bandit, gupta2020unified}.
The interaction proceeds iteratively over $n$ rounds for each task $m\in [M]$.
At each round $t\in [n]$ for each task $m\in [M]$, the learner selects an action $I_{m,t} \in \A$ and observes the reward $r_{m,t} = f(\bx(I_{m,t}), \btheta_{m,*}) + \eta_{m,t}$, where $\btheta_{m,*}\in\R^d$ is the hidden parameter specific to the task $m$ to be learned by the learner. The function $f(\cdot, \cdot)$ is the unknown reward structure. This can be $f(\bx(I_{m,t}), \btheta_{m,*}) = \bx(I_{m,t})^\top\btheta_{m,*}$ for the linear setting or even more complex correlation between features and $\btheta_{m,*}$ \citep{filippi2010parametric, abbasi2011improved,  riquelme2018deep, lattimore2019information, dong2021provable}.
%

In our paper, we assume that there exist weak demonstrators denoted by $\pi^{w}$. These weak demonstrators are stochastic $A$-armed bandit algorithms like Upper Confidence Bound (UCB)  \citep{auer2002finite-time, auer2010ucb} or Thompson Sampling \citep{thompson1933likelihood,agrawal2012analysis,russo2018tutorial,zhu2020thompson}.
We refer to these algorithms as weak demonstrators because they do not use knowledge of task structure or arm feature vectors to plan their sampling policy.
In contrast to a weak demonstrator, a strong demonstrator, like LinUCB, uses feature vectors and knowledge of task structure to conduct informative exploration.
Whereas weak demonstrators always exist, there are many real-world settings with no known strong demonstrator algorithm or where the feature vectors are unobserved and the learner can only use the history of rewards and actions. 
%
%
%
%

\vspace*{-1em}
\subsection{In-Context Learning Model}
\label{sec:learning-model}
\vspace*{-0.7em}

Similar to \citet{lee2023supervised, sinii2023context, lin2023transformers, ma2023rethinking, liu2023reason, liu2023self} we assume the in-context learning model. We first discuss the pretraining procedure.

\textbf{Pretraining:} Let $\cTp$ denote the distribution over tasks $m$ at the time of pretraining. 
%
Let $\Dpr$ be the distribution over all possible interactions that the $\pi^w$ can generate.
We first sample a task $m\sim \cT_{\text {pre }}$ and then a context $\H_m$ which is a sequence of interactions for $n$ rounds conditioned on the task $m$ such that $\H_m\sim\Dpr(\cdot|m)$.
%
%
So $\H_m = \{I_{m,t}, r_{m,t}\}_{t=1}^n$.
%
We call this dataset $\H_m$ an in-context dataset as it contains the contextual information about the task $m$. 
We denote the samples in $\H_m$ till round $t$ as $\H_{m}^t = \{I_{m,s}, r_{m,s}\}_{s=1}^{t-1}$. 
%
This dataset $\H_m$ can be collected in several ways: (1) random interactions within $m$, (2) demonstrations from an expert, and (3) rollouts of an algorithm. 
%
%
%
%
%
%
Finally, we train a causal GPT-2 transformer model $\T$ parameterized by $\bTheta$ on this dataset $\Dpr$. 
Specifically, we define $\T_{\bTheta}\left(\cdot \mid \H^t_m\right)$ as the transformer model that observes the dataset $\H^t_m$ till round $t$ and then produces a distribution over the actions.
Our primary novelty lies in our training procedure which we explain in detail in \Cref{sec:training}. 

\textbf{Testing:} We now discuss the testing procedure for our setting. Let $\cTs$ denote the distribution over test tasks $m\in [M_{\text {test }}]$ at the time of testing.
Let $\Dts$ denote a distribution over all possible interactions that can be generated by $\pi^w$ during test time. 
%
%
%
%
%
%
At deployment time, the dataset $\H_m^0 \leftarrow \{\emptyset\}$ is initialized empty. At each round $t$, an action is sampled from the trained transformer model $I_t \sim \T_{\bTheta}(\cdot \mid \H^t_m)$. The sampled action and resulting reward, $r_t$, are then added to $\H^t_m$ to form $\H^{t+1}_m$ and the process repeats for $n$ total rounds.
%
Finally, note that in this testing phase, the model parameter $\bTheta$ is not updated. 
%
Finally, the goal of the learner is to minimize cumulative regret for all task $m\in [\Mts]$ defined as follows: 
$
    \E[R_n] = \frac{1}{\Mts}\sum_{m=1}^{\Mts}\sum_{t=1}^n\max_{a\in\A}f\left( \bx(a), \btheta_{m,*} \right) - f\left( \bx(I_t), \btheta_{m,*} \right)$.

\vspace*{-0.5em}
\subsection{Related In-context Learning Algorithms}
\label{sec:prelim-dpt}
\vspace*{-0.5em}
In this section, we discuss related algorithms for in-context decision-making.
%
For completeness, we describe the \dpt\ and \ad\ training procedure and algorithm now.
During training, \dpt\ first samples $m\sim\cTp$ and then an in-context dataset $\H_m\sim \Dpr(\cdot|,m)$. It adds this $\H_m$ to the training dataset $\Htr$, and repeats to collect $\Mpr$ such training tasks. 
For each task $m$, \dpt\  requires the optimal action $a_{m,*} = \argmax_a f(\bx(m, a),  \btheta_{m, *})$ where $ f(\bx(m, a),  \btheta_{m, *})$ is the expected reward for the action $a$ in task $m$. 
Since the optimal action is usually not known in advance, in \Cref{sec:short-horizon} we introduce a practical variant of \dpt\  that approximates the optimal action with the best action identified during task interaction.
During training \dpt\ minimizes the cross-entropy loss:
\begin{align}
    \L^{\mathrm{\dpt}}_t = \operatorname{cross-entropy}(\T_{\bTheta}(\cdot|\H_m^t), p({a}_{m,*})) \label{eq:loss-dptg}
\end{align}
where $p({a}_{m,*})\!\in \!\triangle^{\A}$ is a one-hot vector such that $p(j) \!=\! 1$ when $j\!=\!{a}_{m,*}$ and $0$ otherwise. This loss is then back-propagated and used to update the model parameter $\bTheta$.

During test time evaluation for online setting the \dpt\ selects $I_t \sim \mathrm{softmax}^\tau_a(\T_{\bTheta}(\cdot|\H^t_m))$ where we define the $\mathrm{softmax}^\tau_a(\bv)$ over a $A$ dimensional vector $\bv\in \R^A$ as $ \mathrm{softmax}^\tau_a(\bv(a)) = \exp(\bv(a)/\tau)/\sum_{a'=1}^A \exp(\bv(a')/\tau)$ which produces a distribution over actions weighted by the temperature parameter $\tau > 0$. Therefore this sampling procedure has a high probability of choosing the predicted optimal action as well as induce sufficient exploration. 
%
%
In the online setting, the \dpt\ observes the reward $r_t(I_t)$ which is added to $\H^t_m$. So the $\H_m$ during online testing consists of $\{I_t, r_t\}_{t=1}^n$ collected during testing. This interaction procedure is conducted for each test task $m\in [M_{\text {test }}]$.
In the testing phase, the model parameter $\bTheta$ is not updated. 

An alternative to \dpt\ that does \textit{not} require knowledge of the optimal action is the \ad\ approach \citep{laskin2022context, lu2023structured}. In \ad, the learner aims 
to predict the next action of the demonstrator. So it
minimizes the cross-entropy loss as follows:
\begin{align}
    \L^{\mathrm{\ad}}_t = \operatorname{cross-entropy}(\T_\bTheta(\cdot|\H_m^t), p({I}_{m,t})) \label{eq:loss-AD}
\end{align}
where  $p({I}_{m,t})$ is a one-hot vector such that $p(j) = 1$ when $j={I}_{m,t}$ (the true action taken by the demonstrator) and $0$ otherwise. 
At deployment time, \ad\ selects $I_t \sim \mathrm{softmax}^\tau_a(\T_{\bTheta}(\cdot|\H^t_m))$.
%
The objective of \ad\ is to match the performance of the demonstrator.
%
%
In the next section, we introduce a new method that can improve upon the demonstrator without knowledge of the optimal action.

\section{The \pred Algorithm}
\label{sec:algo}






We now introduce our main algorithmic contribution, \pred\ (which stands for \textbf{Pre}-trained \textbf{De}cision \textbf{T}ransf\textbf{o}rmer with \textbf{R}eward Estimation). 
%
%
%

\vspace*{-0.8em}
\subsection{Pre-training Next Reward Prediction}
\label{sec:training}
\vspace*{-0.5em}
The key idea behind \pred\ is to leverage the in-context learning ability of transformers to infer the reward of each arm in a given test task.
By training this in-context ability on a set of training tasks, the transformer can implicitly learn structure in the task family and exploit this structure to infer rewards without trying every single arm.
\textcolor{blue}{Hence \pred\ requires access to all the finite set of arms. Note that the \ad, and \dpt\ only require access to the arms selected by the demonstrator.}
In contrast to \dpt\ and \ad\ that output actions directly, \pred\ outputs a scalar value reward prediction for each arm.
%
%
%
To this effect, we append a linear layer of dimension $A$ on top of a causal GPT2 model, denoted by $\rT_{\bTheta}(\cdot | \H_m)$, and use a least-squares loss to train the transformer to predict the reward for each action with these outputs. 
Note that we use $\rT_{\bTheta}(\cdot | \H_m)$ to denote a reward prediction transformer and $\T_{\bTheta}(\cdot | \H_m)$ as the transformer that predicts a distribution over actions (as in \dpt\ and \ad\ ).
%
%
%
%
At every round $t$ the transformer predicts the \emph{next reward} for each of the actions $a\in \A$ for the task $m$ based on $\H_m^t = \{I_{m,s}, r_{m,s}\}_{s=1}^{t-1}$. This predicted reward is denoted by $\wr_{m,t+1}(a)$ for each $a\in\A$. 



\textbf{Loss calculation:} For each training task, $m$, we calculate the loss at each round, $t$, using the transformer's prediction $\hat{r}_{m,t}(I_{m,t})$ and the actual observed reward $r_{m,t}$ that followed action $I_{m,t}$. 
%
%
We use a least-squares loss function:
\begin{align}
    \L_t = \left(\wr_{m,t}(I_{m, t}) -  r_{m,t}\right)^2 \label{eq:loss-transformer}
\end{align}
and hence minimizing this loss will minimize the mean squared-error of the transformer's predictions.
The loss is calculated using \eqref{eq:loss-transformer} and is backpropagated to update the model parameter $\bTheta$.

\textbf{Exploratory Demonstrator:} Observe from the loss definition in \eqref{eq:loss-transformer} that it is calculated from the observed true reward and action from the dataset $\H_m$. 
In order for the transformer to learn accurate reward predictions during training, we require that the weak demonstrator is sufficiently exploratory such that it collects $\H_m$ such that $\H_m$ contains some reward $r_{m,t}$ for each action $a$. We discuss in detail the impact of the demonstrator on \pred\ (\gt) training in \Cref{sec:data-collection}. 

\vspace*{-0.5em}
\subsection{Deploying \pred\ }
\vspace*{-0.5em}
At deployment time, \pred\ learns in-context to predict the mean reward of each arm on an unseen task and acts greedily with respect to this prediction.
%
%
%
That is, at deployment time, a new task is sampled, $m \sim \cTs$, and the dataset $\H_m^0$ is initialized empty. Then at every round $t$,  
\pred\ chooses $I_t = \argmax_{a\in\A} \rT_{\bTheta}\left( \wr_{m,t}(a) \mid \H^t_m\right)$ which is the action with the highest predicted reward and $\wr_{m,t}(a)$ is the predicted reward of action $a$. 
Note that \pred\ is a greedy policy and thus may fail to conduct sufficient exploration. To remedy this potential limitation, we also introduce a soft variant, \predt\, that chooses $I_t \sim \mathrm{softmax}^\tau_a\left(\rT_{\bTheta}\left( \mathbf{\wr}_{m,t}(a) \mid \H^t_m\right)\right)$.
For both \pred\ and \predt, the observed reward $r_t(I_t)$ is added to the dataset $\H_m$ and then used to predict the reward at the next round $t+1$. The full pseudocode of using \pred\ for online interaction is shown in \Cref{alg:pred}.
In \Cref{sec:offline}, we discuss how \pred\ (\gt) can be deployed for offline learning. We also highlight that \pred\ needs to forward-pass $|A|$ times to select the best arm during evaluation, and hence suffers from more computational overhead with a large action space, compared to \ad\ or \dpt.
%
%
\begin{algorithm}[!tbh]
\caption{\textbf{Pre}-trained \textbf{De}cision \textbf{T}ransf\textbf{o}rmer with \textbf{R}eward Estimation (\pred)}
\label{alg:pred}
    \begin{algorithmic}[1]
    \STATE \textbf{Collecting Pretraining Dataset} 
    \STATE Initialize empty pretraining dataset $\Htr$
    \FOR{$i$ in $[\Mpr]$ }
    \STATE Sample task $m \sim \cTp$, in-context dataset $\H_m \sim \Dpr(\cdot | m)$ and add this to $\Htr$.
    \ENDFOR
    \STATE \textbf{Pretraining model on dataset}
    \STATE Initialize model $\rT_{\bTheta}$ with parameters $\bTheta$
    \WHILE{\text{not converged}}
    \STATE Sample $\H_m$ from $\Htr$ and predict $\wr_{m,t}$ for action $(I_{m,t})$ for all $t \in[n]$
    \STATE Compute loss in \eqref{eq:loss-transformer} with respect to $r_{m,t}$ and backpropagate to update model parameter $\bTheta$.
    \ENDWHILE
    %
    \STATE \textbf{Online test-time deployment}
    \STATE Sample unknown task $m \sim \cTs$ and initialize empty $\H_m^{0}=\{\emptyset\}$
    \FOR{$t=1,2,\ldots,n$}
    \STATE Use $\rT_{\bTheta}$ on $m$ at round $t$ to choose 
    \begin{align*}
        I_t \begin{cases}
            = \argmax_{a\in\A} \rT_{\bTheta}\left( \wr_{m,t}(a) \mid \H^t_m\right), & \textbf{\pred} \\
            \sim \textrm{softmax}^\tau_a \rT_{\bTheta}\left( \wr_{m,t}(a) \mid \H^t_m\right), & \textbf{\predt}
        \end{cases}
    \end{align*}
    %
    \STATE Add $\left\{I_t, r_t\right\}$ to $\H_m^t$ to form $\H_m^{t+1}$.
    \ENDFOR
    \end{algorithmic}
\end{algorithm}
\vspace*{-0.5em}

\vspace*{-0.3em}
\section{Empirical Study: Non-Linear Structure}
\vspace*{-0.3em}
\label{sec:short-horizon}
Having introduced \pred, we now investigate its performance in diverse bandit settings compared to other in-context learning algorithms.
In our first set of experiments, we use a bandit setting with a common non-linear structure across tasks.
Ideally, a good learner would leverage the structure, however, we choose the structure such that no existing algorithms are well-suited to the non-linear structure.
This setting is thus a good testbed for establishing that in-context learning can discover and exploit common structure.
%
Moreover, each task only consists of a few rounds of interactions.
This setting is quite common in recommender settings where user interaction with the system lasts only for a few rounds and has an underlying non-linear structure \citep{kwon2022tractable, tomkins2020rapidly}.
We show that \pred\ achieves lower regret than other in-context algorithms for the non-linear structured bandit setting. 
%
%
We study the performance of \pred\ in the large horizon setting in \Cref{sec:horizon}.

\textbf{Baselines:} We first discuss the baselines used in this setting.

\noindent
\textbf{(1) \pred:} This is our proposed method shown in \Cref{alg:pred}.

\textbf{(2) \predt:} This is the proposed exploratory method shown in \Cref{alg:pred} and we fix $\tau=0.05$. 

\textbf{(3) \dptg:} This baseline is the greedy approximation of the \dpt\ algorithm from \citet{lee2023supervised} which is discussed in \Cref{sec:prelim-dpt}.
Note that we choose \dptg\ as a representative example of similar in-context decision-making algorithms studied in \citet{lee2023supervised, sinii2023context, lin2023transformers, ma2023rethinking, liu2023reason, liu2023self} all of which require the optimal action (or its greedy approximation). 
%
\dptg\ estimates the optimal arm using the reward estimates for each arm during each task. 
%
 
\textbf{(4) \ad:} This is the Algorithmic Distillation method \citep{laskin2022context, lu2021low} discussed in \Cref{sec:prelim-dpt}. 

\textbf{(5) \ts:} This baseline is the celebrated stochastic $A$-action bandit Thompson Sampling algorithm from \citet{thompson1933likelihood,agrawal2012analysis,russo2018tutorial,zhu2020thompson}.
We choose \ts\ as the weak demonstrator $\pi^w$ as it does not make use of arm features.
\ts\ is also a stochastic algorithm that induces more exploration in the demonstrations. 

\textbf{(6) \linucb:} (Linear Upper Confidence Bound): This baseline is the Upper Confidence Bound algorithm for the linear bandit setting that leverages the linear structure and feature of the arms to select the most promising action as well as conducting exploration. We choose \linucb\ as a baseline for each test task to show the limitations of algorithms that use linear feedback structure as an underlying assumption to select actions. Note that \linucb\ requires oracle access to features to select actions per task.

\textbf{(7) \mlin:} This is the multi-task linear regression bandit algorithm proposed by \citet{yang2021impact}. This algorithm assumes that there is a common low-dimensional feature extractor shared between the tasks and the reward of each task linearly depends on this feature extractor. We choose \mlin\ as a baseline to show the limitations of algorithms that use linear feedback structure \textit{across tasks} as an underlying assumption to select actions. Note that \mlin\ requires oracle access to the action features to select actions as opposed to \dpt, \ad, and \pred.

We describe in detail the baselines \ts, \linucb, and \mlin\ for interested readers in \Cref{app:baseline-details}.

\textbf{Outcomes:} First, we discuss the main outcomes from our experimental results in this section:

\begin{tcolorbox}
\customfinding \pred\ (\gt)  lowers regret compared to other baselines under unknown, non-linear structure. It learns to exploit the latent structure of the underlying tasks from in-context data even when it is trained without the optimal action $a_{m,*}$ (or its approximation) and without action features $\X$.
\end{tcolorbox}




\textbf{Experimental Result:} These findings are reported in \Cref{fig:expt-short-horizon}.
%
%
%
%
In \Cref{fig:short-horizon-nlm} we show the non-linear bandit setting for horizon $n=50$, $\Mpr = 100000$, $\Mts = 200$, $A=6$, and $d=2$. The demonstrator $\pi^w$ is the \ts\ algorithm. We observe that \pred\ (\gt) has lower cumulative regret than \dptg. 
Note that for this low data regime (short horizon) the \dptg\ does not have a good estimation of $\widehat{a}_{m,*}$ which results in a poor prediction of optimal action $\widehat{a}_{m,t,*}$. This results in higher regret.
The \pred\ (\gt) has lower regret than \linucb, and \mlin, which fail to perform well in this non-linear setting due to their algorithmic design and linear feedback assumption. Finally, \predt\ performs slightly better than \pred\ in both settings as it conducts more exploration. 

In \Cref{fig:short-horizon-nlm-feature} we show the non-linear bandit setting for horizon $n=25$, $\Mpr = 100000$, $\Mts = 200$, $A=6$, and $d=2$ where the norm of the $\btheta_{m,*}$ determines the reward of the actions which also is a non-linear function $\btheta_{m,*}$ and action features. This setting is similar to the wheel bandit setting of \citet{riquelme2018deep}. Again, we observe that \pred\ has lower cumulative regret than all the other baselines.

Finally in \Cref{fig:short-horizon-movielens} and \Cref{fig:short-horizon-yelp} we show the performance of \pred\ against other baselines in real-world datasets Movielens and Yelp. The Movielens dataset consists of more than 32 million ratings of 200,000 users and 80,000 movies \citep{harper2015movielens} where each entry consists of user-id, movie-id, rating, and timestamp. The Yelp dataset \citep{asghar2016yelp} consists of ratings of 1300 business categories by 150,000 users. Each entry is summarized as user-id, business-id, rating, and timestamp. Previously structured bandit works \citep{deshpande2012linear, hong2023multi} directly fit a linear structure or low-rank factorization to estimate the $\btheta_{m,*}$ and simulate the ratings. However, we directly use the user-ids and movie-ids (or business-ids) to build a histogram of ratings per user and calculate the mean rating per movie (or business-id) per task. Define this as the $\{\mu_{m,a}\}_{a=1}^A$. This is then used to simulate the rating for $n$ horizon per movie per task where the data collection algorithm is uniform sampling. 
Note that this does not require estimation of user or movie features, and \pred\ (\gt) learns to exploit the latent structure of user-movie (or business) rating correlations directly from the data.
From \Cref{fig:short-horizon-movielens} and \Cref{fig:short-horizon-yelp} we see that \pred, and \predt\ outperform all the other baselines in these settings. In the next section, we study the simplified linear setting to show that \pred\ is indeed exploiting the latent structure to minimize the cumulative regret.

\begin{figure}[!hbt]
\centering
\vspace*{-1em}
\begin{subfigure}[b]{0.25\textwidth}
    \includegraphics[scale=0.1]{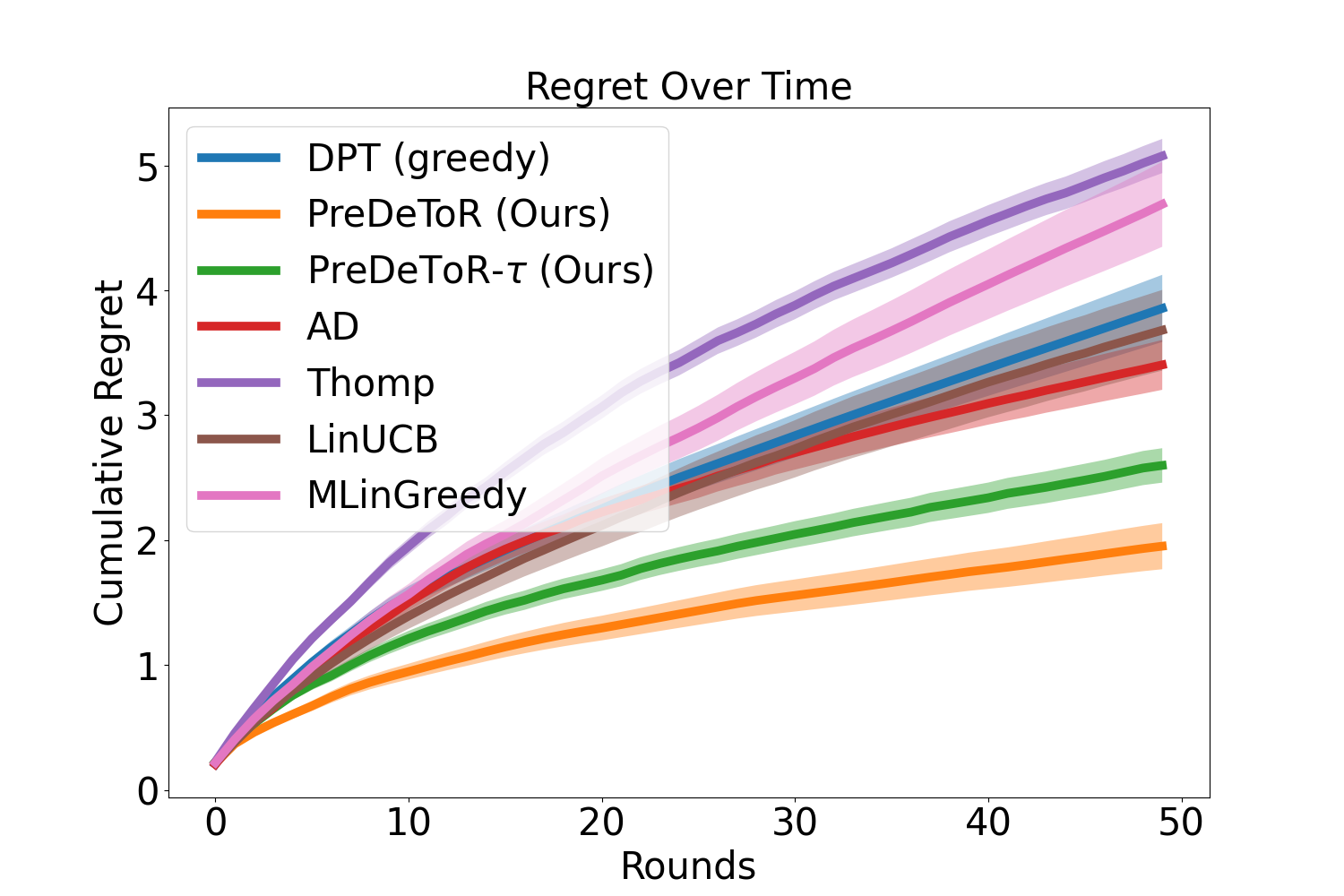}
    \caption{Non-linear bandit}
    \label{fig:short-horizon-nlm}
\end{subfigure}%
\begin{subfigure}[b]{0.25\textwidth}
    \includegraphics[scale=0.1]{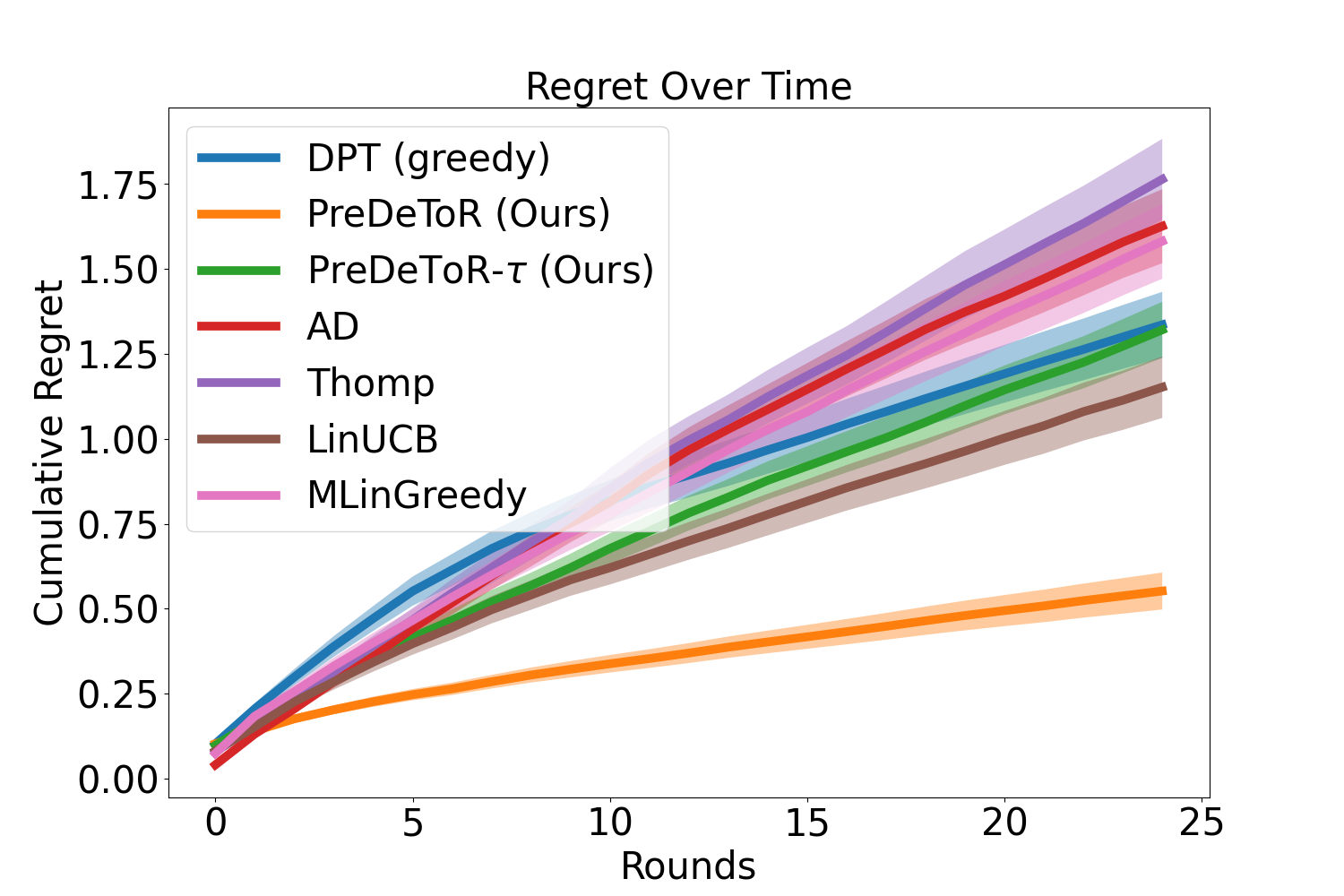}
    \caption{Feature bandit}
    \label{fig:short-horizon-nlm-feature}
\end{subfigure}%
\begin{subfigure}[b]{0.25\textwidth}
    \includegraphics[scale=0.1]{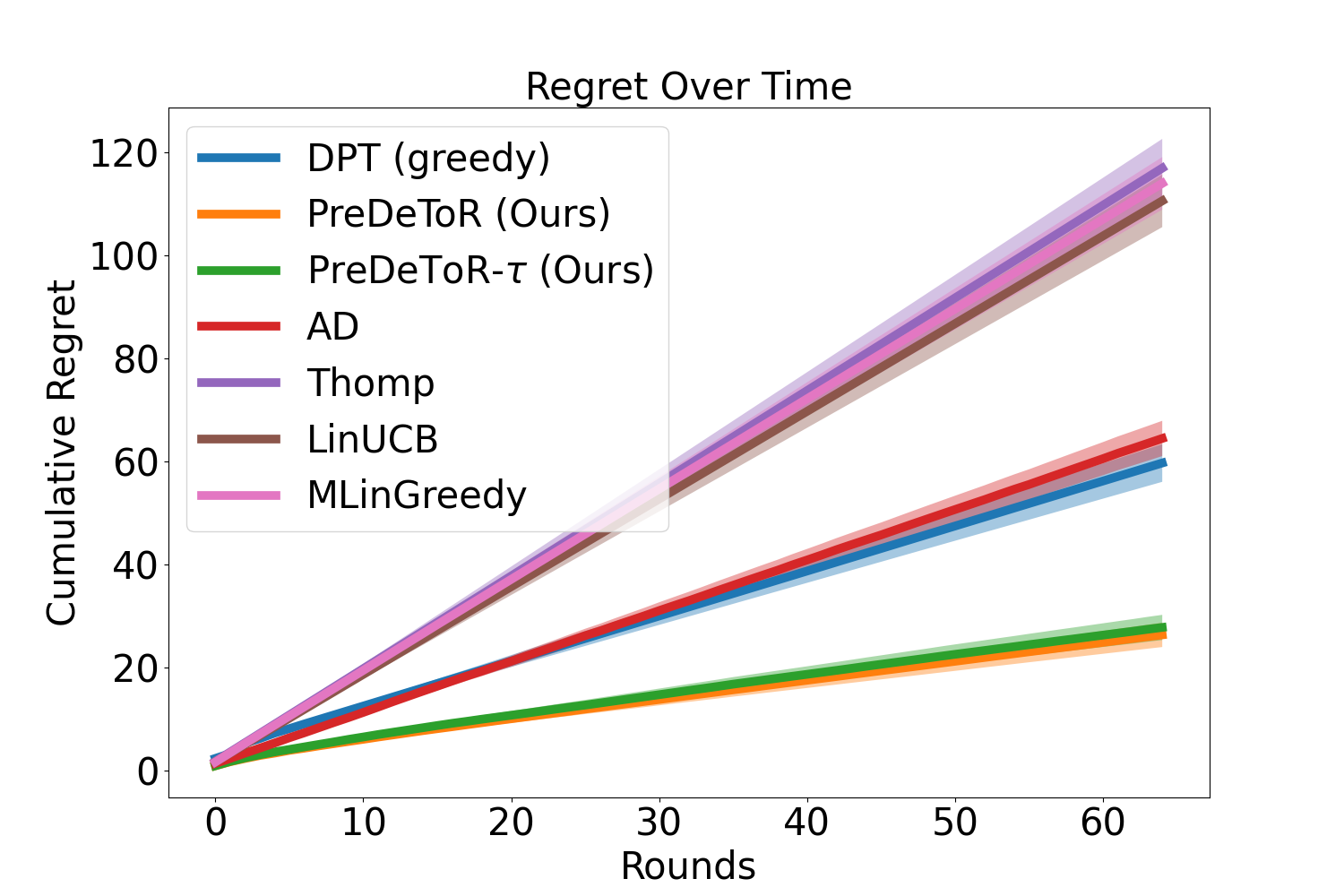}
    \caption{Movielens}
    \label{fig:short-horizon-movielens}
\end{subfigure}%
\begin{subfigure}[b]{0.25\textwidth}
    \includegraphics[scale=0.1]{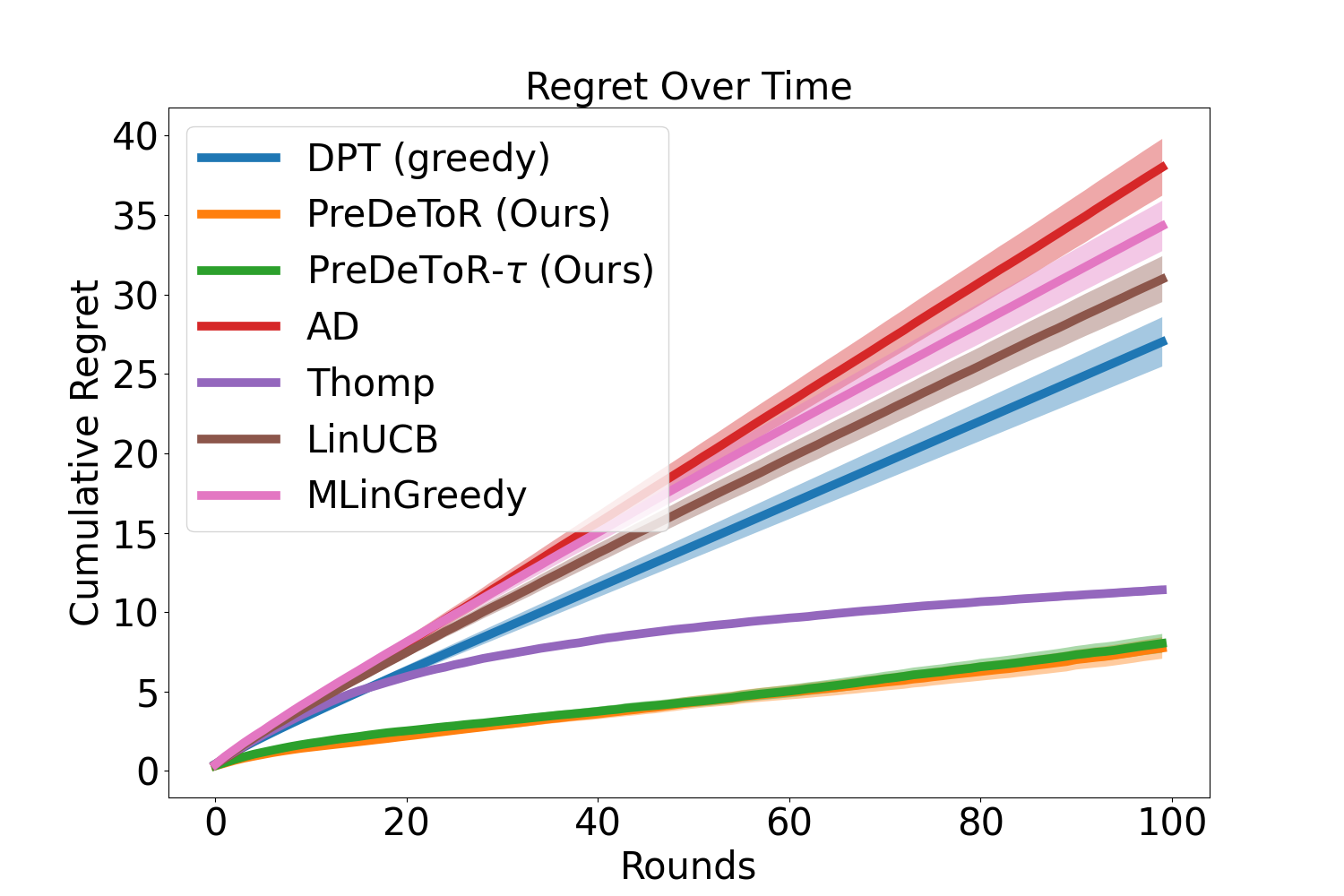}
    \caption{Yelp}
    \label{fig:short-horizon-yelp}
\end{subfigure}
\vspace*{-1em}
\caption{Non-linear regime. The horizontal axis is the number of rounds. Confidence bars show one standard error.}
\label{fig:expt-short-horizon}
\vspace{-0.7em}
\end{figure}

\vspace*{-0.5em}
\section{Empirical Study: Linear Structure and Understanding \pred's Exploration}
\label{sec:linear}
\vspace*{-0.5em}

The previous experiments were conducted in a non-linear structured setting where we are unaware of a provably near-optimal algorithm.
To assess how close \pred's regret is to optimal, in this section, we consider a \textit{linear} setting for which there exist well-understood algorithms \citep{abbasi2011improved,lattimore2020bandit}.
%
Such algorithms provide a strong upper bound for \pred. We summarize the key finding below:

\begin{tcolorbox}
\customfinding \pred\ (\gt) matches the performance of the optimal algorithm \linucb\ in linear bandit setting as it learns to exploit the latent structure across tasks from in-context data and without access to features.
\end{tcolorbox}

In \Cref{fig:short-horizon-lin} we first show the linear bandit setting for horizon $n=25$, $\Mpr = 200000$, $\Mts = 200$, $A=10$, and $d=2$. 
Note that the length of the context (the number of rounds) is an artifact of the transformer architecture and computational complexity. This is because the self-attention takes in as input a length-$n$ sequence of tokens of size $d$, and requires $O\left(d n^2\right)$ time to compute the output \citep{keles2023computational}. 
Further empirical setting details are stated in \Cref{sec:addl-expt}.

We observe from \Cref{fig:short-horizon-lin}  that \pred\ (\gt) has lower cumulative regret than \dptg, and \ad. Note that for this low data (short horizon) regime, the \dptg\ does not have a good estimation of $\widehat{a}_{m,*}$ which results in a poor prediction of optimal action $\widehat{a}_{m,t,*}$. This results in higher regret. Observe that \pred\ (\gt) performs quite similarly to \linucb\ and lowers regret compared to \ts\ which also shows that \pred\ is able to exploit the latent linear structure and reward correlation of the underlying tasks.
Note that \linucb\ is close to the optimal algorithm for this linear bandit setting. 
%
\pred\ outperforms \ad\ as the main objective of \ad\ is to match the performance of its demonstrator.
In this short horizon, we see that \mlin\ performs similarly to \linucb. 

We also show how the prediction error of the optimal action by \pred\ is small compared to \linucb\ in the linear bandit setting. In \Cref{fig:lin-action-dist} we first show how the $10$ actions are distributed in the $\Mts=200$ test tasks. In \Cref{fig:lin-action-dist} for each bar, the frequency indicates the number of tasks where the action (shown in the x-axis) is the optimal action. Then, in \Cref{fig:lin-action-error}, we show the prediction error of \pred\ (\gt) for each task $m\in[\Mts]$. The prediction error is calculated as $(\wmu_{m,n, *}(a) - \mu_{m, *}(a))^2$ where $\wmu_{m,n, *}(a) = \max_a\wtheta_{m,n}^\top\bx_m(a)$ is the empirical mean at the end of round $n$, and $\mu_{*,m}(a)=\max_a\btheta_{m,*}^\top\bx_m(a)$ is the true mean of the optimal action in task $m$. Then we average the prediction error for the action $a\in [A]$ by the number of times the action $a$ is the optimal action in some task $m$. 
From the \Cref{fig:lin-action-error}, we see that for actions $\{2,3,5,6,7,10\}$, the prediction error of \pred\ is either close or smaller than \linucb. 
Note that \linucb\ estimates the empirical mean directly from the test task, whereas \pred\ has a strong prior based on the training data. So \pred\ is able to estimate the reward of the optimal action quite well from the training dataset $\Dpr$.
This shows the power of \pred\ to go beyond the in-context decision-making setting studied in \citet{lee2023supervised, lin2023transformers, ma2023rethinking, sinii2023context, liu2023reason} which require long horizons/trajectories and optimal action during training to learn a near-optimal policy. 

%


\begin{figure}[!hbt]
\centering
\vspace*{-1em}
\begin{subfigure}[b]{0.32\textwidth}
    \includegraphics[scale=0.1]{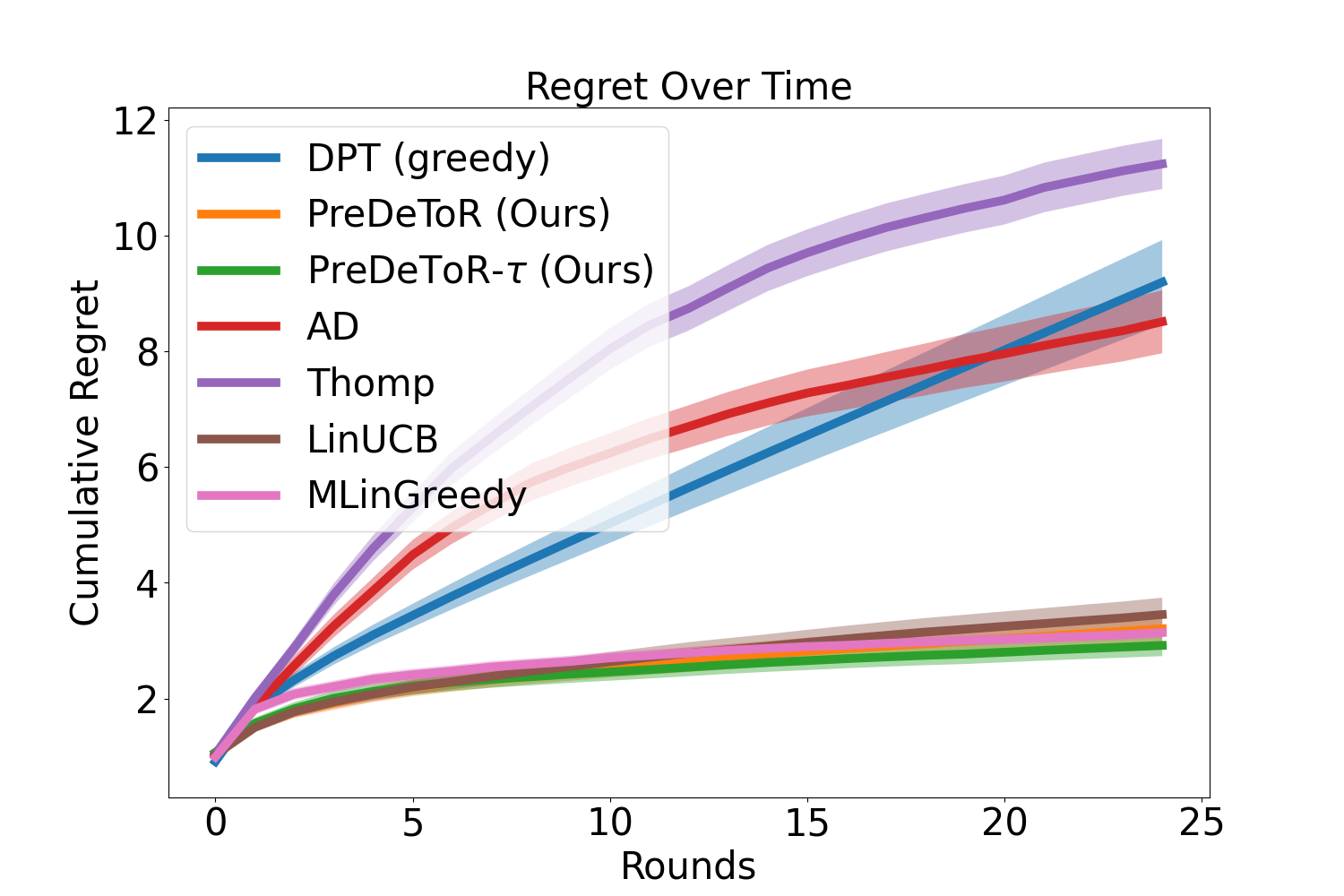}
    \caption{Linear Bandit setting}
    \label{fig:lin}
\end{subfigure}%
\begin{subfigure}[b]{0.32\textwidth}
    \includegraphics[scale=0.21]{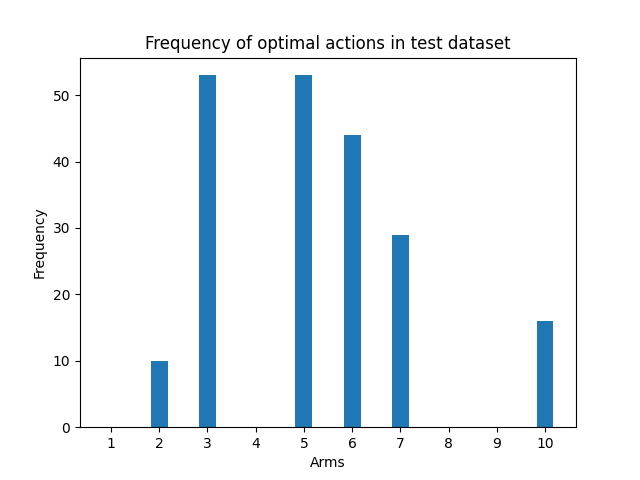}
    \caption{Test action distribution}
    \label{fig:lin-action-dist}
\end{subfigure}%
\begin{subfigure}[b]{0.32\textwidth}
    \includegraphics[scale=0.21]{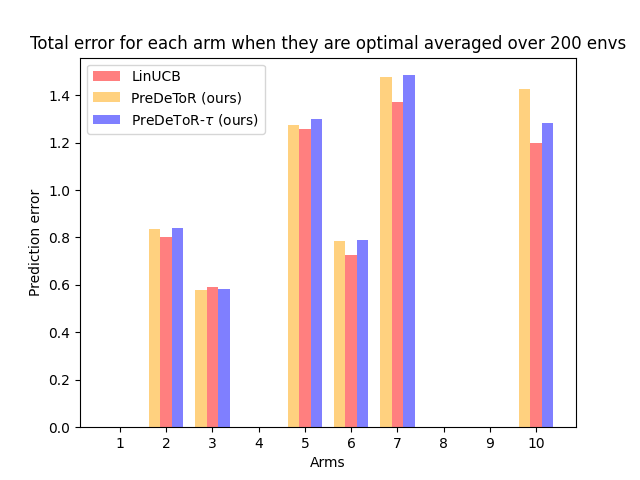}
    \caption{Test Prediction Error}
    \label{fig:lin-action-error}
\end{subfigure}
\vspace*{-1em}
\caption{Linear Expt. The horizontal axis is the number of rounds. Confidence bars show one standard error.}
\label{fig:short-horizon-lin}
\vspace{-0.7em}
\end{figure}



We now state the main finding of our analysis of exploration in the linear bandit setting:

\begin{tcolorbox}
\customfinding The \pred\ (\gt) has an implicit two-phase exploration. In the first phase, it explores with a strong prior over the in-context training data. In the second phase, once the task data has been observed for a few rounds (in-context) it switches to task-based exploration.
\end{tcolorbox}



We first show in \Cref{fig:train-dist} the training distribution of the optimal actions. For each bar, the frequency indicates the number of tasks where the action (shown in the x-axis) is the optimal action.
Then in \Cref{fig:analysis-epxploration} we show how the sampling distribution of \dptg, \pred\, and \predt\ change in the first $10$ and last $10$ rounds for all the tasks where action $5$ is optimal. To plot this graph we first sum over the individual pulls of the action taken by each algorithm over the first $10$ and last $10$ rounds. Then we average these counts over all test tasks where action $5$ is optimal. From the figure \Cref{fig:analysis-epxploration} we see that \pred (\gt) consistently pulls the action $5$ more than \dptg. It also explores other optimal actions like $\{2,3,6,7,10\}$ but discards them quickly in favor of the optimal action $5$ in these tasks. This shows that \pred\ (\gt) only considers the optimal actions seen from the training data. Once sufficient observation have been observed for the task it switches to task-based exploration and samples the optimal action more than \dptg.

Finally, we plot the feasible action set considered by \dptg, \pred, and \predt\ in \Cref{fig:analysis-exploration-time}. To plot this graph again we consider the test tasks where the optimal action is $5$. Then we count the number of distinct actions that are taken from round $t$ up until horizon $n$. Finally we average this over all the considered tasks where the optimal action is $5$. We call this the candidate action set considered by the algorithm. From the \Cref{fig:analysis-exploration-time} we see that \dptg\ explores the least and gets stuck with few actions quickly (by round $10$). Note that the actions \dptg\ samples are sub-optimal and so it suffers a high cumulative regret (see \Cref{fig:short-horizon-lin}).  \pred\ explore slightly more than \dptg, but \predt\ explores the most. 
%
%
\begin{figure}[!hbt]
\vspace*{-1em}
\centering
\begin{subfigure}[b]{0.27\textwidth}
   \includegraphics[scale=0.25]{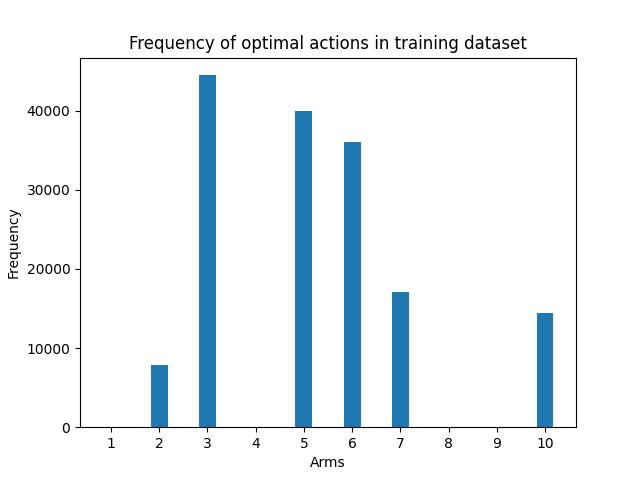}
   \caption{Train Optimal Action Distribution}
   \label{fig:train-dist}
\end{subfigure}%
\hspace*{1em}\begin{subfigure}[b]{0.27\textwidth}
   \includegraphics[scale=0.25]{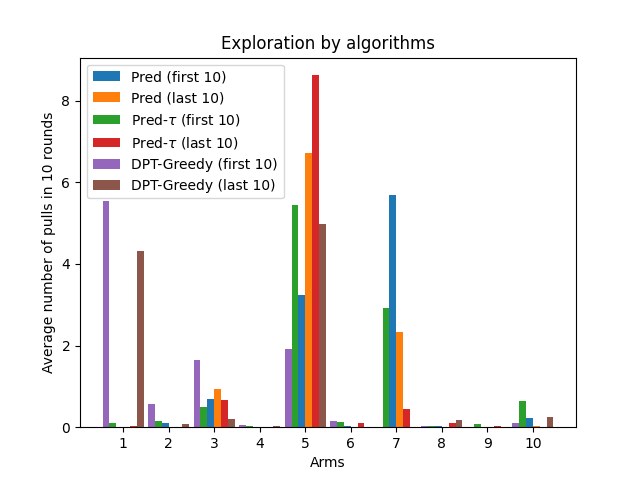}
   \caption{Distribution of action sampling in all test tasks where action $5$ is optimal}
   \label{fig:analysis-epxploration}
\end{subfigure}%
\hspace*{1em}\begin{subfigure}[b]{0.27\textwidth}
   \includegraphics[scale=0.25]{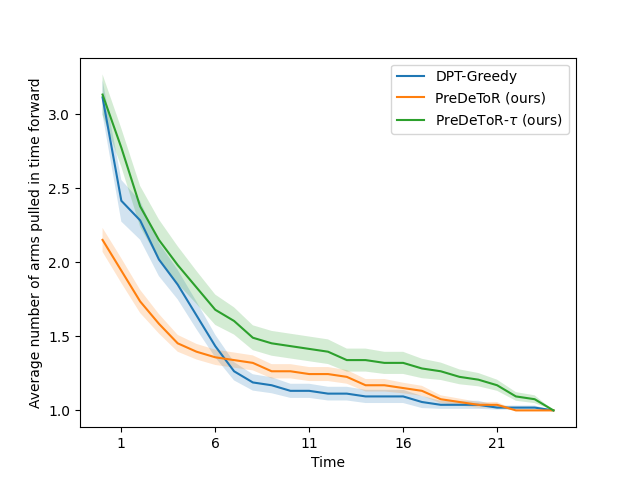}
   \caption{Candidate Action Set in Time averaged over all tasks where action $5$ is optimal}
   \label{fig:analysis-exploration-time}
\end{subfigure}
\caption{Exploration Analysis of \pred (\gt)}
\label{fig:exploration-analysis-time}
\end{figure}
\vspace*{-1em}

\vspace*{-0.5em}
\section{Empirical Study: Importance of Shared Structure and Introducing New Actions}
\label{sec:new-actions}
\vspace*{-0.5em}
One of our central claims is that \pred\ (\gt) internally learns and leverages the shared structure across the training and testing tasks.
To validate this claim, in this section, we consider the introduction of new actions at test time that do \textit{not} follow the structure of training time.
%
These experiments are particularly important as they show the extent to which \pred (\gt) is leveraging the latent structure and the shared correlation between the actions and rewards.

\textbf{Invariant actions:} We denote the set of actions fixed across the different tasks in the pretraining in-context dataset as $\Anc$. Therefore these action features $\bx(a)\in\R^d$ for $a\in \Anc$ are fixed across the different tasks $m$. Note that these invariant actions help the transformer $\T_{\bw}$ to learn the latent structure and the reward correlation across the different tasks. Therefore, as the structure breaks down, \pred\ starts performing worse than other baselines.

\textbf{New actions:} We also want to test whether \pred\ (\gt) exploits shared structure when new actions are introduced that are not seen during training time. To this effect, for each task $m\in [\Mpr]$ and $m\in [\Mts]$ we introduce $A - |\Anc|$ new actions. \textit{That is both for train and test tasks, we introduce new actions.}  For each of these new actions $a\in [A - |\Anc|]$ we choose the features $\bx(m,a)$ randomly from $\X\subseteq\R^d$. Note the transformer now trains on a dataset $\H_m \subseteq \Dpr \neq \Dts$.

\begin{figure}[!hbt]
\centering
\vspace*{-1em}
\begin{subfigure}[b]{0.24\textwidth}
    \includegraphics[scale=0.1]{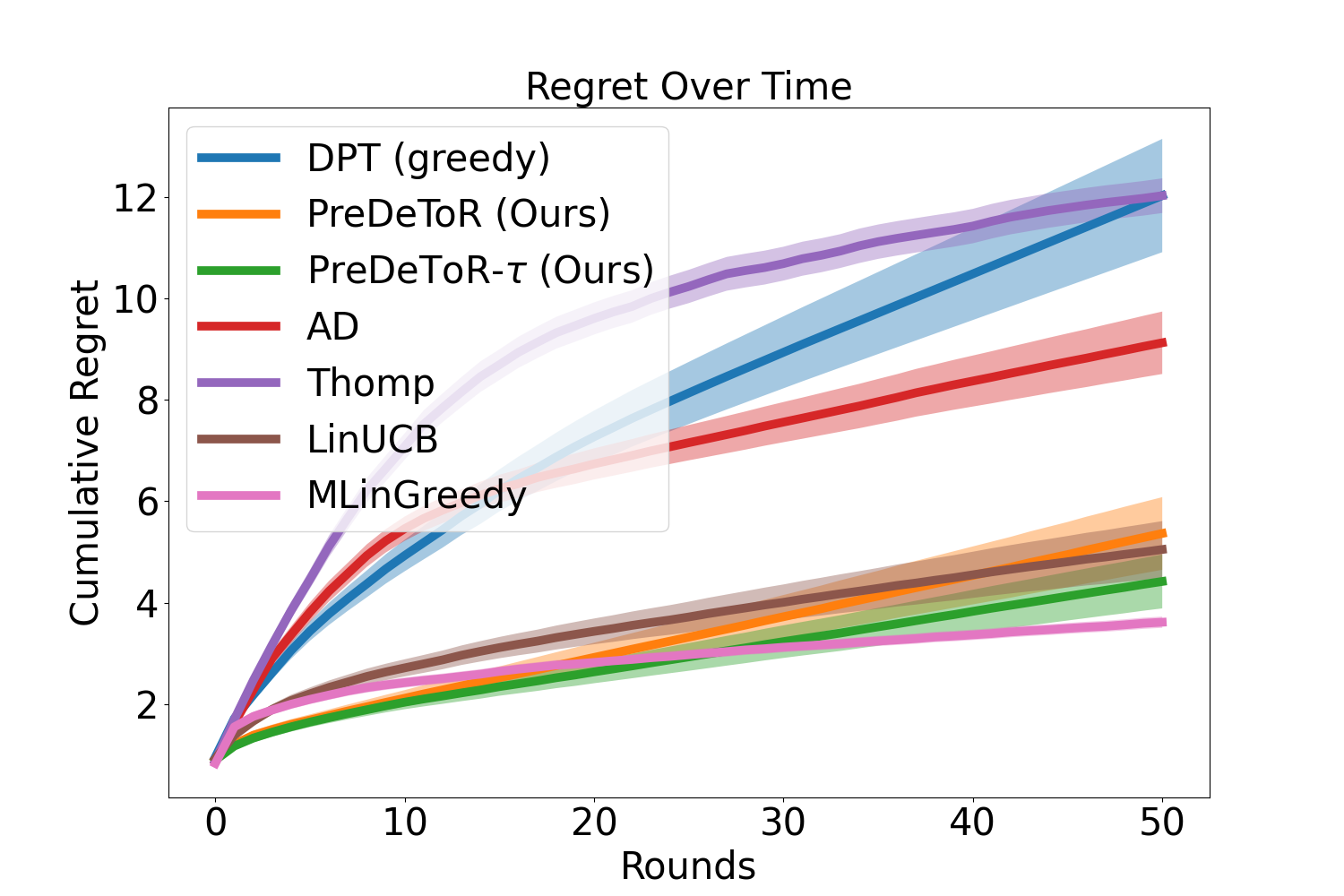}
    \caption{$0$ new action}
    \label{fig:new-lin-1}
\end{subfigure}%
\begin{subfigure}[b]{0.24\textwidth}
    \includegraphics[scale=0.1]{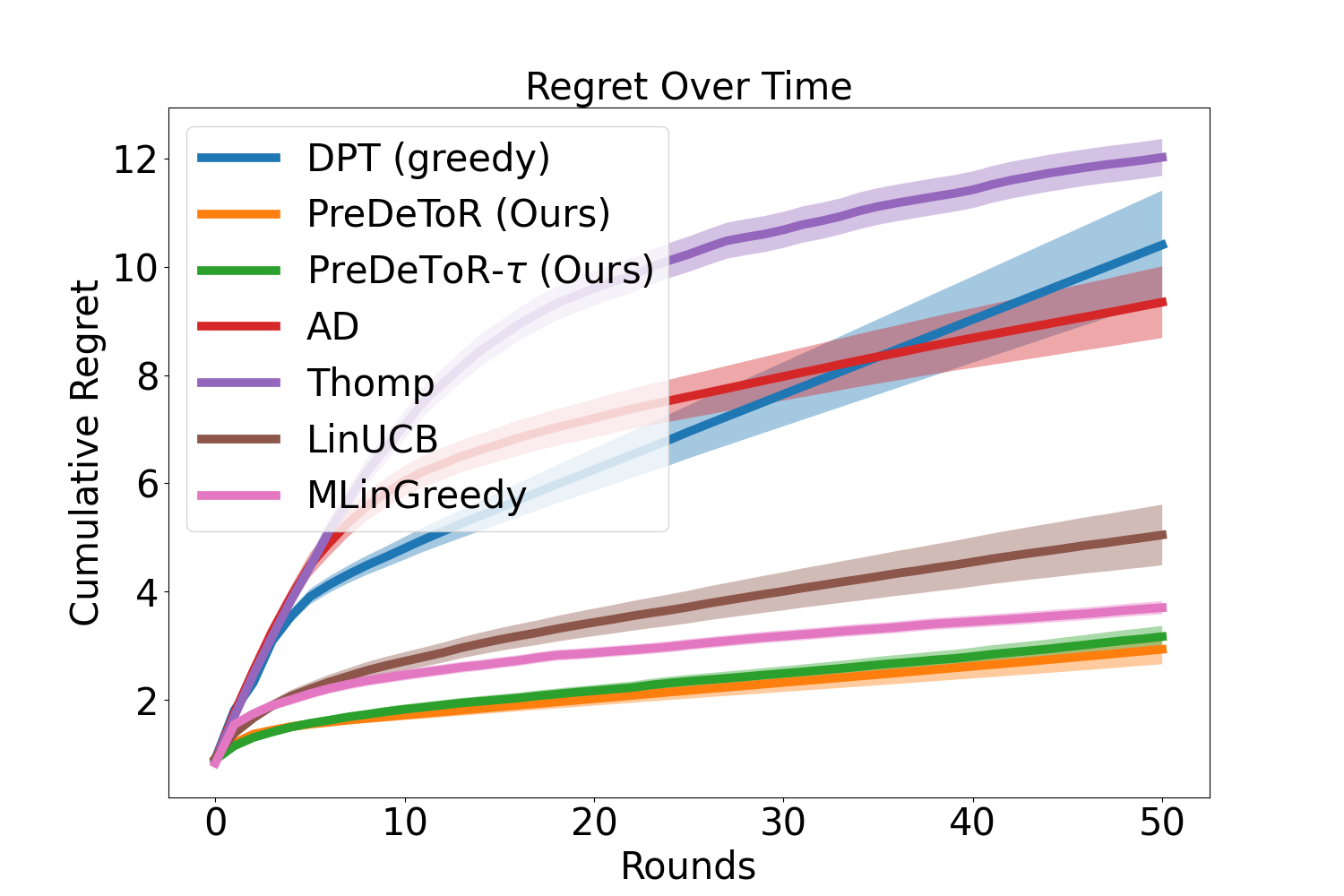}
    \caption{$1$ new action}
    \label{fig:new-lin-2}
\end{subfigure}%
\begin{subfigure}[b]{0.24\textwidth}
    \includegraphics[scale=0.1]{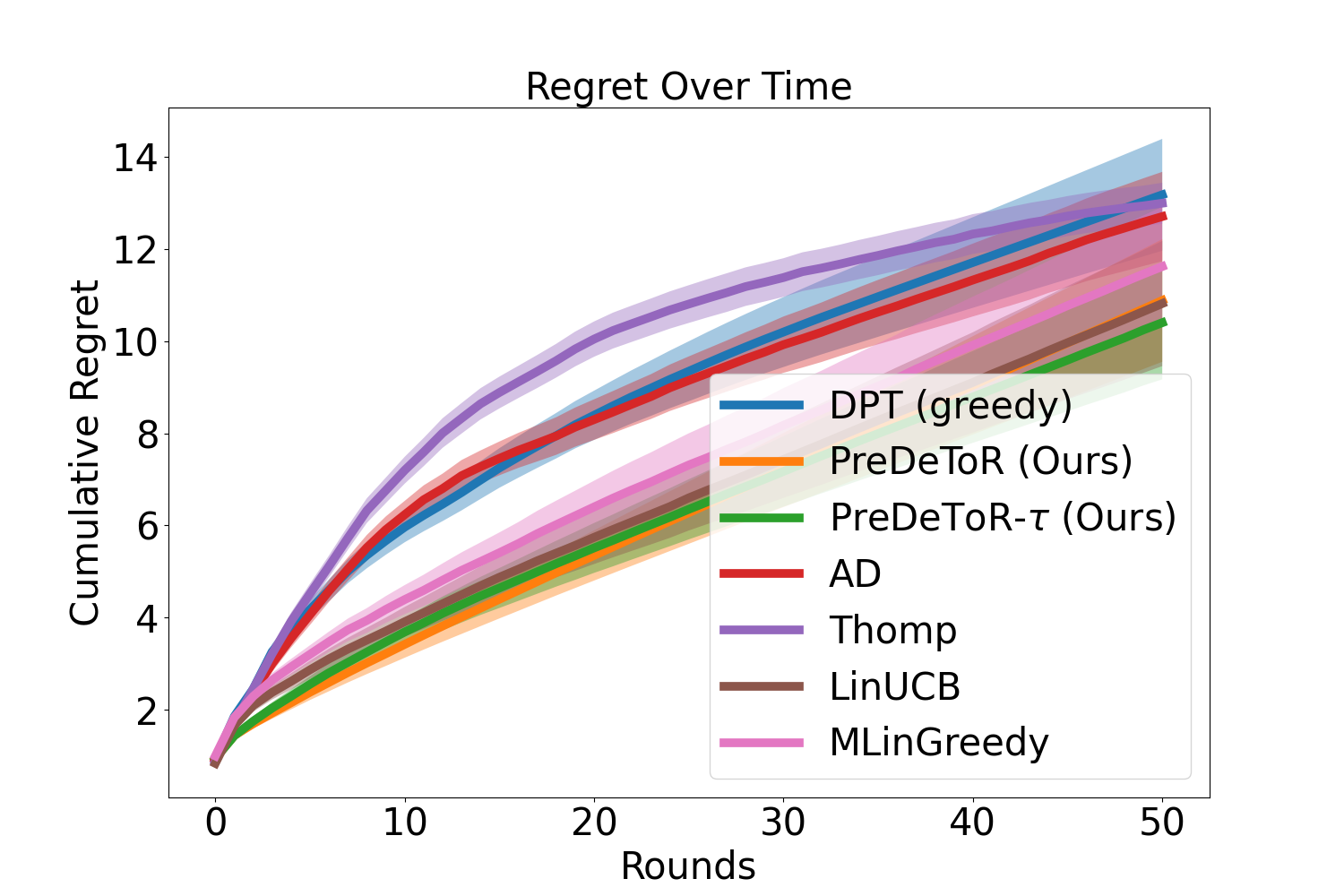}
    \caption{$5$ new actions}
    \label{fig:new-lin-3}
\end{subfigure}%
\begin{subfigure}[b]{0.24\textwidth}
    \includegraphics[scale=0.1]{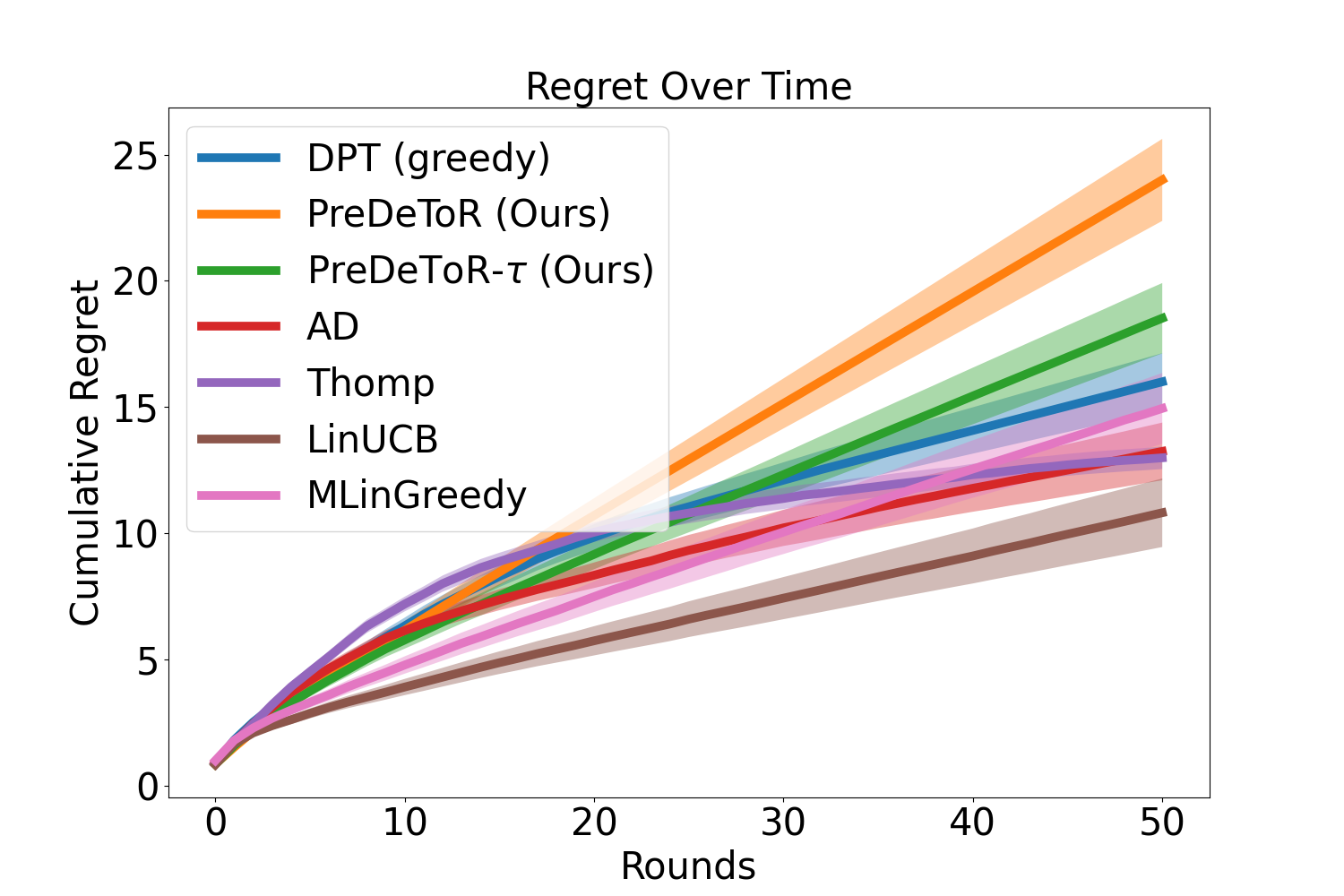}
    \caption{$10$ new actions}
    \label{fig:new-lin-4}
\end{subfigure}
\vspace*{-1.2em}
\caption{Linear new action experiments. The horizontal axis is the number of rounds. Confidence bars show one standard error.}
\label{fig:expt-new-actions-lin}
\vspace{-0.8em}
\end{figure}


\begin{figure}[!hbt]
\centering
\vspace*{-1em}
\begin{subfigure}[b]{0.24\textwidth}
    \includegraphics[scale=0.1]{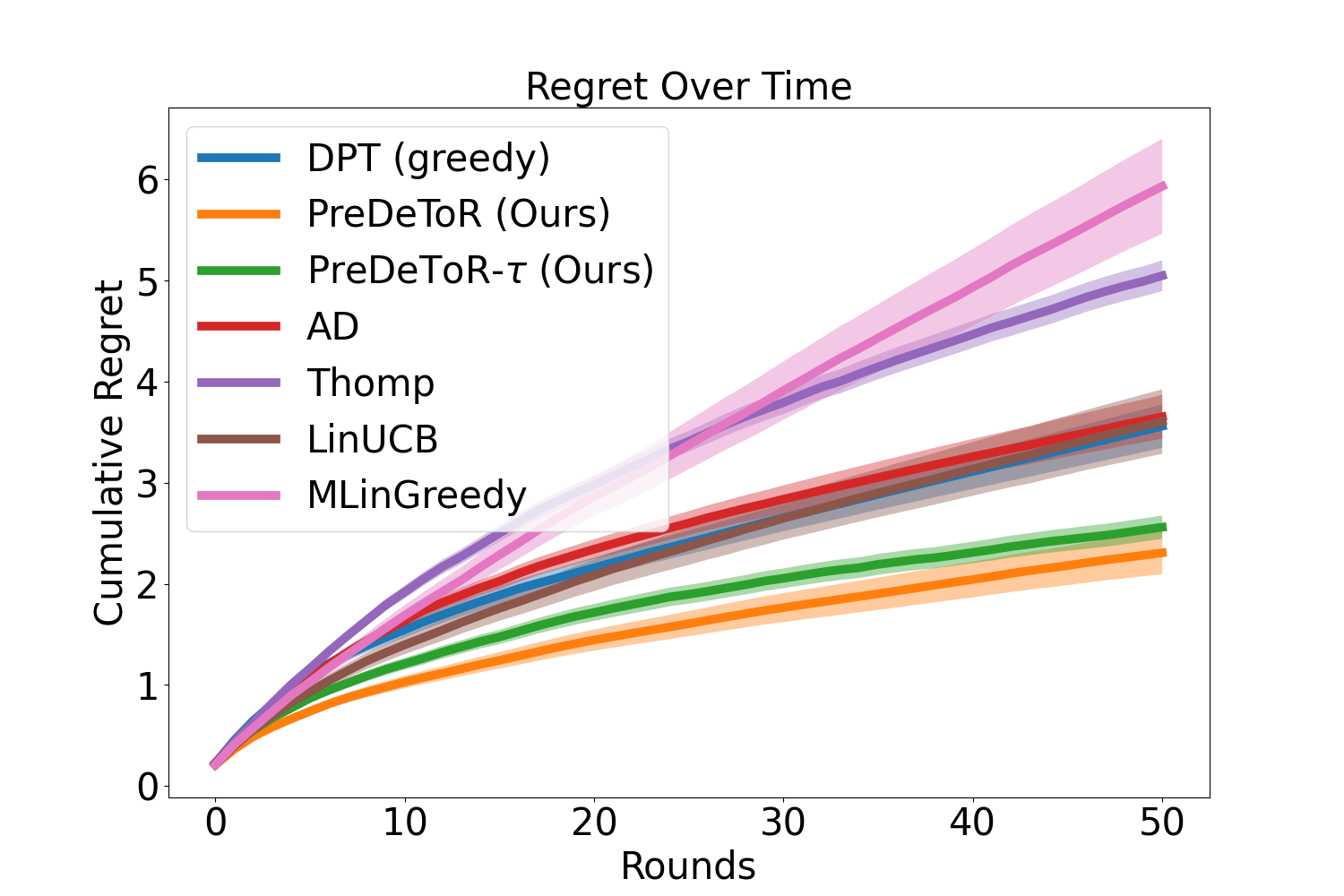}
    \caption{$0$ new action}
    \label{fig:new-nlm-1}
\end{subfigure}%
\begin{subfigure}[b]{0.24\textwidth}
    \includegraphics[scale=0.1]{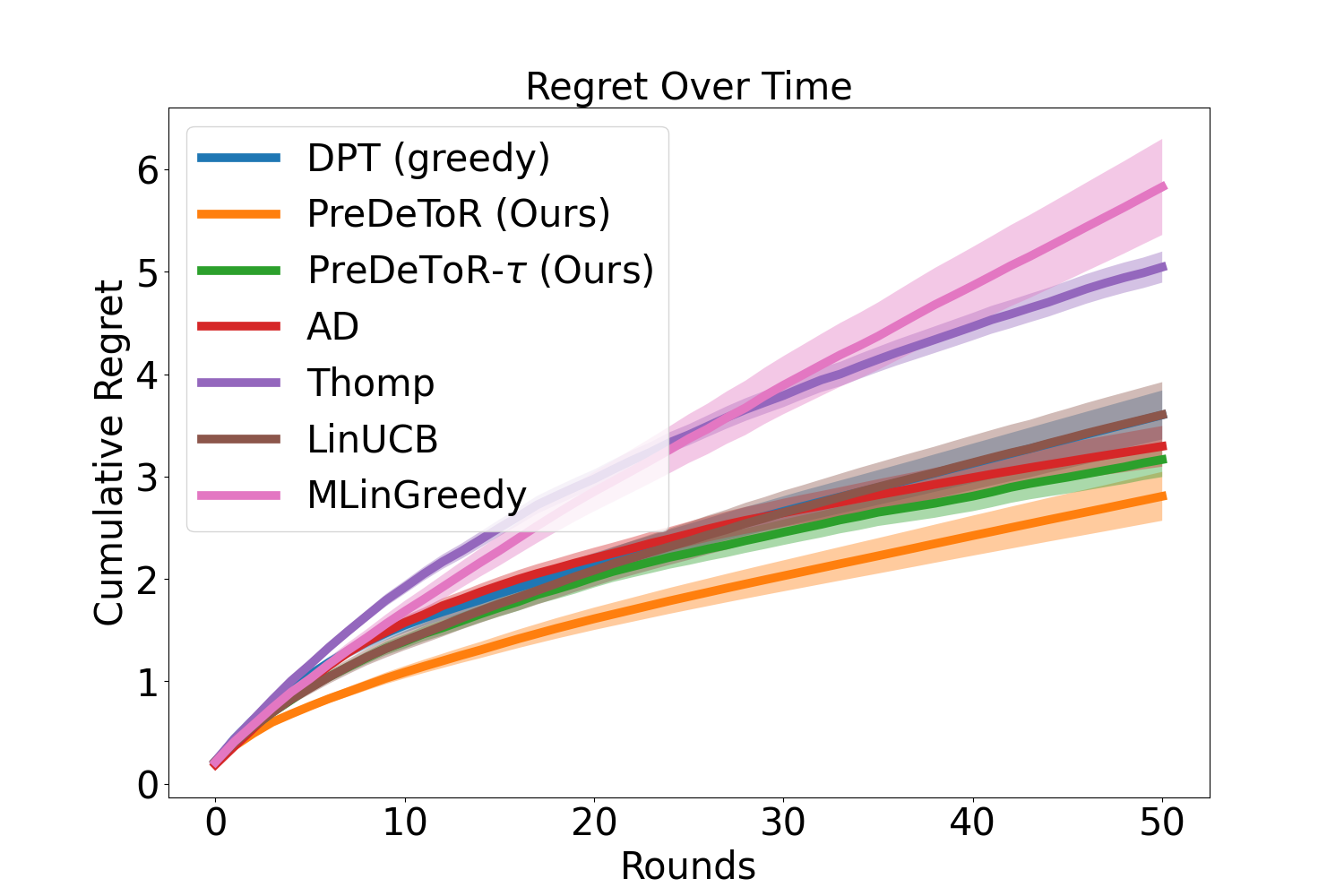}
    \caption{$1$ new action}
    \label{fig:new-nlm-2}
\end{subfigure}%
\begin{subfigure}[b]{0.24\textwidth}
    \includegraphics[scale=0.1]{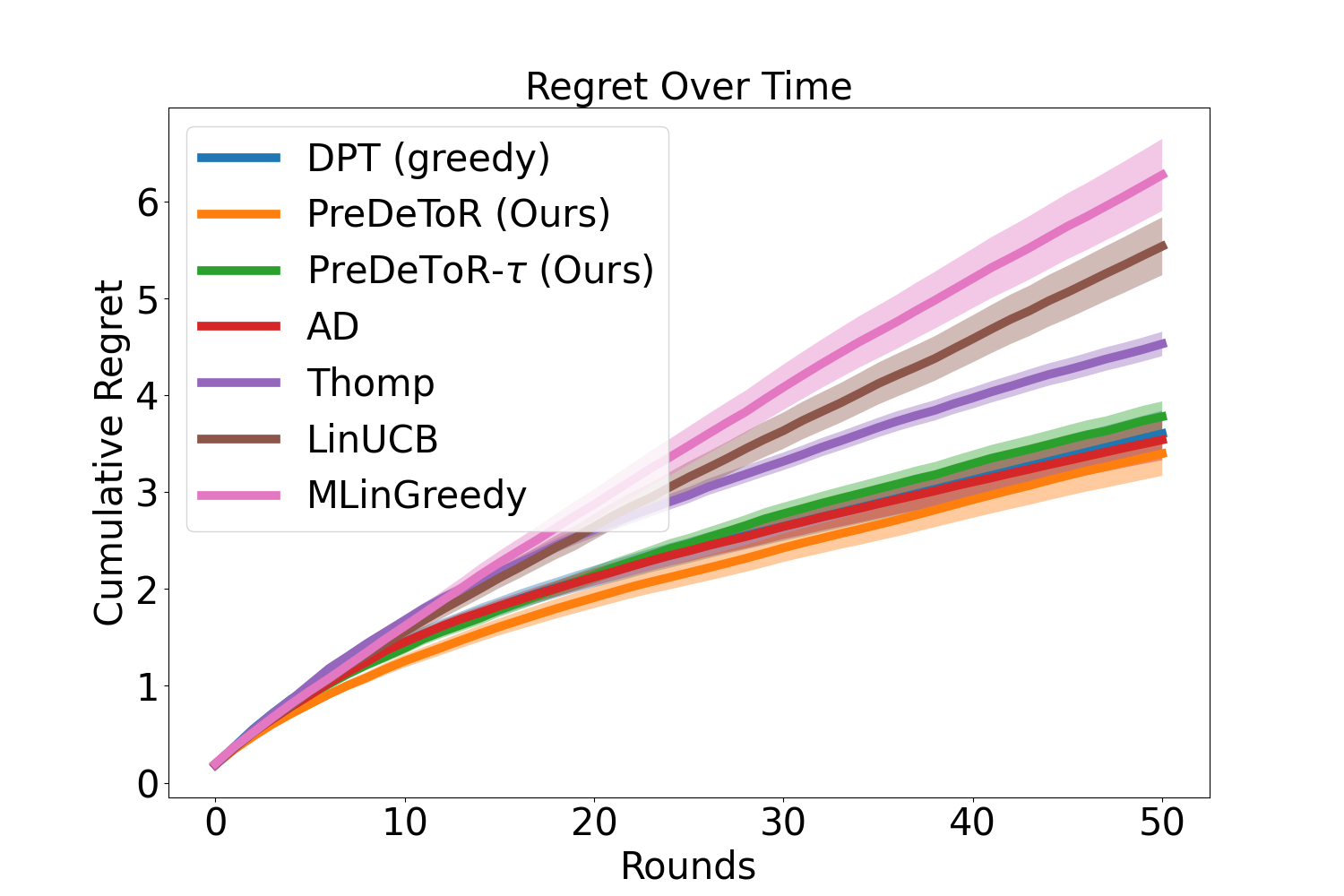}
    \caption{$5$ new actions}
    \label{fig:new-nlm-3}
\end{subfigure}%
\begin{subfigure}[b]{0.24\textwidth}
    \includegraphics[scale=0.1]{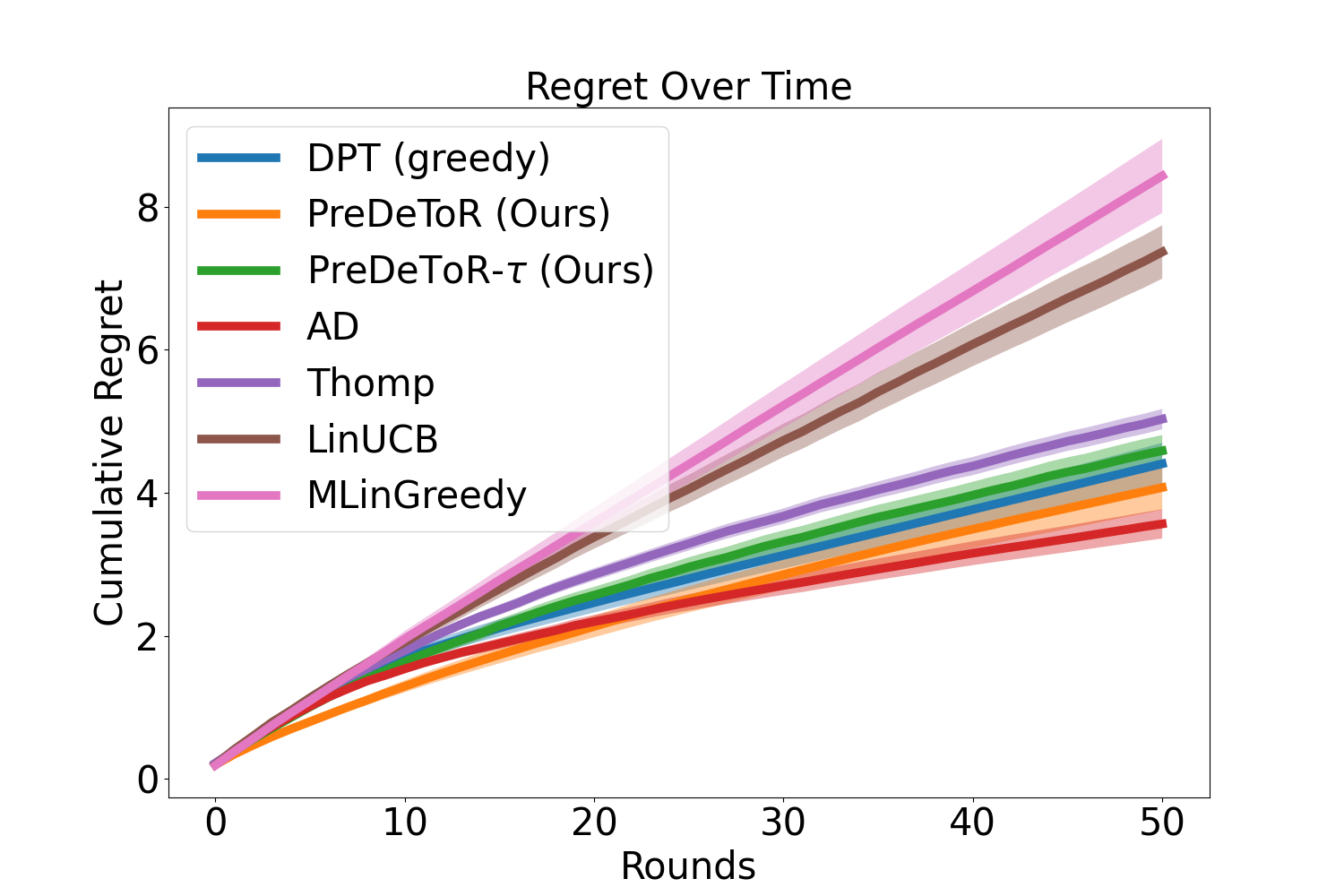}
    \caption{$10$ new actions}
    \label{fig:new-nlm-4}
\end{subfigure}
\vspace*{-1.2em}
\caption{Non-linear new action experiments with non-linear setting.}
\label{fig:expt-new-actions-nlm}
\vspace{-0.8em}
\end{figure}

\textbf{Baselines:} We implement the same baselines discussed in \Cref{sec:short-horizon}. 

\textbf{Outcomes:} Again before presenting the result we discuss the main outcomes from our experimental results of introducing new actions during data collection and evaluation:

\begin{tcolorbox}
\customfinding Shared structure across the tasks is important to learn the reward structure. 
\end{tcolorbox}




\textbf{Experimental Result:} We observe these outcomes in \Cref{fig:expt-new-actions-lin} and \Cref{fig:expt-new-actions-nlm}. We consider the linear and non-linear bandit setting of horizon $n=50$, $\Mpr = 100000$, $\Mts = 200$, $A=10$, and $d=2$. 
Here during data collection and during collecting the test data, we randomly select between $0, 1, 5$, and $10$ new actions from $\R^d$ for each task $m$. 
So the number of invariant actions is $|\Anc| \in \{10, 5, 1, 0\}$. 
Again, the demonstrator $\pi^w$ is the \ts\ algorithm. From \Cref{fig:new-lin-1}, \ref{fig:new-lin-2}, \ref{fig:new-lin-3}, and \ref{fig:new-lin-4}, we observe that when the number of invariant actions is less than \pred\ (\gt) has lower cumulative regret than \dptg, and \ad. 
%
%
Observe that \pred\ (\gt) matches \linucb\ and has lower regret than \dptg, and \ad\ when $\Anc| \in \{10, 5, 1\}$. This shows that \pred\ (\gt) is able to exploit the latent linear structure of the underlying tasks. However, as the number of invariant actions decreases we see that \pred (\gt) performance drops and becomes similar to the unstructured bandits \ts. We also show in \Cref{sec:k_arms_dpt} that in $K$-armed bandit setting when there is no structure across arms \pred\ (\gt) matches the performance of the demonstrator. 

Similarly in \Cref{fig:new-nlm-1}, \ref{fig:new-nlm-2}, \ref{fig:new-nlm-3}, and \ref{fig:new-nlm-4} we show the performance of \pred\ in the non-linear bandit setting. Observe that \linucb, \mlin\ fails to perform well in this non-linear setting due to their assumption of linear rewards. Again note that \pred\ (\gt) has lower regret than \dptg, and \ad\ when $\Anc| \in \{10, 1\}$. This shows that \pred\ (\gt) is able to exploit the latent linear structure of the underlying tasks. However, as the number of invariant actions decreases we see that \pred (\gt) performance drops and becomes similar to \ad.

\vspace*{-0.6em}
\section{Data Collection Analysis}
\label{sec:data-collection}
\vspace*{-0.6em}
In this section, we analyze the performance of \pred, \predt, \dptg, \ad, \ts, and \linucb\ when the weak demonstrator $\pi^w$ is \ts, \linucb, or \unif. We again consider the linear bandit setting discussed in \Cref{sec:short-horizon}. 
We show the cumulative regret by the above baselines in \Cref{fig:TS-collection}, \ref{fig:linucb-collection}, and \ref{fig:linucb-collection} when data is collected through \ts, \linucb, and \unif\ respectively. We first state the main finding below:
%
%
\begin{tcolorbox}
    \customfinding The \pred\ (\gt) excels in exploiting the underlying latent structure and reward correlation from in-context data when the data diversity is high.
\end{tcolorbox}
%
%
%
%
%
\begin{figure}[!hbt]
\centering
\begin{subfigure}[b]{0.32\textwidth}
   \includegraphics[scale=0.1]{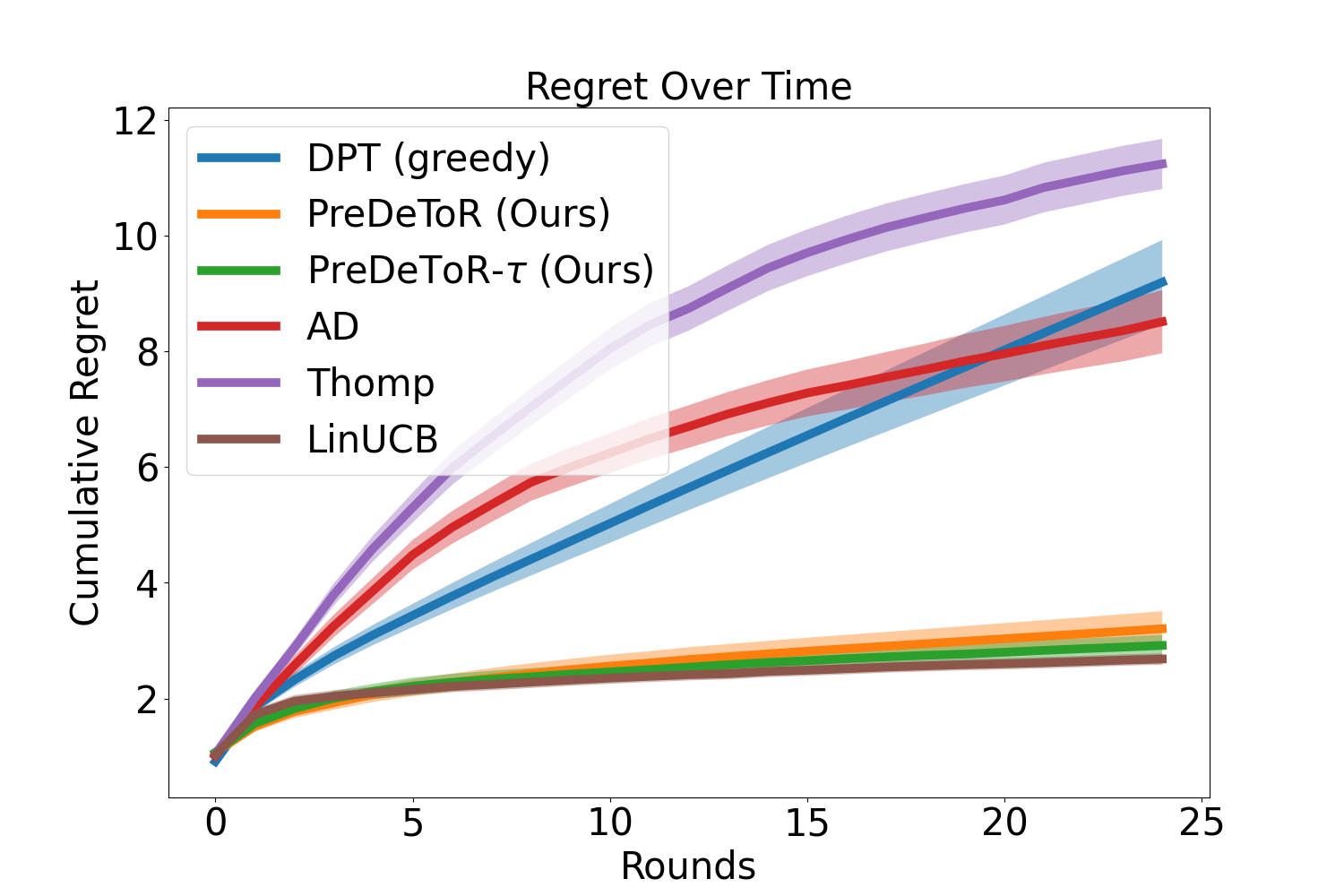}
   \caption{\ts\ data collection}
   \label{fig:TS-collection}
\end{subfigure}%
\begin{subfigure}[b]{0.32\textwidth}
   \includegraphics[scale=0.1]{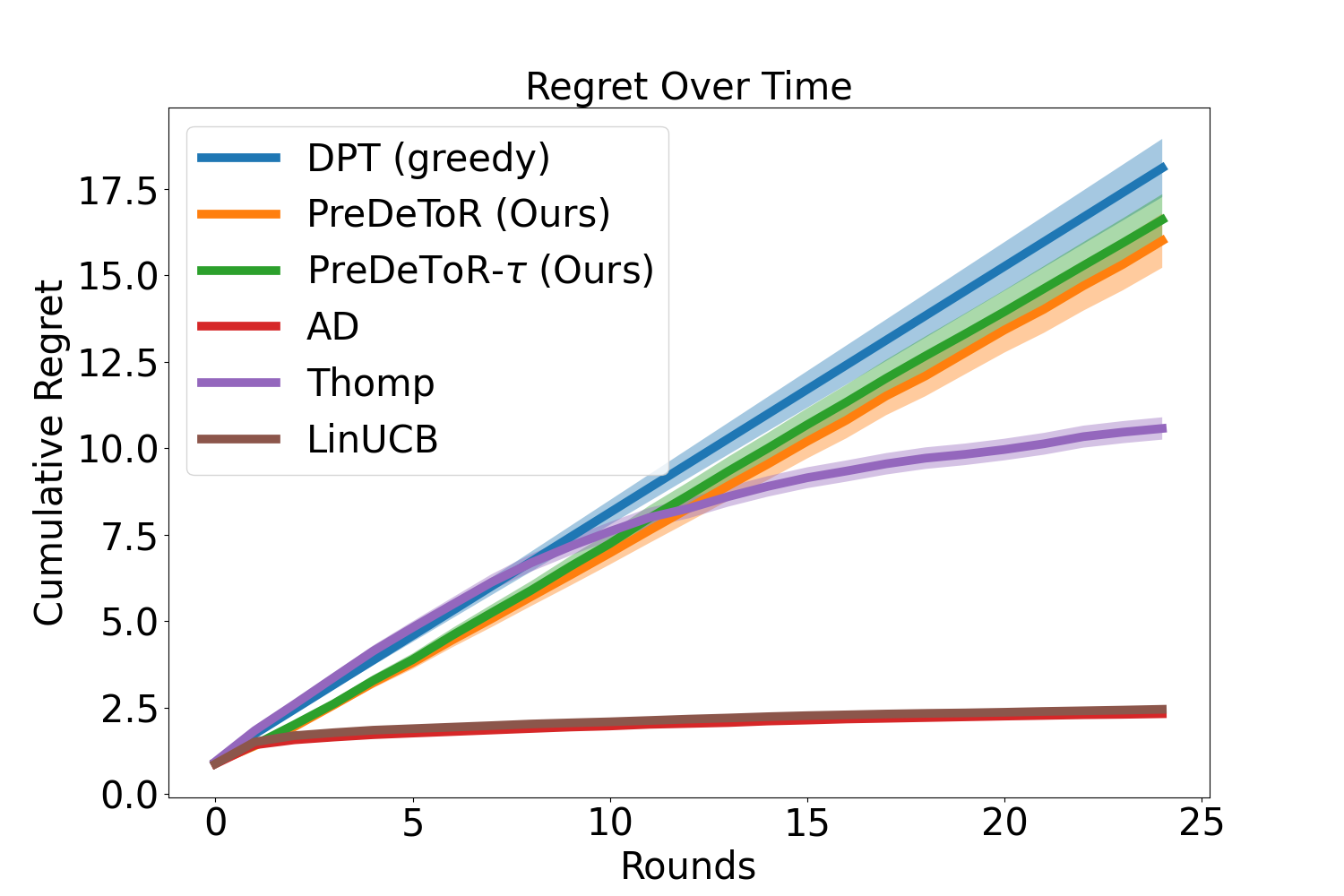}
   \caption{\linucb\ data collection}
   \label{fig:linucb-collection}
\end{subfigure}%
\begin{subfigure}[b]{0.32\textwidth}
   \includegraphics[scale=0.1]{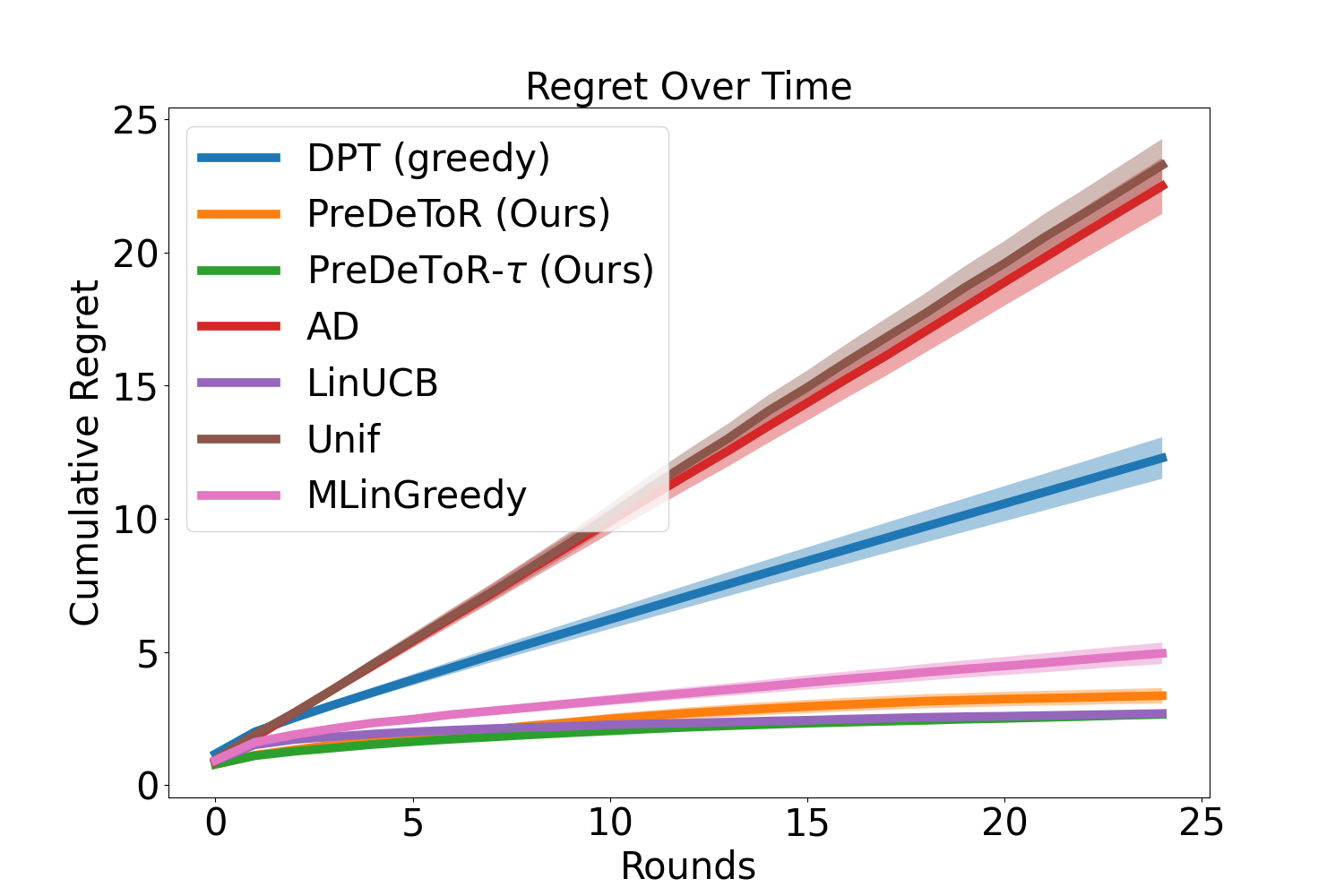}
   \caption{\unif\ data collection}
   \label{fig:unif-collection}
\end{subfigure}
\caption{Data Collection with various algorithms and Performance analysis}
\label{fig:data-collection}
\end{figure}

\textbf{Experimental Result:} We observe these outcomes in \Cref{fig:data-collection}. In \Cref{fig:TS-collection} we see that the $A$-actioned \ts\ is explorative enough as it does not explore with the knowledge of feature representation. 
So it pulls the sub-optimal actions sufficiently high number of times before discarding them in favor of the optimal action.
Therefore the training data is diverse enough so that \pred\ (\gt) can predict the reward vectors for actions sufficiently well. Consequently, \pred\ (\gt) almost matches the \linucb\ algorithm. Both \dptg\ and \ad perform poorly in this setting.

In \Cref{fig:linucb-collection} we see that the \linucb\ algorithm is not explorative enough as it explores with the knowledge of feature representation and quickly discards the sub-optimal actions in favor of the optimal action. Therefore the training data is not diverse enough so that \pred\ (\gt) is not able to correctly predict the reward vectors for actions. Note that \dptg\ also performs poorly in this setting when it is not provided with the optimal action information during training. The \ad\ matches the performance of its demonstrator \linucb\ because of its training procedure of predicting the next action of the demonstrator. 

Finally, in \Cref{fig:unif-collection} we see that the $A$-armed \unif\ is fully explorative as it does not intend to minimize regret (as opposed to \ts) and does not explore with the knowledge of feature representation. Therefore the training data is very diverse which results in \pred\ (\gt) being able to predict the reward vectors for actions very well. Consequently, \pred\ (\gt) perfectly matches the \linucb\ algorithm. Note that \ad\ performs the worst as it matches the performance of its demonstrator whereas the performance of \dptg\ suffers due to the lack of information on the optimal action during training.

We also empirically study the test performance of \pred\ (\gt) in $K$-armed bandit setting when there is no structure across arms in \Cref{sec:k_arms_dpt}, against the original \dpt\ in \Cref{sec:k_arms_dpt}, 
in other \textit{non-linear} bandit settings such as bilinear bandits (\Cref{sec:bilinear}), latent bandits (\Cref{sec:latent}), draw a connection between \pred\ and Bayesian estimators (\Cref{sec:und}), and perform sensitivity and ablation studies in \Cref{sec:lim}, \ref{sec:dimension}, \ref{sec:heads}, \ref{sec:envs}. 
%
Due to space constraints, we refer the interested reader to the relevant section in the appendices.

\vspace*{-0.7em}
\section{Theoretical Analysis of Generalization}
\vspace*{-0.7em}
\label{sec:theory}
In this section, we present a theoretical analysis of how \predt\ generalizes to an unknown target task given a set of source tasks. We observe that \predt's performance hinges on a low excess error on the predicted reward of the actions of the unknown target task based on the in-context data. Thus, in our analysis, we show that, in low-data regimes, \predt\ has a low expected excess risk for the unknown target task as the number of source tasks increases. This is summarized as follows:

\begin{tcolorbox}
\customfinding \pred\ (\gt) has a low expected excess risk for the unknown target task as the number of source tasks increases.  Moreover, the transfer learning risk of \predt\ (once trained on the $M$ source tasks) scales with $\widetilde{O}(1/\sqrt{M})$. 
\end{tcolorbox}

To show this, we proceed as follows: Suppose we have the training data set $\H_{\mathrm{all}} = \{\H_m\}_{m=1}^{\Mpr},$ where the task $m\sim \cT$ with a distribution $\cT$ and the task data $\H_m$ is generated from a distribution $ \Dpr(\cdot | m).$ 
For illustration purposes, here we consider the training data distribution $\Dpr(\cdot | m) $ where the actions are sampled following soft-LinUCB (a stochastic variant of LinUCB) \citep{chu2011contextual}. 
%
Given the loss function in \cref{eq:loss-transformer}, we can define the task $m$ training loss of \predt\ as $\widehat{\L}_m(\rT_{\bTheta}) = \frac{1}{n}\sum_{t=1}^n\ell(r_{m,t},\rT_{\bTheta}(\wr_{m,t}(I_{m, t})|\H^t_m)) = \frac{1}{n}\sum_{t=1}^n(\rT_{\bTheta}(\wr_{m,t}(I_{m, t})|\H^t_m) - r_{m,t})^2$. We drop the notation $\bTheta, \mathbf{r}$ from $\rT_{\bTheta}$ for simplicity and let $M=\Mpr$. We define
\begin{align}
\widehat{\T}= & \underset{\T \in \alg}{\arg \min } \widehat{\L}_{\H_{\mathrm{all}}}(\T):=\frac{1}{M} \sum_{m=1}^M \widehat{\L}_m(\T), \quad \text {(ERM)} \label{eq:erm}
\end{align}
where $\alg$ denotes the space of algorithms induced by the $\T$.
Let $\L_m(\T)=\mathbb{E}_{\H_m}\big[\widehat{\L}_m(\T)\big]$ and $\L_{\mathrm{MTL}}(\T)=\E\big[\widehat{\L}_{\H_{\mathrm{all }}}(\T)\big]=$ $\frac{1}{M} \sum_{m=1}^M \L_m(\T)$ be the corresponding population risks. For the ERM in \eqref{eq:erm}, we want to bound the following excess Multi-Task Learning (MTL) risk of \predt\ 
\begin{align}
\cR^{}_{\mathrm{MTL}}(\widehat{\T})=\L_{\mathrm{MTL}}(\widehat{\T})-\min_{\T \in \alg} \L_{\mathrm{MTL}}(\T) .\label{eq:risk}
\end{align}
%
Note that for in-context learning, a training sample $\left(I_t, r_t\right)$ impacts all future decisions of the algorithm from time step $t+1$ to $n$. Therefore, we need to control the stability of the input perturbation of the learning algorithm learned by the transformer. We introduce the following stability condition.
\begin{assumption}
\label{assm:stability-assumption}
(Error stability \citep{bousquet2002stability, li2023transformers}). Let $\H=\left(I_t, r_t\right)_{t=1}^n$ be a sequence in $[A] \times [0,1]$ with $n \geq 1$ and $\H^{\prime}$ be the sequence where the $t'$th sample of $\H$ is replaced by $\left(I_t^{\prime}, r_t^{\prime}\right)$. Error stability holds for a distribution $(I, r)\sim\D$ if there exists a $K>0$ such that for any $\H,\left(I_t^{\prime}, r_t^{\prime}\right) \in([A] \times [0,1]), t \leq n$, and $\T \in \alg$, we have
\begin{align*}
\big\lvert \mathbb{E}_{(I, r)}\left[\ell(r_{},\T_{}(\wr_{}(I_{})|\H))-\ell\left(r_{},\T_{}(\wr_{}(I_{})|\H^\prime)\right)\right] \big\rvert\, \leq \tfrac{K}{n}.
\end{align*}
Let $\rho$ be a distance metric on $\alg$. Pairwise error stability holds if for all $\T, \T^{\prime} \in \alg$ we have
\begin{align*}
\begin{aligned}
& \big\lvert \mathbb{E}_{(x, y)}\left[\ell(r_{},\T_{}(\wr_{}(I_{})|\H))-\ell\left(r_{},\T'_{}(\wr_{}(I_{})|\H)\right)-\ell(r_{},\T_{}(\wr_{}(I_{})|\H'))+\ell\left(r_{},\T'_{}(\wr_{}(I_{})|\H')\right)\right] \big\rvert \leq \tfrac{K \rho\left(\T, \T^{\prime}\right)}{n}.
\end{aligned}
\end{align*}
\end{assumption}
Now we present the Multi-task learning (MTL) risk of \predt.  
\begin{theorem}
\label{thm:multi-task-risk}\textbf{(\pred\ risk)}
    Suppose error stability \Cref{assm:stability-assumption} holds and assume loss function $\ell(\cdot,\cdot)$ is $C$-Lipschitz for all $r_t \in [0,B]$ and horizon $n\geq 1$. Let $\widehat{\T}$ be the empirical solution of (ERM) and $\mathcal{N}(\mathcal{A}, \rho, \epsilon)$ be the covering number of the algorithm space $\alg$ following Definition \ref{def:covering-number} and \ref{def:alg-dist}. Then with probability at least $1-2 \delta$, the excess MTL risk of \predt\ is bounded by
    \begin{align*} 
        \cR^{}_{\mathrm{MTL}}(\widehat{\T}_{}) \leq 4 \tfrac{C}{\sqrt{nM}} +2(B+K \log n) \sqrt{\tfrac{\log (\N(\alg, \rho, \varepsilon) / \delta)}{c n M}},
    \end{align*}
    where $\N(\alg, \rho, \varepsilon)$ is the covering number of transformer $\widehat{\T}_{}$ and $\epsilon = 1/\sqrt{nM}$. 
\end{theorem}
The proof of Theorem \ref{thm:multi-task-risk} is provided in \Cref{app:generalization}. 
From \Cref{thm:multi-task-risk} we see that in low-data regime with a small horizon $n$, as the number of tasks $M$ increases the MTL risk decreases. We further discuss the stability factor $K$ and covering number $\N(\alg, \rho, \varepsilon)$ in \Cref{remark:stability}, and \ref{remark:covering-number}. We also present the transfer learning risk of \predt\ in \Cref{app:transfer}.

\vspace*{-1.0em}
\section{Conclusions, Limitations and Future Works}
\label{sec:conc}
\vspace*{-0.7em}
In this paper, we studied the supervised pretraining of decision transformers in the multi-task structured bandit setting when the knowledge of the optimal action is unavailable.
Our proposed methods \pred\ (\gt) do not need to know the action representations or the reward structure and learn these with the help of offline data. 
%
\pred\ (\gt) predict the reward for the next action of each action during pretraining and can generalize well in-context in several regimes spanning low-data, new actions, and structured bandit settings like linear, non-linear, bilinear, latent bandits. The \pred\ (\gt) outperforms other in-context algorithms like \ad, \dptg\ in most of the experiments. Finally, we theoretically analyze \predt\ and show that pretraining it in $M$ source tasks leads to a low expected excess error on a target task drawn from the same task distribution $\cT$.
%
%
%
%
In the future, we want to extend our \pred\ (\gt) to the MDP setting \citep{sutton2018reinforcement, agarwal2019reinforcement}, and constrained MDP setting \citep{efroni2020exploration, gu2022review}.

\textbf{Acknowledgments:} Q.\ Xie is supported in part by National Science Foundation (NSF) grants CNS-1955997 and EPCN-2339794 and EPCN-2432546. J.\ Hanna is supported in part by NSF (IIS-2410981) and Sandia National Labs through a University Partnership Award.

\bibliography{biblio}
\bibliographystyle{rlj}

\appendix
\onecolumn
\section{Appendix}
\label{sec:appendix}

\subsection{Related Works}
\label{sec:related}
In this section, we briefly discuss related works.

In-context decision making \citep{laskin2022context,lee2023supervised} has emerged as an attractive alternative in Reinforcement Learning (RL) compared to updating the model parameters after collection of new data \citep{mnih2013playing, franccois2018introduction}. 
In RL the contextual data takes the form
of state-action-reward tuples representing a dataset of interactions with an unknown environment (task). In this paper, we will refer to this as the in-context data.
%
%
%
Recall that in many real-world settings, the underlying task can be structured with correlated features, and the reward can be highly non-linear. So specialized bandit algorithms fail to learn in these tasks.
%
To circumvent this issue, a learner can first collect in-context data consisting of just action indices $I_t$ and rewards $r_t$. Then it can leverage the representation learning capability of deep neural networks to learn a pattern across the in-context data and subsequently derive a near-optimal policy \citep{lee2023supervised, mirchandani2023large}. We refer to this learning framework as an in-context decision-making setting.

The in-context decision-making setting of \citet{sinii2023context} also allows changing the action space by learning an embedding over the action space yet also requires the optimal action during training. In contrast we do not require the optimal action as well as show that we can generalize to new actions without learning an embedding over them. Similarly, \citet{lin2023transformers} study the in-context decision-making setting of \citet{laskin2022context, lee2023supervised}, but they also require a greedy approximation of the optimal action. The \citet{ma2023rethinking} also studies a similar setting for hierarchical RL where they stitch together sub-optimal trajectories and predict the next action during test time. Similarly, \citet{liu2023reason} studies the in-context decision-making setting to predict action instead of learning a reward correlation from a short horizon setting.
In contrast we do not require a greedy approximation of the optimal action, deal with short horizon setting and changing action sets during training and testing, and predict the estimated means of the actions instead of predicting the optimal action.
A survey of the in-context decision-making approaches can be found in \citet{liu2023self}.

In the in-context decision-making setting, the learning model is first trained on supervised input-output examples with the in-context data during training. Then during test time, the model is asked to complete a new input (related to the context provided) without any update to the model parameters \citep{xie2021explanation, min2022rethinking}. Motivated by this, \citet{lee2023supervised} recently proposed the Decision Pretrained Transformers (\dpt) that exhibit the following properties: \textbf{(1)} During supervised pretraining of \dpt, predicting optimal actions alone gives rise to near-optimal decision-making algorithms for unforeseen task during test time. Note that \dpt\ does not update model parameters during test time and, therefore, conducts in-context learning on the unforeseen task. \textbf{(2)} \dpt\ improves over the in-context data used to pretrain it by exploiting latent structure. However, \dpt\ either requires the optimal action during training or if it needs to approximate the optimal action. For approximating the optimal action, it requires a large amount of data from the underlying task.

At the same time, learning the underlying data pattern from a few examples during training is becoming more relevant in many domains like chatbot interaction \citep{madotto2021few,semnani2023wikichat}, recommendation systems, healthcare \citep{ge2022few, liu2023large}, etc. This is referred to as few-shot learning.
However, most current RL decision-making systems (including in-context learners like \dpt) require an enormous amount of data to learn a good policy.

The in-context learning framework is related to the meta-learning framework \citep{bengio1990learning, schaul2010metalearning}.  Broadly, these techniques aim to learn the underlying latent shared structure within the training distribution of tasks, facilitating faster learning of novel tasks during test time. In the context of decision-making and reinforcement learning (RL), there exists a frequent choice regarding the specific 'structure' to be learned, be it the task dynamics \citep{fu2016one, nagabandi2018learning, landolfi2019model}, a task context identifier \citep{rakelly2019efficient, zintgraf2019varibad, liu2021decoupling}, or temporally extended skills and options \citep{perkins1999using, gupta2018meta, jiang2022learning}.

However, as we noted in the \Cref{sec:intro}, one can do a greedy approximation of the optimal action from the historical data using a weak demonstrator and a neural network policy \citep{finn2017model, rothfuss2018promp}. Moreover, the in-context framework generally is more agnostic where it learns the policy of the demonstrator \citep{duan2016rl, wang2016learning, mishra2017simple}. Note that both \dptg\ and \pred\ are different than algorithmic distillation \citep{laskin2022context, lu2023structured} as they do not distill an existing RL algorithm. moreover, in contrast to \dptg\ which is trained to predict the optimal action, the \pred\ is trained to predict the reward for each of the actions. This enables the \pred\ (similar to \dptg) to show to potentially emergent online and offline strategies at test time that automatically align with the task structure, resembling posterior sampling.

As we discussed in the \Cref{sec:intro}, in decision-making, RL, and imitation learning the transformer models are trained using autoregressive action prediction \citep{yang2023bayesian}. Similar methods have also been used in Large language models \citep{vaswani2017attention, roberts2019exploring}. One of the more notable examples is the Decision Transformers (abbreviated as DT) which utilizes a transformer to autoregressively model sequences of actions from offline experience data, conditioned on the achieved return \citep{chen2021decision, janner2021offline}. This approach has also been shown to be effective for multi-task settings \citep{lee2022multi}, and multi-task imitation learning with transformers \citep{reed2022generalist, brohan2022rt, shafiullah2022behavior}. However, the DT methods are not known to improve upon their in-context data, which is the main thrust of this paper \citep{brandfonbrener2022does, yang2022dichotomy}.

Our work is also closely related to the offline RL setting. In offline RL, the algorithms can formulate a policy from existing data sets of state, action, reward, and next-state interactions.
Recently, the idea of pessimism has also been introduced in an offline setting to address the challenge of distribution shift \citep{kumar2020conservative, yu2021combo, liu2020provably, ghasemipour2022so}. Another approach to solve this issue is policy regularization \citep{fujimoto2019off, kumar2019stabilizing, wu2019behavior, siegel2020keep, liu2019off}, or reuse data for related task \citep{li2020focal, mitchell2021offline}, or additional collection of data along with offline data \citep{pong2022offline}. However, all of these approaches still have to take into account the issue of distributional shifts. In contrast \pred\ and \dptg\ leverages the decision transformers to avoid these issues.
Both of these methods can also be linked to posterior sampling. Such connections between sequence modeling with transformers and posterior sampling have also been made in \citet{chen2021decision,muller2021transformers,lee2023supervised, yang2023bayesian}.

\subsection{Experimental Setting Information and Details of Baselines}
\label{sec:addl-expt}
In this section, we describe in detail the experimental settings and some baselines.

\subsubsection{Experimental Details}

\textbf{Linear Bandit:} We consider the setting when $f(\bx, \btheta_*) = \bx^\top\btheta_*$. Here $\bx \in \R^d$ is the action feature and $\btheta_*\in\R^d$ is the hidden parameter. 
For every experiment, we first generate tasks from $\cTp$. Then we sample a fixed set of actions from  $\N\left(\mathbf{0}, \bI_d / d\right)$ in $\R^d$ and this constitutes the features. 
Then for each task $m\in [M]$ we sample $\btheta_{m ,*} \sim \N\left(\mathbf{0}, \bI_d / d\right)$ to produce the means $\mu(m,a)=\left\langle\btheta_{m ,*}, \bx(m,a)\right\rangle$ for $a \in \A$ and $m\in [M]$.
Finally, note that we do not shuffle the data as the order matters. Also in this setting $\bx(m, a)$ for each $a\in\A$ is fixed for all tasks $m$.

\textbf{Non-Linear Bandit:} We now consider the setting when $f(\bx, \btheta_*) = 1/(1 + 0.5\cdot\exp(2\cdot\exp(- \bx^\top \btheta_*)))$. Again, here $\bx \in \R^d$ is the action feature, and $\btheta_*\in\R^d$ is the hidden parameter. 
Note that this is different than the generalized linear bandit setting \citep{filippi2010parametric, li2017provably}.
Again for every experiment, we first generate tasks from $\cTp$. Then we sample a fixed set of actions from  $\N\left(\mathbf{0}, \bI_d / d\right)$ in $\R^d$ and this constitutes the features. 
Then for each task $m\in [M]$ we sample $\btheta_{m ,*} \sim \N\left(\mathbf{0}, \bI_d / d\right)$ to produce the means $\mu(m,a)=1/(1 + 0.5\cdot\exp(2\cdot\exp(- \bx(m,a)^\top \btheta_{m,*})))$ for $a \in \A$ and $m\in [M]$.
Again note that in this setting $\bx(m, a)$ for each $a\in\A$ is fixed for all tasks $m$.

We use NVIDIA GeForce RTX 3090 GPU with 24GB RAM to load the GPT 2 Large Language Model. This requires less than 2GB RAM without data, and with large context may require as much as 20GB RAM.

\subsubsection{Details of Baselines}
\label{app:baseline-details}

\textbf{(1) \ts:} This baseline is the stochastic $A$-action bandit Thompson Sampling algorithm from \citet{thompson1933likelihood,agrawal2012analysis,russo2018tutorial,zhu2020thompson}. We briefly describe the algorithm below: At every round $t$ and each action $a$, \ts\  samples $\gamma_{m,t}(a) \sim \N(\wmu_{m,t-1}(a), \sigma^2/N_{m,t-1}(a))$, where $N_{m,t-1}(a)$ is the number of times the action $a$ has been selected till $t-1$, and $\wmu_{m,t-1}(a) = \frac{\sum_{s=1}^{t-1} \wr_{m,s}\mathbf{1}(I_s =a)}{N_{m,t-1}(a)}$ is the empirical mean. Then the action selected at round $t$ is $I_{t} = \argmax_a \gamma_{m,t}(a)$. 
Observe that \ts\ is not a deterministic algorithm like \ucb\ \citep{auer2002finite-time}. So we choose \ts\ as the weak demonstrator $\pi^w$ because it is more exploratory than \ucb\ and also chooses the optimal action, $a_{m, *}$, a sufficiently large number of times. 
\ts\ is a weak demonstrator as it does not have access to the feature set $\X$ for any task $m$. 

\textbf{(2) \linucb:} (Linear Upper Confidence Bound): This baseline is the Upper Confidence Bound algorithm for the linear bandit setting that selects the action $I_t$ at round $t$ for task $m$ that is most optimistic and reduces the uncertainty of the task unknown parameter $\btheta_{m,*}$.
To balance exploitation and exploration between choosing different items the \linucb\ computes an upper confidence value to the estimated mean of each action $\bx_{m,a} \in \X$. 
This is done as follows: At every round $t$ for task $m$, it calculates the ucb value $B_{m,a,t}$ for each action $\bx_{m,a} \in \X$ such that $B_{m,a,t} = \bx_{m,a}^\top \wtheta_{m,t-1} + \alpha\|\bx_{m,a}\|_{\bSigma_{m,t-1}^{-1}}$ where $\alpha > 0$ is a constant and $\wtheta_{m,t}$ is the estimate of the model parameter $\btheta_{m, *}$ at round $t$. 
Here, $\bSigma_{m,t-1} = \sum_{s=1}^{t-1}\bx_{m,s}\bx_{m,s}^\top +\lambda\bI_d$ is the data covariance matrix or the arms already tried.
Then it chooses $I_t = \argmax_{a}B_{m,a,t}$. 
Note that \linucb\ is a \textit{strong} demonstrator that we give oracle access to the features of each action; other algorithms do not observe the features.
Hence, in linear bandits, \linucb\ provides an approximate upper bound on the performance of all algorithms.

\textbf{(3) \mlin:} This is the multi-task linear regression bandit algorithm proposed by \citet{yang2021impact}. This algorithm assumes that there is a common low dimensional feature extractor $\mathbf{B}\in\R^{k\times d}$, $k\leq d$ shared between the tasks and the rewards per task $m$ are linearly dependent on a hidden parameter $\btheta_{m,*}$. 
Under a diversity assumption (which may not be satisfied in real data) and $\bW=\left[\bw_1, \ldots, \bw_M\right]$ they assume $\bTheta =\left[\btheta_{1,*}, \ldots, \btheta_{M,*}\right]=\bB \bW$.
During evaluation \mlin\ estimates the $\mathbf{\wB}$ and $\widehat{\mathbf{W}}$ from training data and fit $\wtheta_{m}=\mathbf{\wB} \widehat{\bw}_m$ per task and selects action greedily based on $I_{m,t} = \argmax_a \bx_{m,a}^\top\wtheta_{m,*}$. Finally, note that \mlin\ requires access to the action features to estimate $\wtheta_{m}$ and select actions as opposed to \dpt, \ad, and \pred.




\subsection{Empirical Study: Comparison against K-armed bandits and \dpt}
\label{sec:k_arms_dpt}
In this section, we discuss the performance of \pred\ (\gt) when there is no latent structure in the data, that is the $K$-armed bandits. Then we compare the performance of \pred\ (\gt) against \dpt.

\textbf{Baselines:} In the K-armed bandits We implement the same baselines discussed in \Cref{sec:short-horizon}. The baselines are \pred, \predt, \dptg, \ad, \ts, and \linucb. In the linear and non-linear setting, we compare against \dpt\ instead of \dptg. 

\textbf{Settings:} In the K-armed bandit setting we consider $d=6$, and the arms as canonical vectors $\be_1, \be_2, \ldots, \be_6$. For each task $m$, we choose the hidden parameter $\btheta_{m,*}$ similar to the linear bandit setting discussed in \Cref{sec:linear}. Note that this results in a K-armed bandit setting. For the linear and non-linear setting comparison, we use the same setting as \Cref{sec:linear}, and \ref{sec:short-horizon}.

\textbf{Outcomes:} We first discuss the main outcomes from our experimental results in K-armed bandits and then in comparison against \dpt\ in linear and non-linear settings.

\begin{tcolorbox}
\customfinding \pred\ (\gt) matches the performance of the demonstrator when there is no structure (K-armed bandits). \pred\ (\gt) performs close to \dpt\ in the linear and non-linear setting showing the usefulness of learning the reward structure.
\end{tcolorbox}

\begin{figure}[!hbt]
\centering
\begin{subfigure}[b]{0.3\textwidth}
   \includegraphics[scale=0.13]{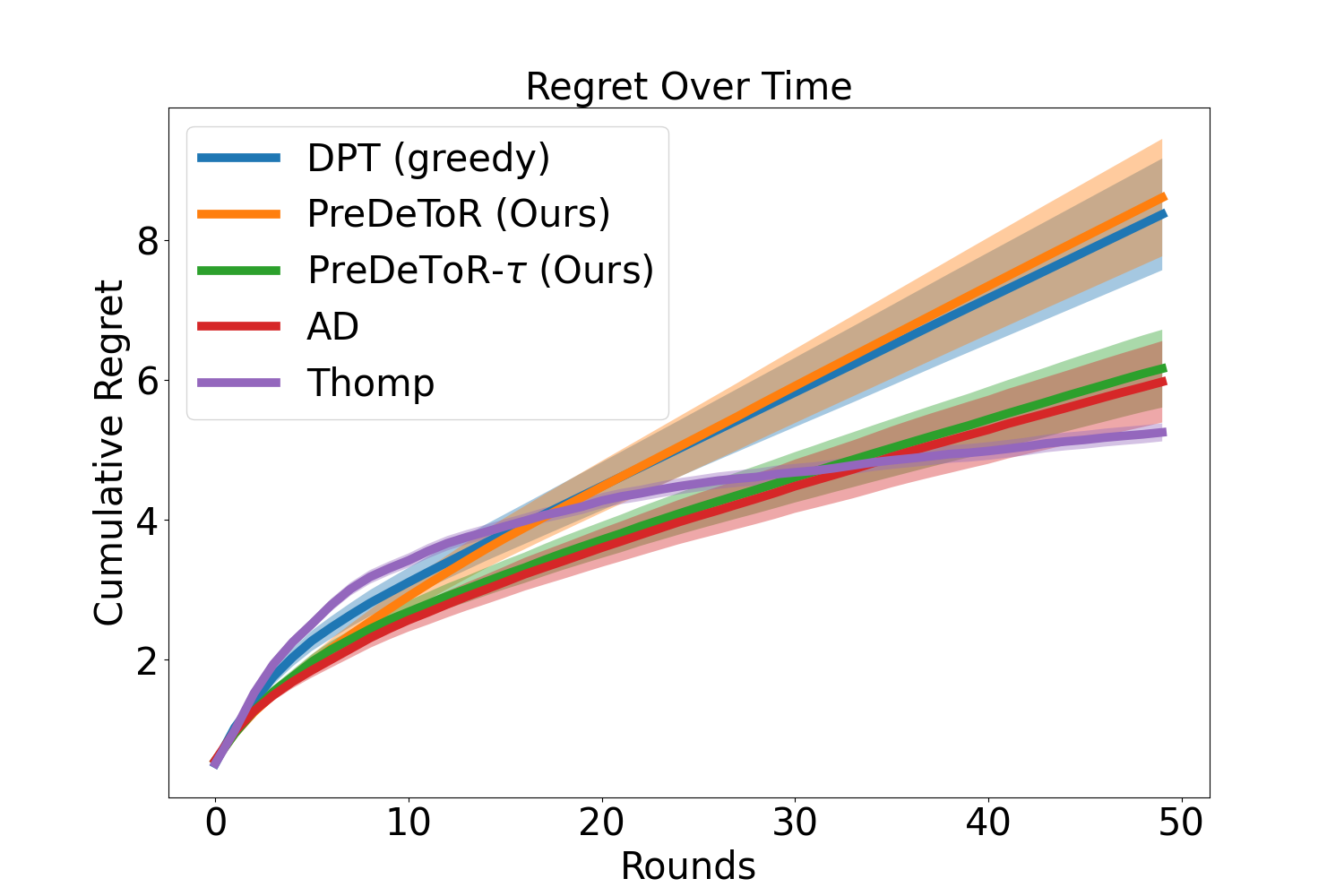}
   \caption{K-armed Bandit}
   \label{fig:dpt-1}
\end{subfigure}%
\begin{subfigure}[b]{0.3\textwidth}
   \includegraphics[scale=0.13]{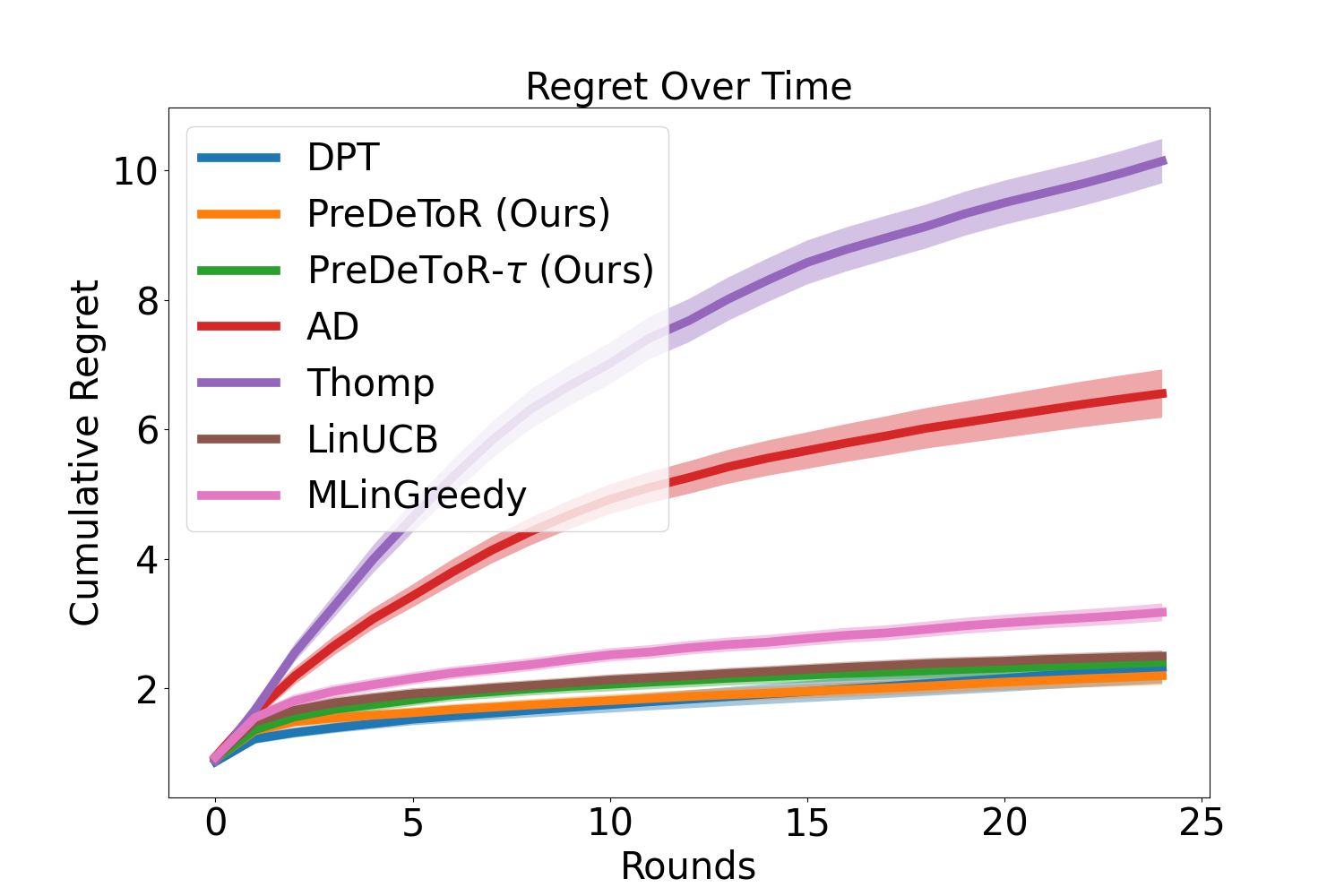}
   \caption{Comparison against \dpt\ in linear setting}
   \label{fig:dpt-2}
\end{subfigure}%
\begin{subfigure}[b]{0.3\textwidth}
   \includegraphics[scale=0.13]{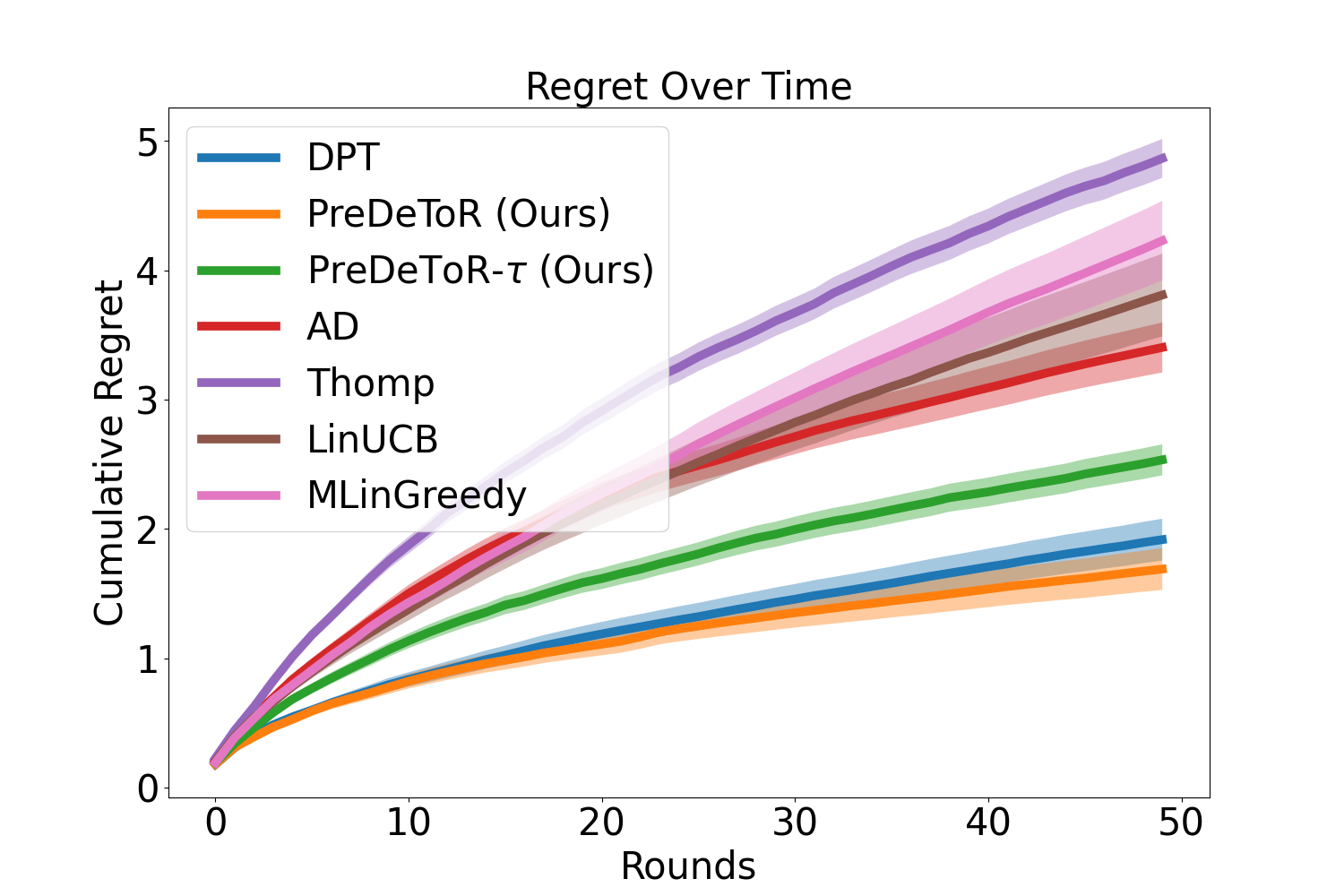}
   \caption{Comparison against \dpt\ in non-linear setting}
   \label{fig:dpt-3}
\end{subfigure}
\vspace*{-1em}
\caption{Experiment with k-armed bandits and \dpt\ (original). The y-axis shows the cumulative regret.}
\label{fig:expt-k-arms-dpt}
\end{figure}

\textbf{Experimental Result:} We observe these outcomes in \Cref{fig:expt-k-arms-dpt}. In \Cref{fig:dpt-1} the demonstrator $\pi^w$ is the \ts\ algorithm. \textit{Note that there is no structure across arms now, and sampling one arm gives no information about other arms in a task.} We observe that \predt\ performs similarly to the demonstrator \ts, and also shows that incorporating exploration is a sound technique. Also, \ad\ performs similarly to the demonstrator \ts. Both \dptg\ and \pred\ fail to learn the latent structure across the tasks and therefore do not learn any exploration strategy.

In \Cref{fig:dpt-2} we show the linear bandit setting discussed in \Cref{sec:addl-expt}. We observe that \pred\ (\gt) matches the performance of \dpt, and \linucb. Note that \dpt\ has access to the optimal action per task, and \linucb\ is the optimal oracle algorithm that leverages the structure information. 

In \Cref{fig:dpt-3} we show the non-linear bandit setting discussed in \Cref{sec:addl-expt}. 
Again we observe that \pred\ (\gt) matches the performance of \dpt\ and has lower cumulative regret than \ad\ and \linucb\ which fails to perform well in this non-linear setting due to its algorithmic design.

\subsection{Empirical Study: Bilinear Bandits}
\label{sec:bilinear}
In this section, we discuss the performance of \pred\ against the other baselines in the bilinear setting. 
Again note that the number of tasks $\Mpr \gg A \geq n$.
%
%
Through this experiment, we want to evaluate the performance of \pred\ to exploit the underlying latent structure and reward correlation when the horizon is small, the number of tasks is large, and understand its performance in the bilinear bandit setting \citep{jun2019bilinear, lu2021low, kang2022efficient, mukherjee2023multi}.
Note that this setting also goes beyond the linear feedback model \citep{abbasi2011improved, lattimore2020bandit} and is related to matrix bandits \citep{yang2020reinforcement}.

\textbf{Bilinear bandit setting:} In the bilinear bandits the learner is provided with two sets of action sets, $\X\subseteq\R^{d_1}$ and $\Z\subseteq\R^{d_2}$ which are referred to as the left and right action sets. At every round $t$ the learner chooses a pair of actions $\bx_t\in\X$ and $\bz_t\in\Z$ and observes a reward 
\begin{align*}
    r_t = \bx_t^\top \bTheta_* \bz_t + \eta_t
\end{align*}
where $\bTheta_*\in\R^{d_1\times d_2}$ is the unknown hidden matrix which is also low-rank. The $\eta_t$ is a $\sigma^2$ sub-Gaussian noise. In the multi-task bilinear bandit setting we now have a set of $M$ tasks where the reward for the $m$-th task at round $t$ is given by
\begin{align*}
    r_{m,t} = \bx_{m,t}^\top \bTheta_{m,*} \bz_{m,t} + \eta_{m,t}.
\end{align*}
Here  $\bTheta_{m,*}\in\R^{d_1\times d_2}$ is the unknown hidden matrix for each task $m$, which is also low-rank. The $\eta_{m,t}$ is a $\sigma^2$ sub-Gaussian noise. Let $\kappa$ be the rank of each of these matrices $\bTheta_{m, *}$.

A special case is the rank $1$ structure where $\bTheta_{m, *} = \btheta_{m,*}\btheta_{m,*}^\top$ where $\bTheta_{m,*} \in \R^{d \times d}$ and $\btheta_{m,*}\in\R^d$ for each task $m$. Let the left and right action sets be also same such that $\bx_{m,t} \in \X \subseteq \R^{d}$. Observe then that the reward for the $m$-th task at round $t$ is given by
\begin{align*}
    r_{m,t} = \bx_{m,t}^\top \bTheta_{m,*} \bx_{m,t} + \eta_{m,t} = (\bx_{m,t}^\top\btheta_{m, *})^2 + \eta_{m,t}.
\end{align*}
This special case is studied in \citet{chaudhuri2017active}.

\textbf{Baselines:} We again implement the same baselines discussed in \Cref{sec:short-horizon}. The baselines are \pred, \predt, \dptg, and \ts. 
Note that we do not implement the \linucb\ and \mlin\ for the bilinear bandit setting.
However, we now implement the \estr\ \citep{jun2019bilinear} which is optimal in the bilinear bandit setting.

\textbf{\estr:} The \estr\ algorithm first estimates the unknown parameter $\bTheta_{m,*}$ for each task $m$ using E-optimal design \citep{pukelsheim2006optimal, fedorov2013theory, jun2019bilinear} for $n_1$ rounds. Let $\wTheta_{m,n_1}$ be the estimate of $\bTheta_{m,*}$ at the end of $n_1$ rounds. Let the SVD of $\wTheta_{m,n_1}$ be given by $\operatorname{SVD}(\wTheta_{m,n_1}) = \wU_{m,n_1}\widehat{\mathbf{S}}_{m, n_1}\wV_{m, n_1}$.
Then \estr\ rotates the actions as follows:
\begin{align*}
    \X^{\prime}_m=\left\{\left[\wU_{m,n_1} \wU^{\perp}_{m,n_1}\right]^{\top} \bx_m: \bx_m \in \X\right\} \textbf{ and } \Z^{\prime}=\left\{\left[\wV_{m,n_1} \wV^{\perp}_{m,n_1}\right]^{\top} \bz_m: \bz_m \in \Z\right\}.
\end{align*}
Then defines a vectorized action set for each task $m$ so that the last $\left(d_1-\kappa\right) \cdot\left(d_2-\kappa\right)$ components are from the complementary subspaces:
\begin{align*}
\widetilde{\A}_m&=\left\{\left[\operatorname{vec}\left(\bx_{m, 1: \kappa} \bz_{m, 1: \kappa}^{\top}\right) ; \operatorname{vec}\left(\bx_{m, \kappa+1: d_1} \bz_{m, 1: \kappa}^{\top}\right) ; \operatorname{vec}\left(\bx_{m, 1: \kappa} \bz_{m, \kappa+1: d_2}^{\top}\right) ; \right.\right.\\
&\quad\quad\left.\left.\operatorname{vec}\left(\bx_{m, \kappa+1: d_1} \bz_{m, \kappa+1: d_2}^{\top}\right)\right] \in \mathbb{R}^{d_1 d_2}: \bx_m \in \X^{\prime}_m, \bz_m \in \Z^{\prime}_m\right\} .
\end{align*}
Finally for $n_2=n-n_1$ rounds, \estr\ invokes the specialized OFUL algorithm \citep{abbasi2011improved}  for the rotated action set $\widetilde{\A}_m$ with the low dimension $k=\left(d_1+d_2\right) \kappa-\kappa^2$.
%
Note that the \estr\ runs the per-task low dimensional OFUL algorithm rather than learning the underlying structure across the tasks \citep{mukherjee2023multi}. 

\textbf{Outcomes:} We first discuss the main outcomes of our experimental results for increasing the horizon:




\begin{tcolorbox}
\customfinding \pred\ (\gt) outperforms \dptg, \ad, and matches the performance of \estr\ in bilinear bandit setting. 
\end{tcolorbox}


\begin{figure}[!hbt]
\centering
\begin{subfigure}[b]{0.48\textwidth}
    \includegraphics[scale=0.15]{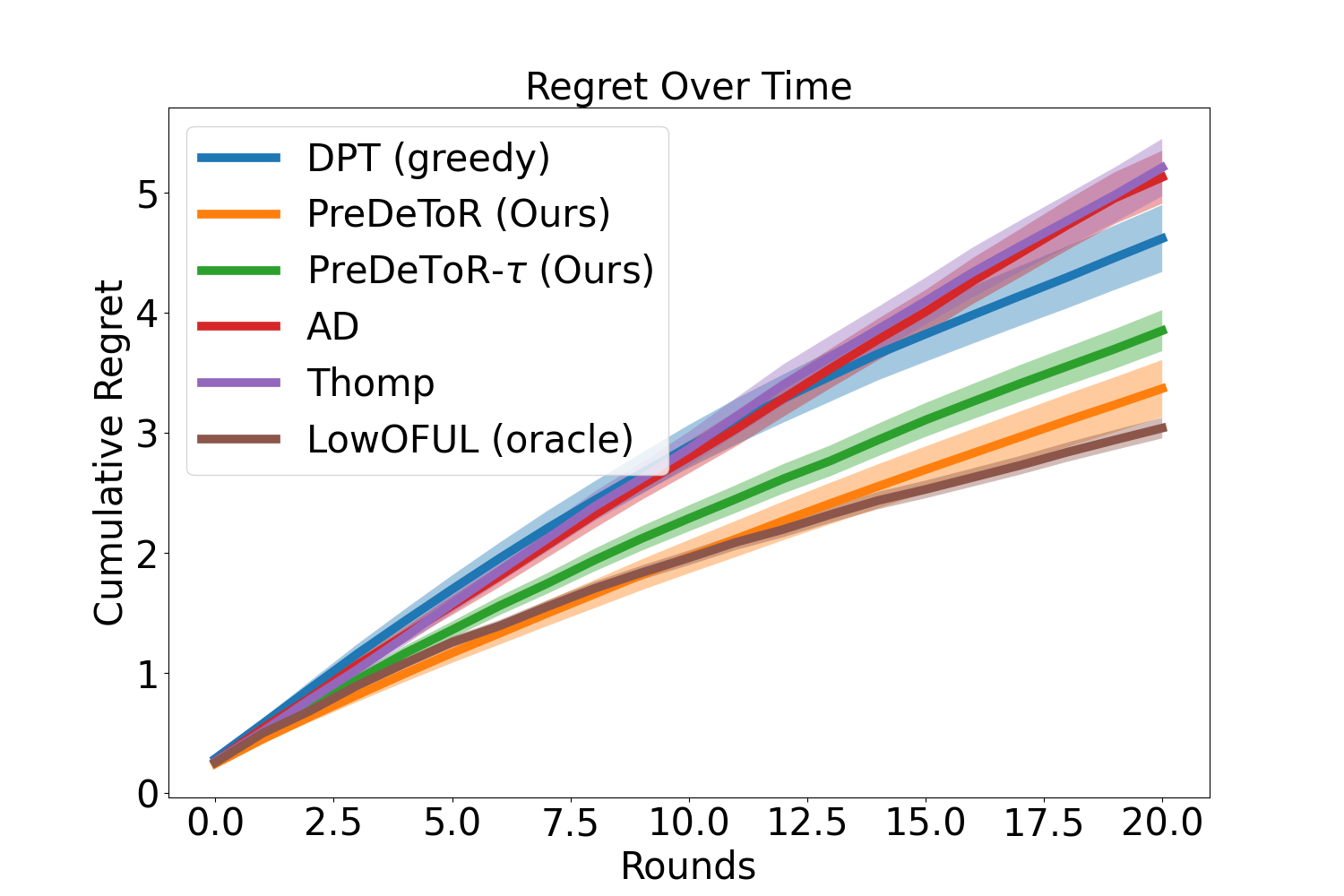}
    \caption{Rank $1$ $\bTheta_{m, *}$}
    \label{fig:bilin-1}
\end{subfigure}%
\begin{subfigure}[b]{0.48\textwidth}
    \includegraphics[scale=0.15]{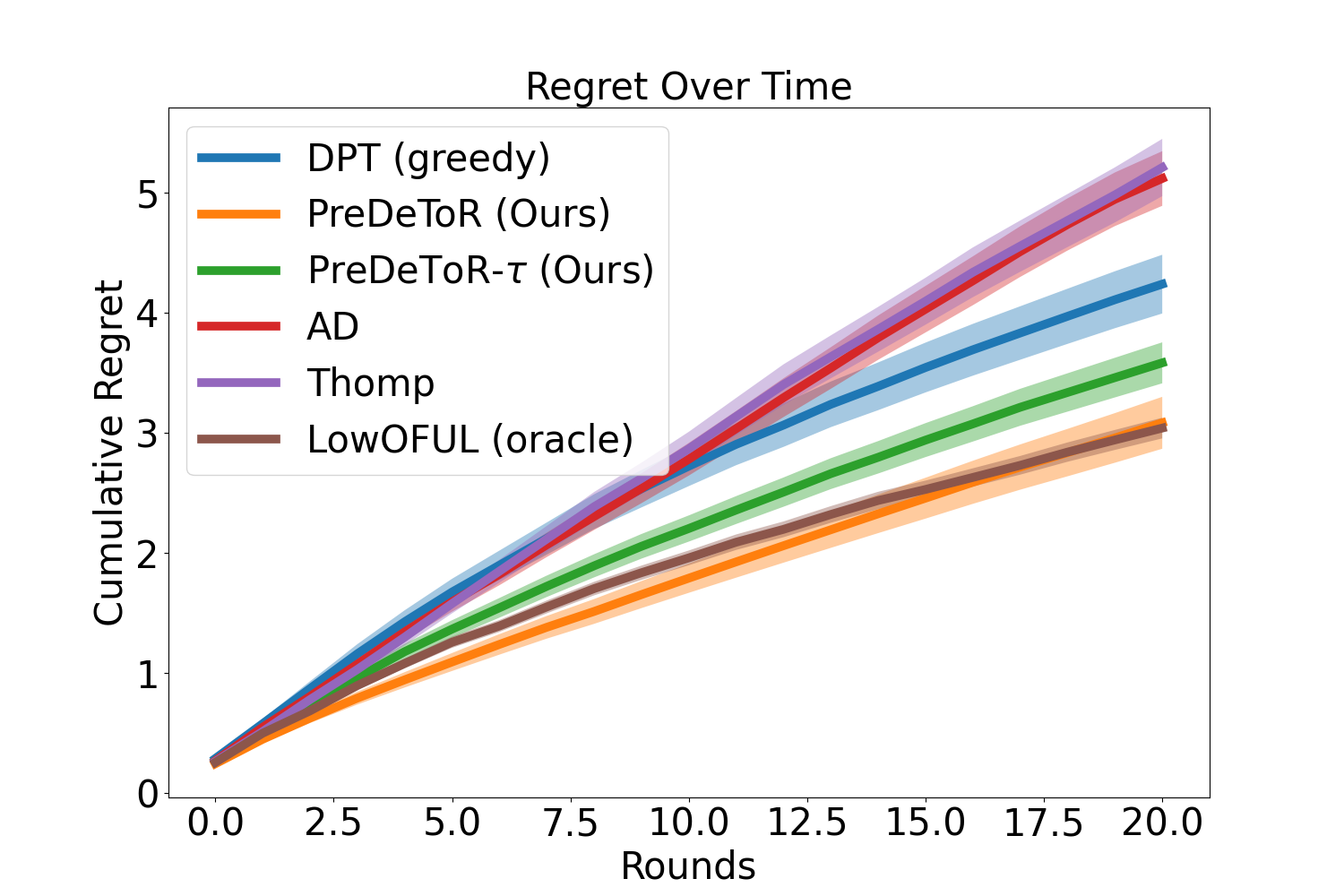}
    \caption{Rank $2$ $\bTheta_{m, *}$}
    \label{fig:bilin-2}
\end{subfigure}
\vspace*{-1em}
\caption{Experiment with bilinear bandits. The y-axis shows the cumulative regret.}
\label{fig:expt-bilin}
\end{figure}

\textbf{Experimental Result:} We observe these outcomes in \Cref{fig:expt-bilin}. In \Cref{fig:bilin-1} we experiment with rank $1$ hidden parameter $\bTheta_{m,*}$ and set horizon $n=20$, $\Mpr =  200000$, $\Mts = 200$, $A=30$, and $d=5$. 
In \Cref{fig:bilin-2} we experiment with rank $2$ hidden parameter $\bTheta_{m,*}$ and set horizon $n=20$, $\Mpr =  250000$, $\Mts = 200$, $A=25$, and $d=5$. 
Again, the demonstrator $\pi^w$ is the \ts\ algorithm. We observe that \pred\ has lower cumulative regret than \dptg, \ad\ and \ts. Note that for any task $m$ for the horizon $20$ the \ts\ will be able to sample all the actions at most once.
%
%
Note that for this small horizon setting the \dptg\ does not have a good estimation of $\widehat{a}_{m,*}$ which results in a poor prediction of optimal action $\widehat{a}_{m,t,*}$. In contrast \pred\ learns the correlation of rewards across tasks and can perform well.
Observe from \Cref{fig:bilin-1}, and \ref{fig:bilin-2} that \pred\ has lower regret than \ts\ and matches \estr. Also, in this low-data regime it is not enough for \estr\ to learn the underlying $\bTheta_{m, *}$ with high precision. Hence, \pred\ also has slightly lower regret than \estr. 
%
Note that the main objective of \ad\ is to match the performance of its demonstrator.
Most importantly it shows that \pred\ can exploit the underlying latent structure and reward correlation better than \dptg, and \ad.

\subsection{Empirical Study: Latent Bandits}
\label{sec:latent}
In this section, we discuss the performance of \pred\ (\gt) against the other baselines in the latent bandit setting and create a generalized bilinear bandit setting. 
Note that the number of tasks $\Mpr \gg A \geq n$.
%
%
Using this experiment, we want to evaluate the ability of \pred\ (\gt) to exploit the underlying reward correlation when the horizon is small, the number of tasks is large, and understand its performance in the latent bandit setting \citep{hong2020latent, maillard2014latent, pal2023optimal, kveton2017stochastic}.
We create a latent bandit setting which generalizes the bilinear bandit setting \citep{jun2019bilinear, lu2021low, kang2022efficient, mukherjee2023multi}.
Again note that this setting also goes beyond the linear feedback model \citep{abbasi2011improved, lattimore2020bandit} and is related to matrix bandits \citep{yang2020reinforcement}.

\textbf{Latent bandit setting:} In this special multi-task latent bandits the learner is again provided with two sets of action sets, $\X\subseteq\R^{d_1}$ and $\Z\subseteq\R^{d_2}$ which are referred to as the left and right action sets. 
%
The reward for the $m$-th task at round $t$ is given by
\begin{align*}
    r_{m,t} = \bx_{m,t}^\top \underbrace{(\bTheta_{m,*} + \bU\bV^\top)}_{\bZ_{m_*}}\bz_{m,t} + \eta_{m,t}.
\end{align*}
Here  $\bTheta_{m,*}\in\R^{d_1\times d_2}$ is the unknown hidden matrix for each task $m$, which is also low-rank. 
Additionally, all the tasks share a \emph{common latent parameter matrix} $\bU\bV^\top \in \R^{d_1\times d_2}$ which is also low rank. Hence the learner needs to learn the latent parameter across the tasks hence the name latent bandits.
Finally, the $\eta_{m,t}$ is a $\sigma^2$ sub-Gaussian noise. Let $\kappa$ be the rank of each of these matrices $\bTheta_{m, *}$ and $\bU\bV^\top$.
Again special case is the rank $1$ structure where the reward for the $m$-th task at round $t$ is given by
\begin{align*}
    r_{m,t} = \bx_{m,t}^\top \underbrace{(\btheta_{m,*}\btheta_{m,*}^\top + \bu\bv^\top)}_{\bZ_{m, *}}\bx_{m,t} + \eta_{m,t}.
\end{align*}
where $\btheta_{m,*}\in\R^d$ for each task $m$ and $\bu, \bv\in\R^d$. Note that the left and right action sets are the same such that $\bx_{m,t} \in \X \subseteq \R^{d}$.

\textbf{Baselines:} We again implement the same baselines discussed in \Cref{sec:short-horizon}. The baselines are \pred, \predt, \dptg, \ad, \ts, and \estr. However, we now implement a special \estr\ (stated in \Cref{sec:bilinear}) which has knowledge of the shared latent parameters $\bU$, and $\bV$. We call this the \estro\ algorithm. Therefore \estro\ has knowledge of the problem parameters in the latent bandit setting and hence the name. 
Again note that we do not implement the \linucb\ and \mlin\ for the latent bandit setting.

\textbf{Outcomes:} We first discuss the main outcomes of our experimental results for increasing the horizon:

\begin{tcolorbox}
\customfinding \pred\ (\gt) outperforms \dptg, \ad, and matches the performance of \estro\ in latent bandit setting. 
\end{tcolorbox}





\begin{figure}[!hbt]
\centering
\begin{subfigure}[b]{0.32\textwidth}
   \includegraphics[scale=0.13]{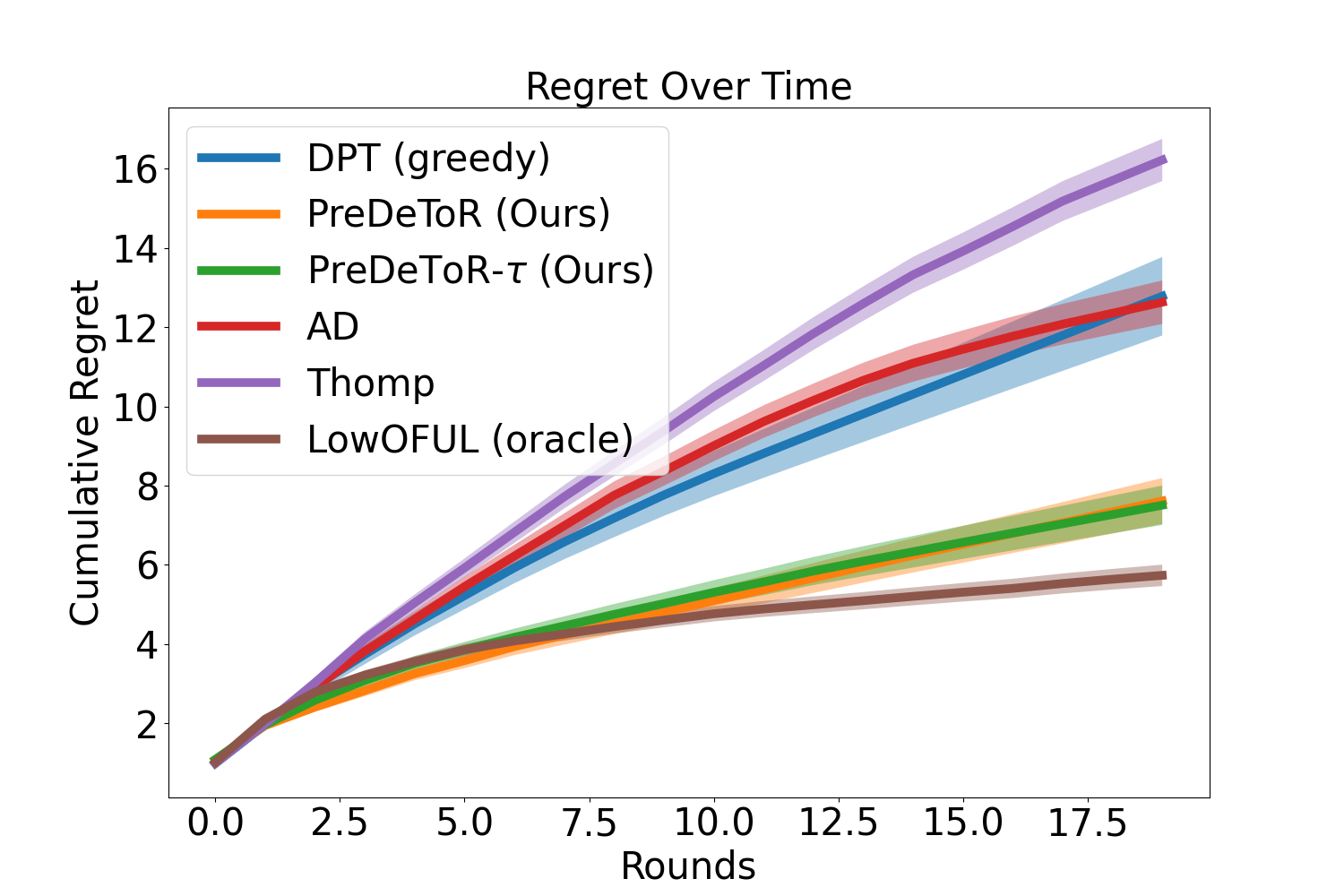}
   \caption{Rank $1$ $\bZ_{m,*}$}
   \label{fig:latent-1}
\end{subfigure}%
\begin{subfigure}[b]{0.32\textwidth}
   \includegraphics[scale=0.13]{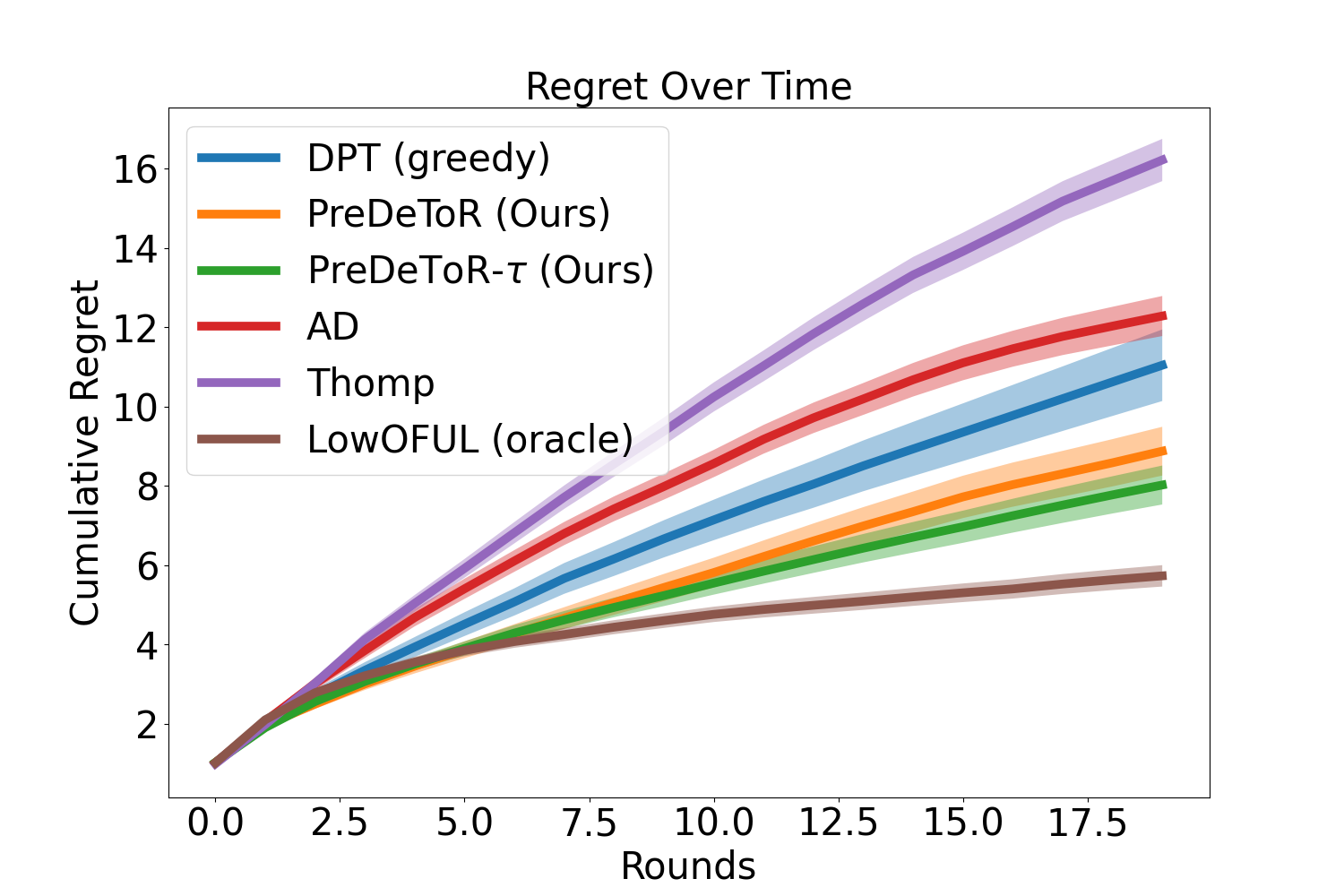}
   \caption{Rank $2$ $\bZ_{m,*}$}
   \label{fig:latent-2}
\end{subfigure}%
\begin{subfigure}[b]{0.32\textwidth}
   \includegraphics[scale=0.13]{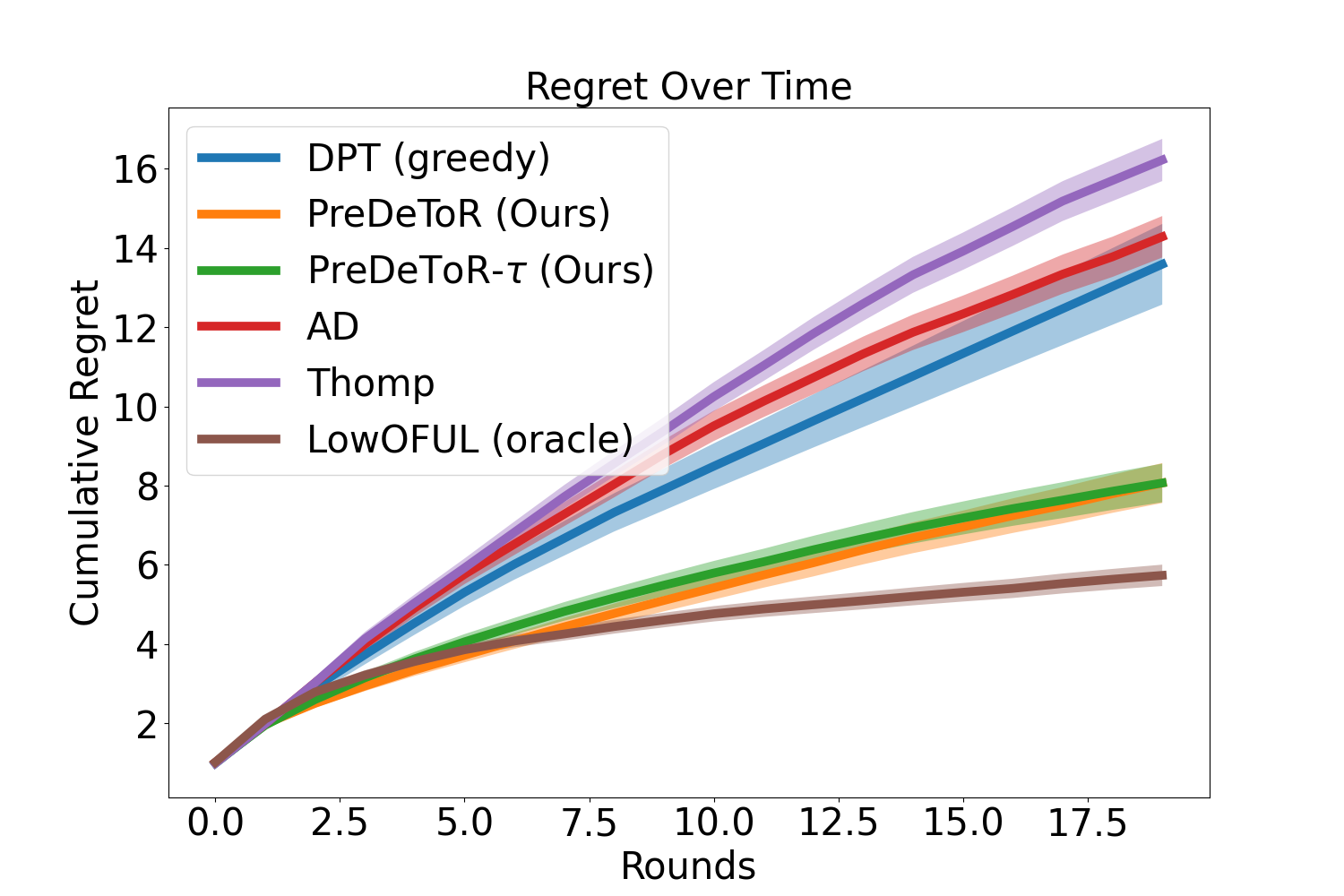}
   \caption{Rank $3$ $\bZ_{m,*}$}
   \label{fig:latent-3}
\end{subfigure}
\vspace*{-1em}
\caption{Experiment with latent bandits. The y-axis shows the cumulative regret.}
\label{fig:expt-latent}
\end{figure}

\textbf{Experimental Result:} We observe these outcomes in \Cref{fig:expt-latent}. In \Cref{fig:latent-1} we experiment with rank $1$ hidden parameter $\btheta_{m,*}\btheta_{m,*}^\top$ and latent parameters $\bu\bv^\top$ shared across the tasks and set horizon $n=20$, $\Mpr =  200000$, $\Mts = 200$, $A=30$, and $d=5$. 
In \Cref{fig:latent-2} we experiment with rank $2$ hidden parameter $\bTheta_{m,*}$, and latent parameters $\bU\bV^\top$ and set horizon $n=20$, $\Mpr =  250000$, $\Mts = 200$, $A=25$, and $d=5$. 
In \Cref{fig:latent-3} we experiment with rank $3$ hidden parameter $\bTheta_{m,*}$, and latent parameters $\bU\bV^\top$ and set horizon $n=20$, $\Mpr =  300000$, $\Mts = 200$, $A=25$, and $d=5$. 
Again, the demonstrator $\pi^w$ is the \ts\ algorithm. We observe that \pred\ (\gt) has lower cumulative regret than \dptg, \ad\ and \ts. Note that for any task $m$ for the horizon $20$ the \ts\ will be able to sample all the actions at most once.
%
%
Note that for this small horizon setting the \dptg\ does not have a good estimation of $\widehat{a}_{m,*}$ which results in a poor prediction of optimal action $\widehat{a}_{m,t,*}$. In contrast \pred\ (\gt) learns the correlation of rewards across tasks and is able to perform well.
Observe from \Cref{fig:latent-1}, \ref{fig:latent-2}, and \ref{fig:latent-3} that \pred\ has lower regret than \ts\ and has regret closer to \estro which has access to the problem-dependent parameters.
Hence. \estro\ outperforms \pred\ (\gt) in this setting.
%
%
This shows that \pred\ is able to exploit the underlying latent structure and reward correlation better than \dptg, and \ad.

\subsection{Connection between \pred\ and Linear Multivariate Gaussian Model}
\label{sec:und}
In this section, we try to understand the behavior of \pred\ and its ability to exploit the reward correlation across tasks under a \emph{linear multivariate Gaussian model}. In this model, the hidden task parameter, $\btheta_*$, is a random variable drawn from a multi-variate Gaussian distribution \citep{bishop2006pattern} and the feedback follows a linear model.
We study this setting since we can estimate the Linear Minimum Mean Square Estimator (LMMSE) in this setting \citep{carlin2008bayesian, box2011bayesian}. 
This yields a posterior prediction for the mean of each action over all tasks on average, by leveraging the linear structure when $\btheta_*$ is drawn from a multi-variate Gaussian distribution. 
So we can compare the performance of \pred\ against such an LMMSE and evaluate whether it is exploiting the underlying linear structure and the reward correlation across tasks. We summarize this as follows:

\begin{tcolorbox}
\customfinding \pred\ learns the reward correlation covariance matrix from the in-context training data $\Htr$ and acts greedily on it.
\end{tcolorbox}

\begin{wrapfigure}{L}{0.4\textwidth}
\centering
   \hspace*{-1.2em}\includegraphics[scale = 0.15]{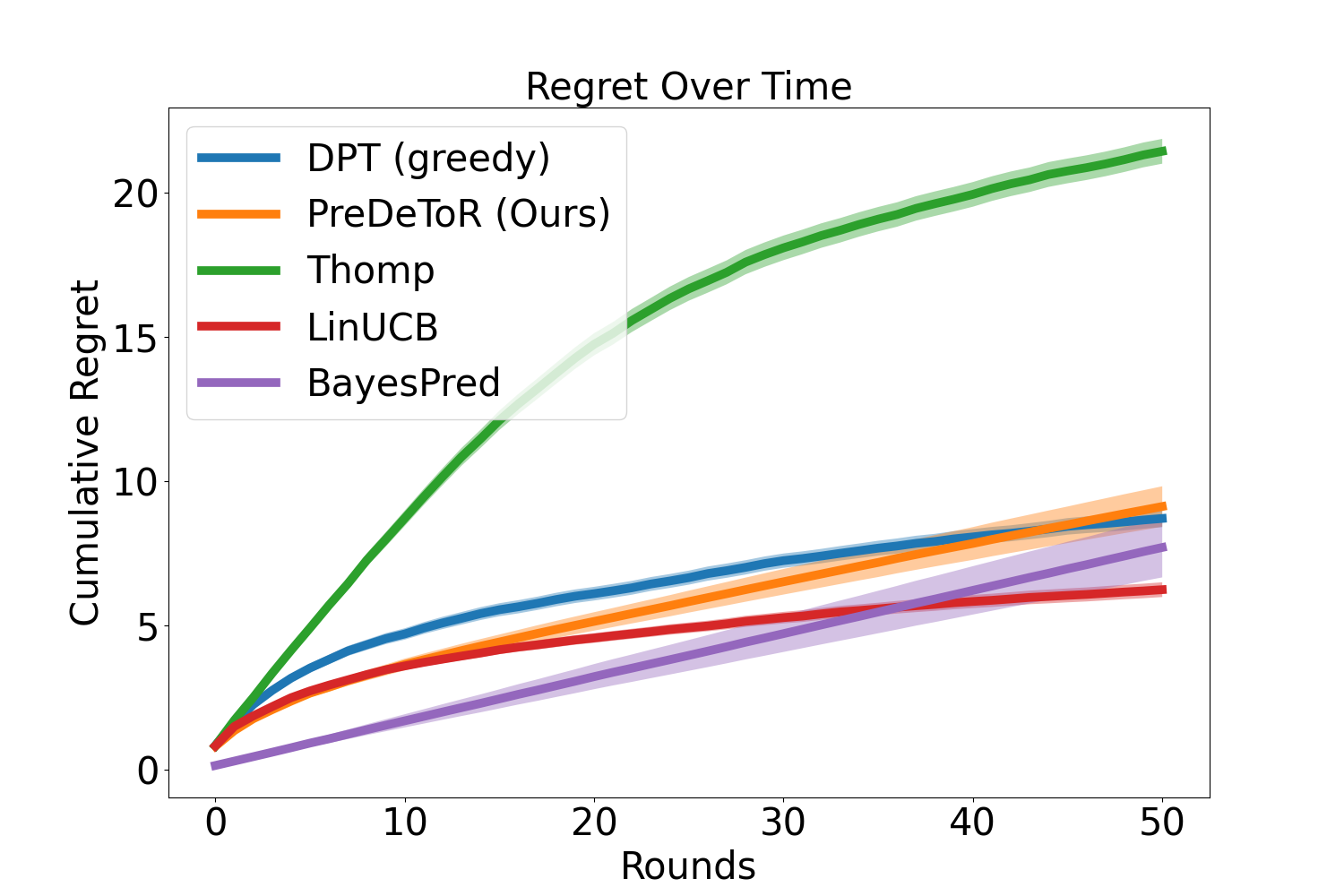}
   \vspace*{-1.0em}
  \caption{\label{fig:bp-tr}\bayes\ Performance}
\end{wrapfigure}
Consider the linear feedback setting consisting of $A$ actions and the hidden task parameter $\btheta_{*}\sim\N(0,\sigma^2_{\btheta}\bI_d)$. The reward of the action $\bx_{t}$ at round $t$ is given by $r_{t} = \bx_{t}^\top\btheta_{*} + \eta_{t}$, where $\eta_t$ is $\sigma^2$ sub-Gaussian.
Let $\pi^w$ collect $n$ rounds of pretraining in-context data and observe $\{I_t, r_t\}_{t=1}^n$. Let $N_n(a)$ denote the total number of times the action $a$ is sampled for $n$ rounds. Note that we drop the task index $m$ in these notations as the random variable $\btheta_{*}$ corresponds to the task.
Define the matrix $\bH_n \in \R^{n \times A}$ where the $t$-th row represents the action $I_t$ for $t\in [n]$. 
The $t$-th row of $\bH_n$ is a one-hot vector with the  $I_t$-th component being 1. We represent each action by one hot vector because we assume that this LMMSE does not have access to the feature vectors of the actions similar to the \pred~for fair comparison.
Then define the reward vector $\bY_n \in \R^n$ where the $t$-th component is the reward $r_t$ observed for the action $I_t$ for $t\in[n]$ in the pretraining data.
%
Define the diagonal matrix $\bD_A\in\R^{A\times A}$ estimated from pretraining data as follows
\begin{align}
    \bD_A(i,i) &= \begin{cases}
        \frac{\sigma^2}{N_n(a)}, \text{ if } N_n(a) > 0\\ \label{eq:D-matrix}
        = 0, \text{ if } N_n(a) = 0
    \end{cases}
    \vspace*{-1em}
\end{align}
where the reward noise being $\sigma^2$ sub-Gaussian is known. 
Finally define the estimated reward covariance matrix $\bS_A\in\R^{A\times A}$ as $\bS_A(a,a') = \wmu_n(a)\wmu_n(a')$, where $\wmu_n(a)$ is the empirical mean of action $a$ estimated from the pretraining data. This matrix captures the reward correlation between the pairs of actions $a,a'\in [A]$.
%
%
%
%
%
Then the posterior average mean estimator $\wmu\in\R^A$ over all tasks is given by the following lemma. The proof is given in \Cref{app:proof-lemma-2}.
\begin{customlemma}{1}
\label{lemma:bayes-reg-1}
    Let $\bH_n$ be the action matrix, $\bY_n$ be the reward vector and $\bS_A$ be the estimated reward covariance matrix. Then the posterior prediction of the average mean reward vector $\wmu$ over all tasks is given by
    \begin{align}
        \wmu = \sigma^2_{\btheta}\bS_A\bH_n^\top\left(\sigma^2_{\btheta}\bH_n(\bS_A + \bD_A)\bH_n^\top\right)^{-1}\bY_{n}. \label{eq:mu}
    \end{align}
\end{customlemma}
The $\wmu$ in \eqref{eq:mu} represents the posterior mean vector averaged on all tasks. So if some action $a\in [A]$ consistently yields high rewards in the pretraining data then $\wmu(a)$ has high value. Since the test distribution is the same as pretraining, this action on average will yield a high reward during test time.  
%
%

We hypothesize that the \pred\ is learning the reward correlation covariance matrix from the training data $\Htr$ and acting greedily on it. To test this hypothesis, we consider the greedy \bayes\ algorithm that first estimates $\bS_A$ from the pretraining data. It then uses the LMMSE estimator in \Cref{lemma:bayes-reg-1} to calculate the posterior mean vector $\wmu$, and then selects $I_{t} =\argmax_a \wmu(a)$ at each round $t$. Note that \bayes\ is a greedy algorithm that always selects the most rewarding action (exploitation) without any exploration of sub-optimal actions. 
Also the \bayes\ is an LMMSE estimator that leverages the linear reward structure and estimates the reward covariance matrix, and therefore can be interpreted as a lower bound to the regret of \pred.
The hypothesis that \bayes\ is a lower bound to \pred\ is supported by \Cref{fig:bp-tr}.  In \Cref{fig:bp-tr} the reward covariance matrix for \bayes\ is estimated from the $\Htr$ by first running the \ts\ ($\pi^w$). Observe that the \bayes\ has a lower cumulative regret than \pred\ and almost matches the regret of \pred\ towards the end of the horizon. 
Also note that \linucb\ has lower cumulative regret towards the end of horizon as it leverages the linear structure and the feature of the actions in selecting the next action.
%
%

%
%
%

    

\subsection{Empirical Study: Increasing number of Actions}
\label{sec:lim}
In this section, we discuss the performance of \pred\ when the number of actions is very high so that the weak demonstrator $\pi^w$ does not have sufficient samples for each action. However, the number of tasks $\Mpr \gg A > n$.

\textbf{Baselines:} We again implement the same baselines discussed in \Cref{sec:short-horizon}. The baselines are \pred, \predt, \dptg, \ad, \ts, and \linucb.

\textbf{Outcomes:} We first discuss the main outcomes from our experimental results of introducing more actions than the horizon (or more dimensions than actions) during data collection and evaluation:

\begin{tcolorbox}
\customfinding \pred\ (\gt) outperforms \dptg, and \ad, even when $A > n$ but $\Mpr \gg A$.
\end{tcolorbox}

\begin{figure}[!hbt]
\centering
\begin{subfigure}[b]{0.48\textwidth}
   \includegraphics[scale=0.15]{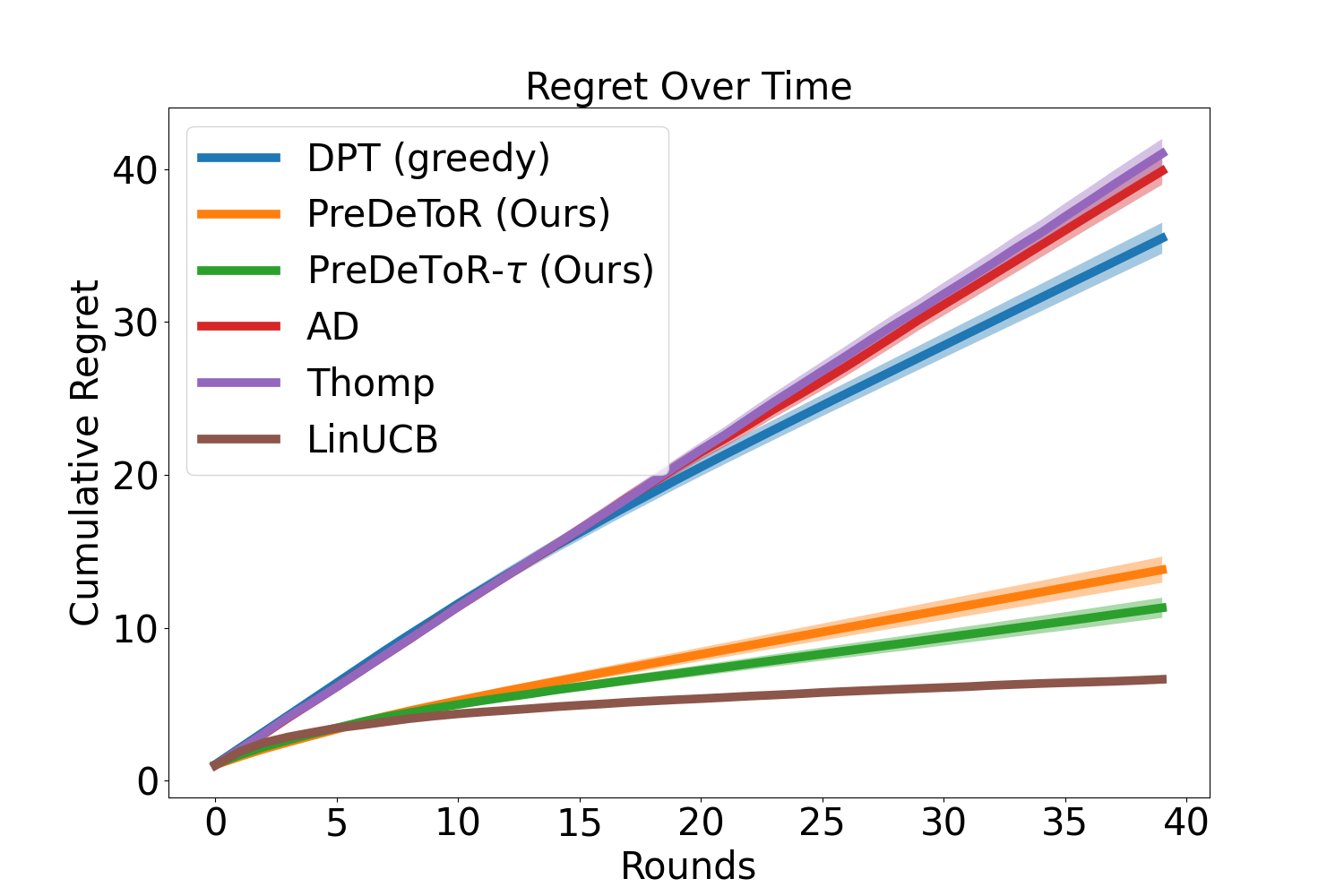}
   \caption{Linear Bandit}
   \label{fig:lim-lin}
\end{subfigure}%
\begin{subfigure}[b]{0.48\textwidth}
   \includegraphics[scale=0.15]{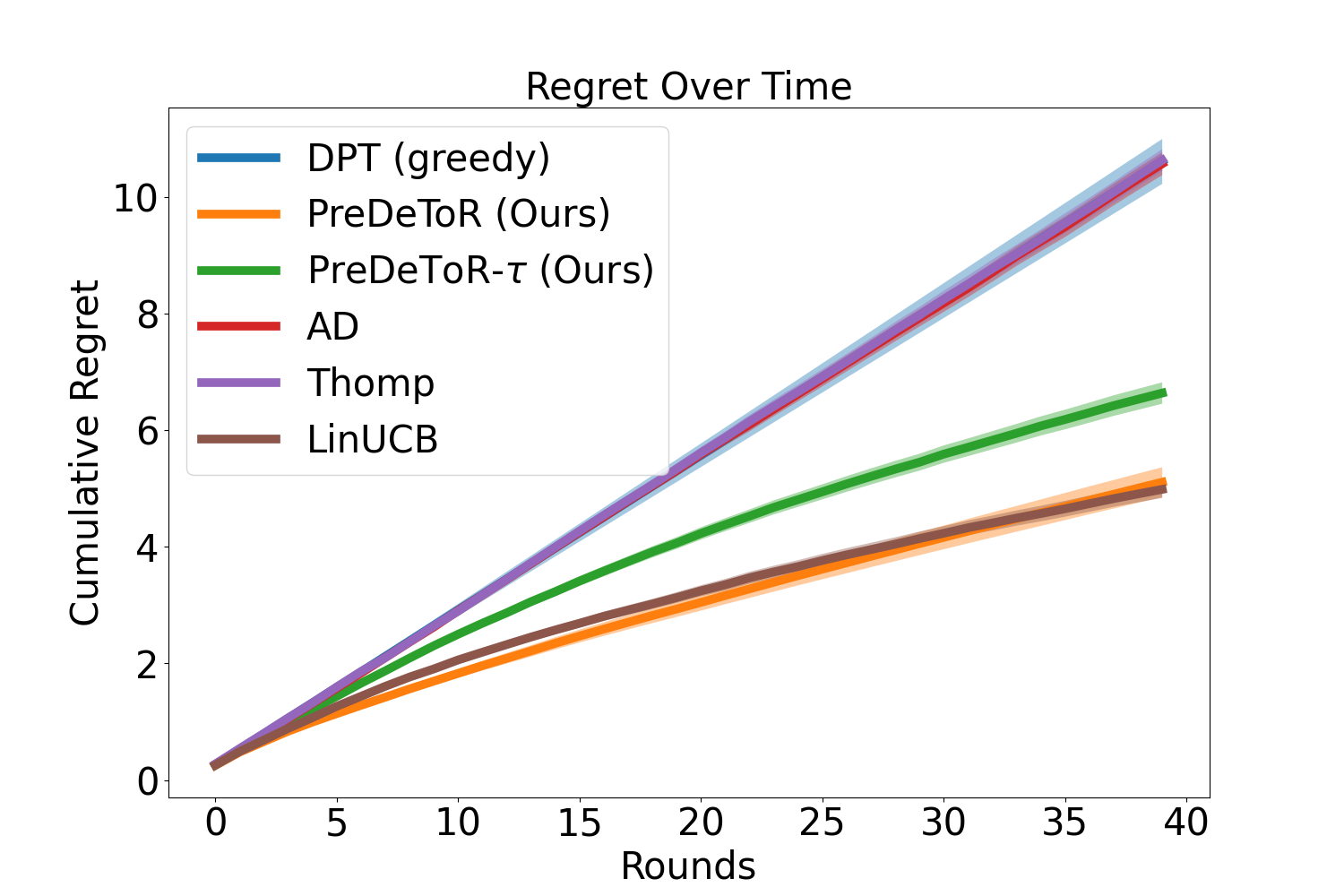}
   \caption{Non-linear Bandit}
   \label{fig:lim-nlm}
\end{subfigure}
\vspace*{-1em}
\caption{Testing the limit experiments. The horizontal axis is the number of rounds. Confidence bars show one standard error.}
\label{fig:expt-lim}
\vspace{-0.7em}
\end{figure}

\textbf{Experimental Result:} We observe these outcomes in \Cref{fig:expt-lim}. In \Cref{fig:lim-lin} we show the linear bandit setting for $\Mpr = 250000$, $\Mts = 200$, $A=100$, $n=50$ and $d=5$. 
Again, the demonstrator $\pi^w$ is the \ts\ algorithm. We observe that \pred\ (\gt) has lower cumulative regret than \dptg\ and \ad. Note that for any task $m$ the \ts\ will not be able to sample all the actions even once.
%
The weak performance of \dptg\ can be attributed to both short horizons and the inability to estimate the optimal action for such a short horizon $n < A$. The \ad\ performs similar to the demonstrator \ts\ because of its training.
%
Observe that \pred\ (\gt) has similar regret to \linucb\ and lower regret than \ts\ which also shows that \pred\ is exploiting the latent linear structure of the underlying tasks.
In \Cref{fig:lim-nlm} we show the non-linear bandit setting for horizon $n=40$, $\Mpr = 200000$, $A=60$, $d=2$, and $|\Anc|=5$. The demonstrator $\pi^w$ is the \ts\ algorithm.
%
%
Again we observe that \pred\ (\gt) has lower cumulative regret than \dptg, \ad\ and \linucb\ which fails to perform well in this non-linear setting due to its algorithmic design. 


\subsection{Empirical Study: Increasing Horizon}
\label{sec:horizon}
In this section, we discuss the performance of \pred\ with respect to an increasing horizon for each task $m\in [M]$. However, note that the number of tasks $\Mpr \geq n$.
%
%
Note that \citet{lee2023supervised} studied linear bandit setting for $n=200$. We study the setting up to a similar horizon scale.

\textbf{Baselines:} We again implement the same baselines discussed in \Cref{sec:short-horizon}. The baselines are \pred, \predt, \dptg, \ad, \ts, and \linucb.

\textbf{Outcomes:} We first discuss the main outcomes of our experimental results for increasing the horizon:

\begin{tcolorbox}
\customfinding \pred\ (\gt) outperforms \dptg, and \ad\ with increasing horizon.
\end{tcolorbox}

\begin{figure}[!hbt]
\centering
\begin{subfigure}[b]{0.32\textwidth}
   \includegraphics[scale=0.13]{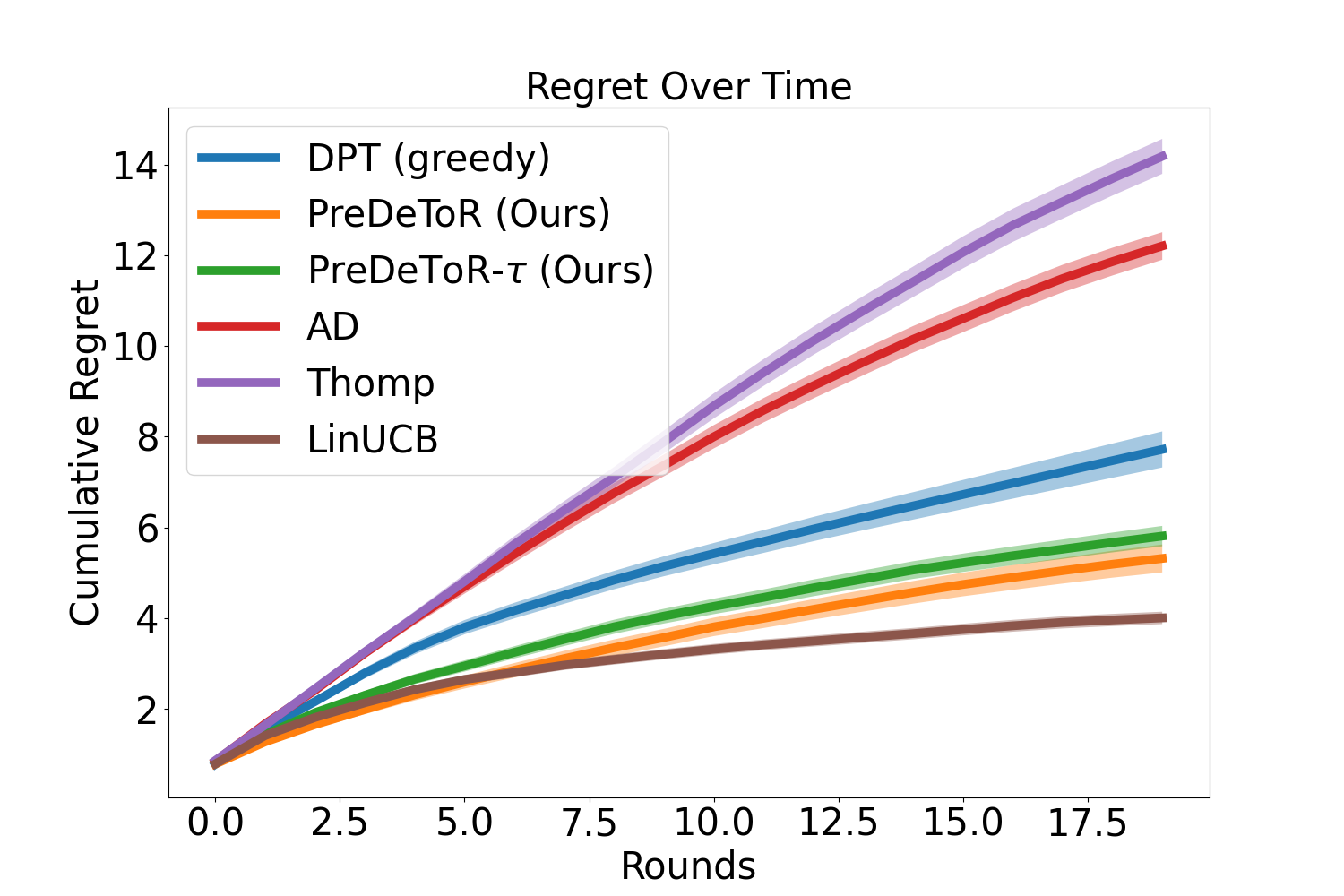}
   \caption{Horizon $20$}
   \label{fig:hor-20}
\end{subfigure}%
\begin{subfigure}[b]{0.32\textwidth}
   \includegraphics[scale=0.13]{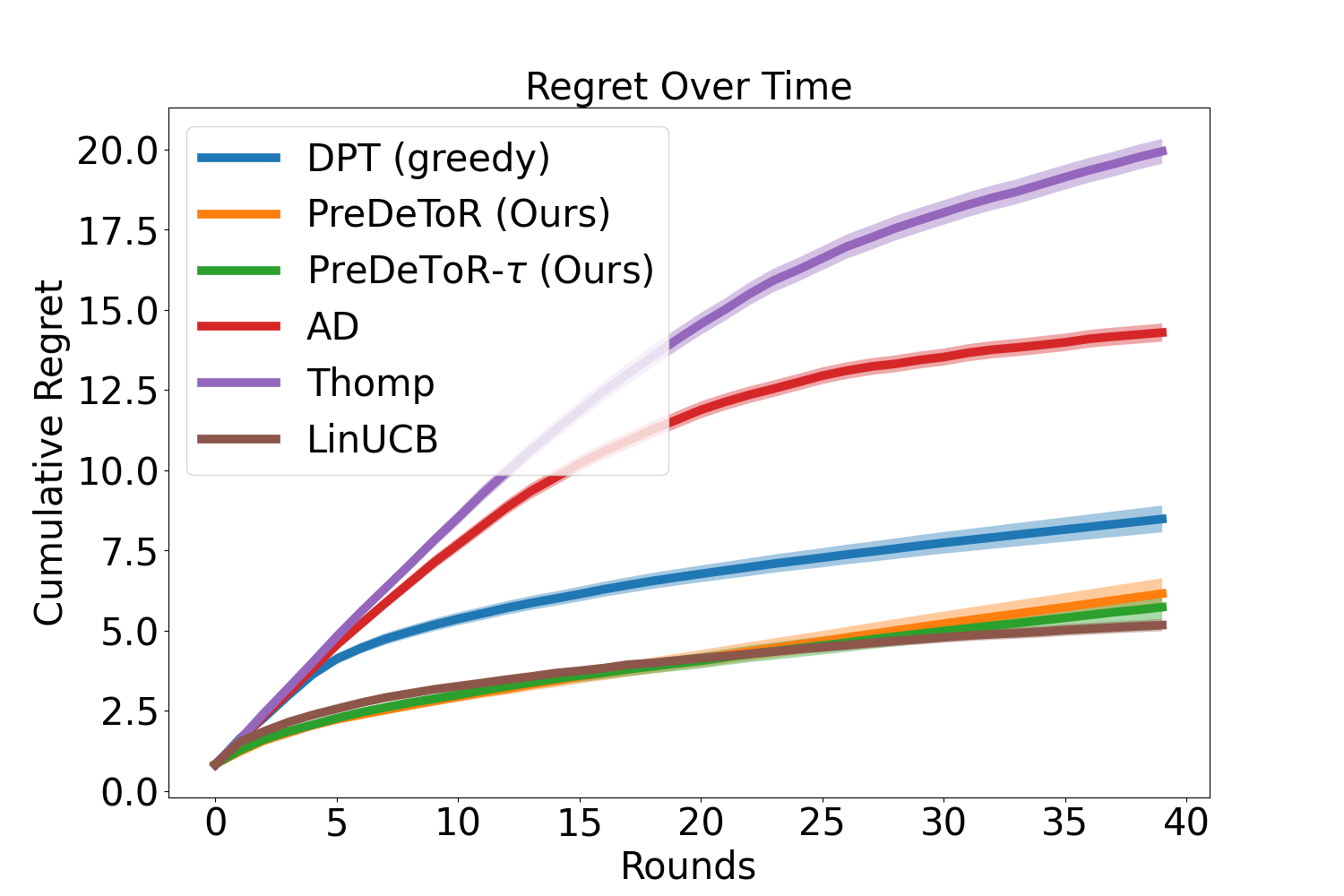}
   \caption{Horizon $40$}
   \label{fig:hor-40}
\end{subfigure}%
\begin{subfigure}[b]{0.32\textwidth}
   \includegraphics[scale=0.13]{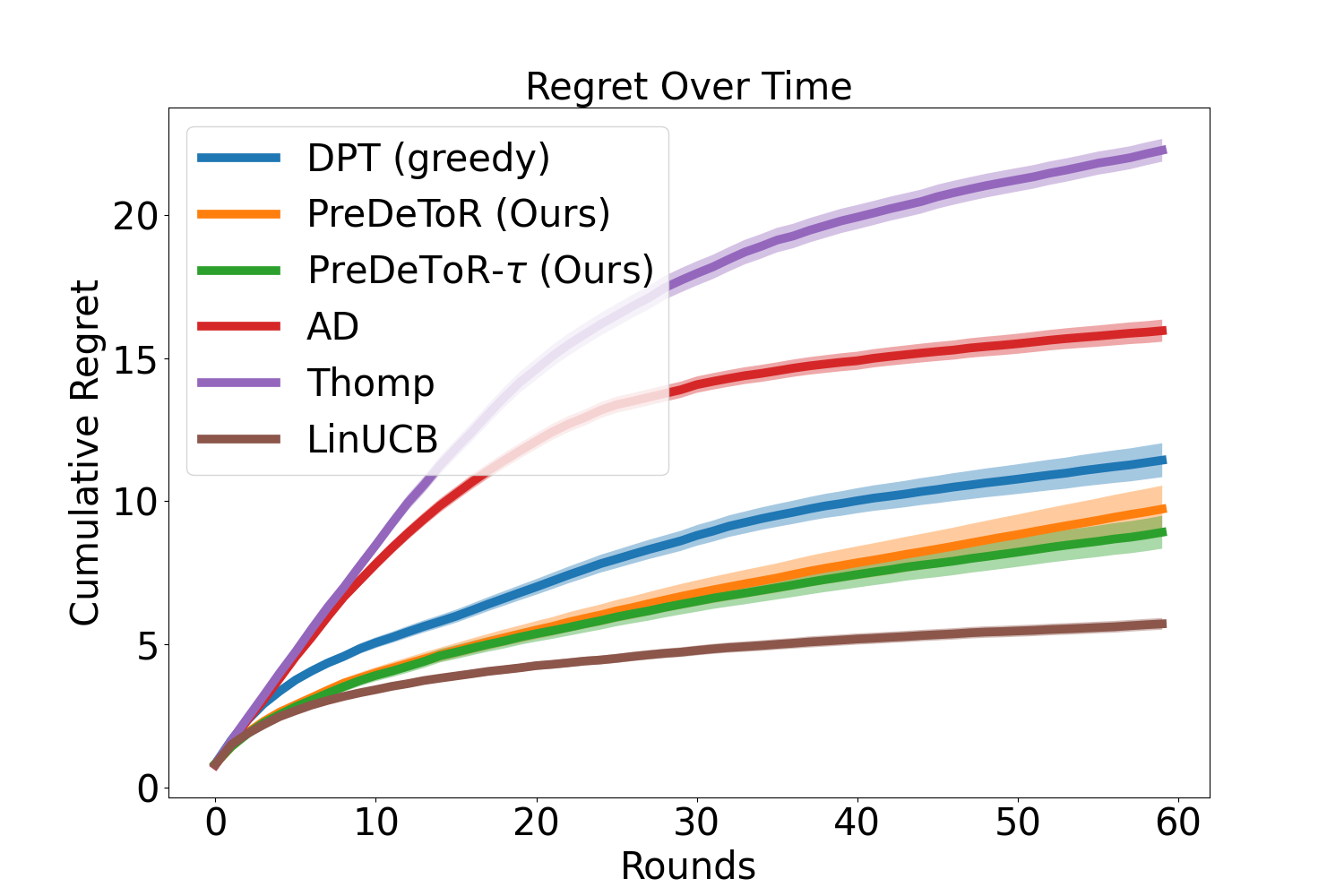}
   \caption{Horizon $60$}
   \label{fig:hor-60}
\end{subfigure}

\begin{subfigure}[b]{0.32\textwidth}
   \includegraphics[scale=0.13]{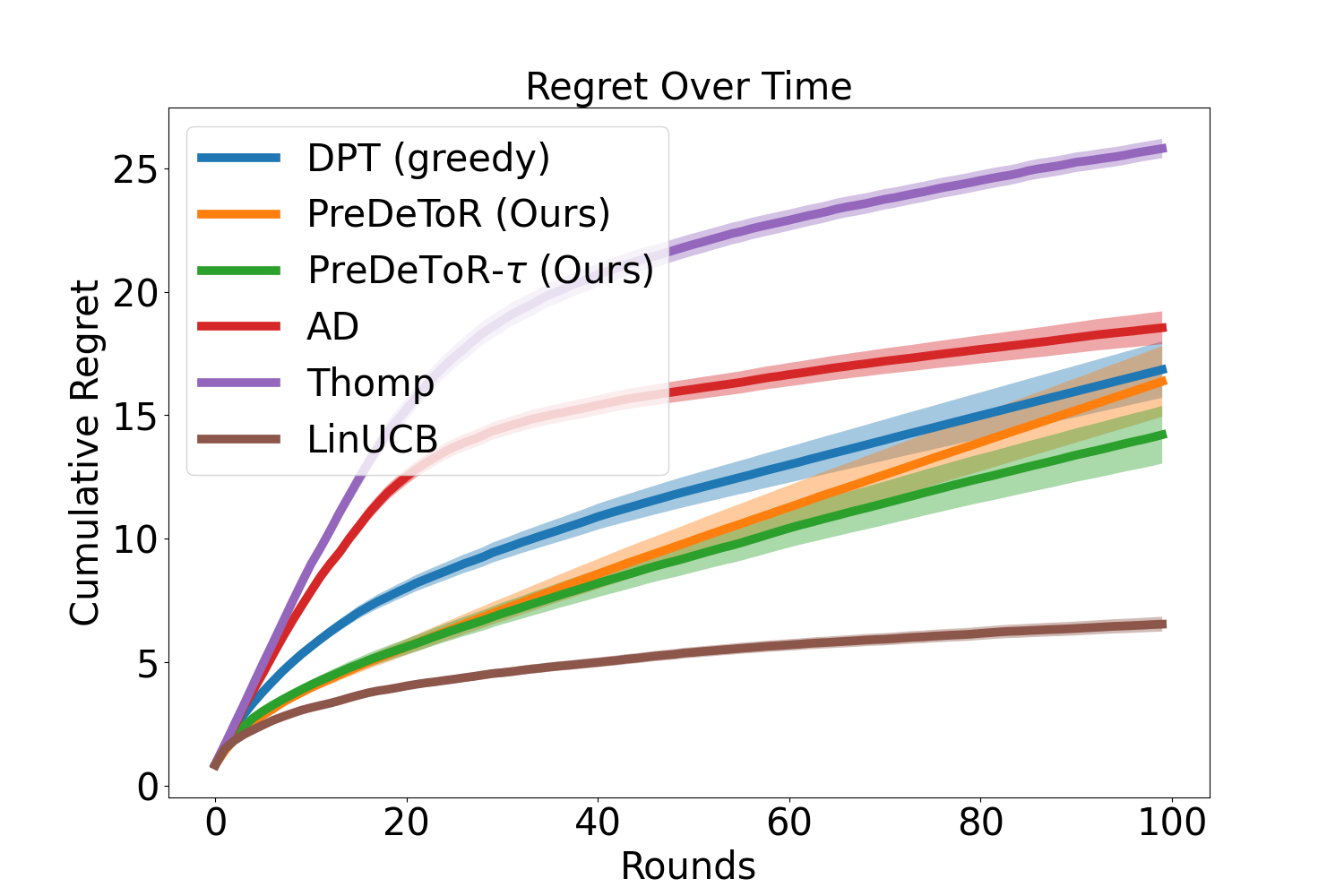}
   \caption{Horizon $100$}
   \label{fig:hor-100}
\end{subfigure}%
\begin{subfigure}[b]{0.32\textwidth}
   \includegraphics[scale=0.13]{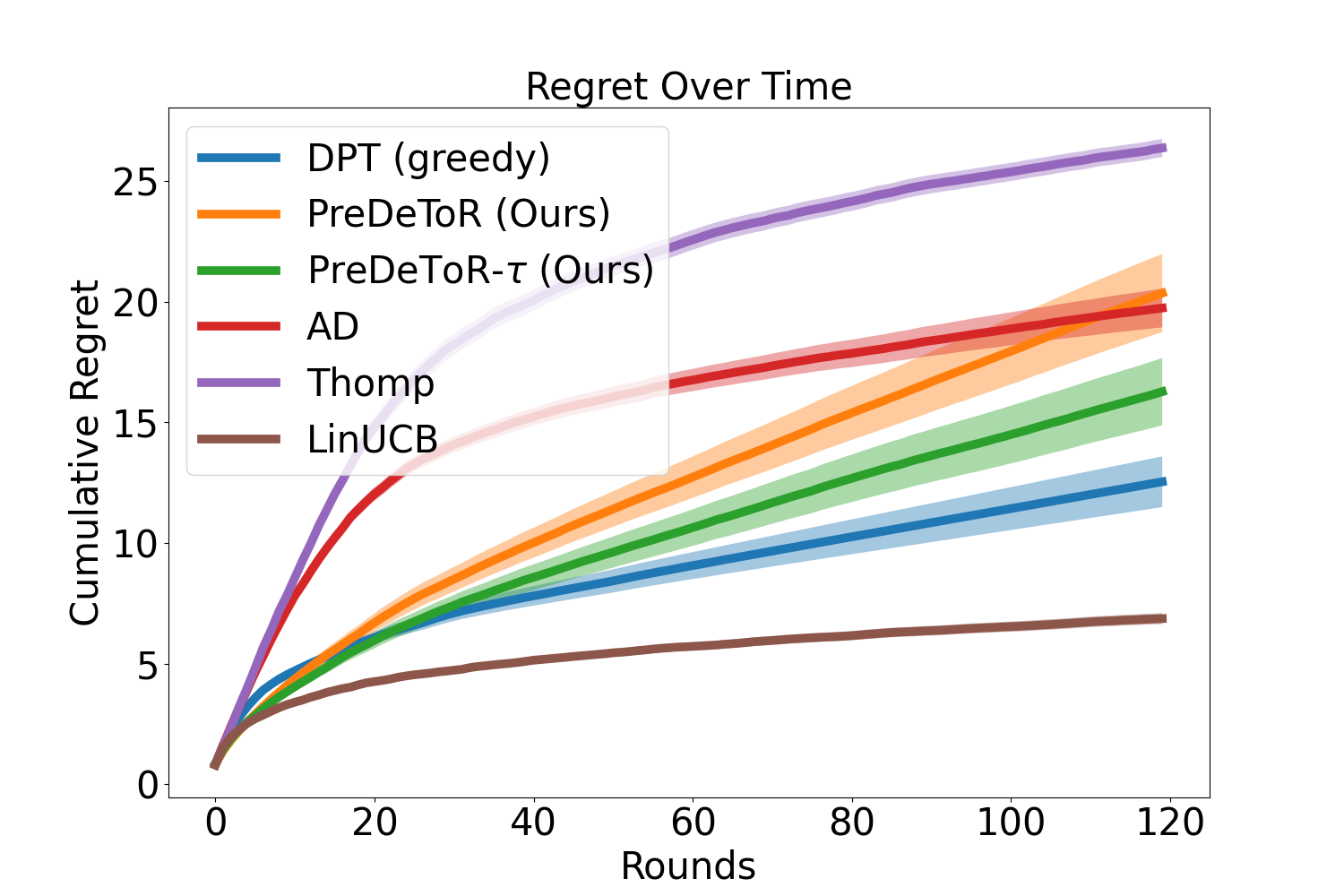}
   \caption{Horizon $120$}
   \label{fig:hor-120}
\end{subfigure}%
\begin{subfigure}[b]{0.32\textwidth}
   \includegraphics[scale=0.13]{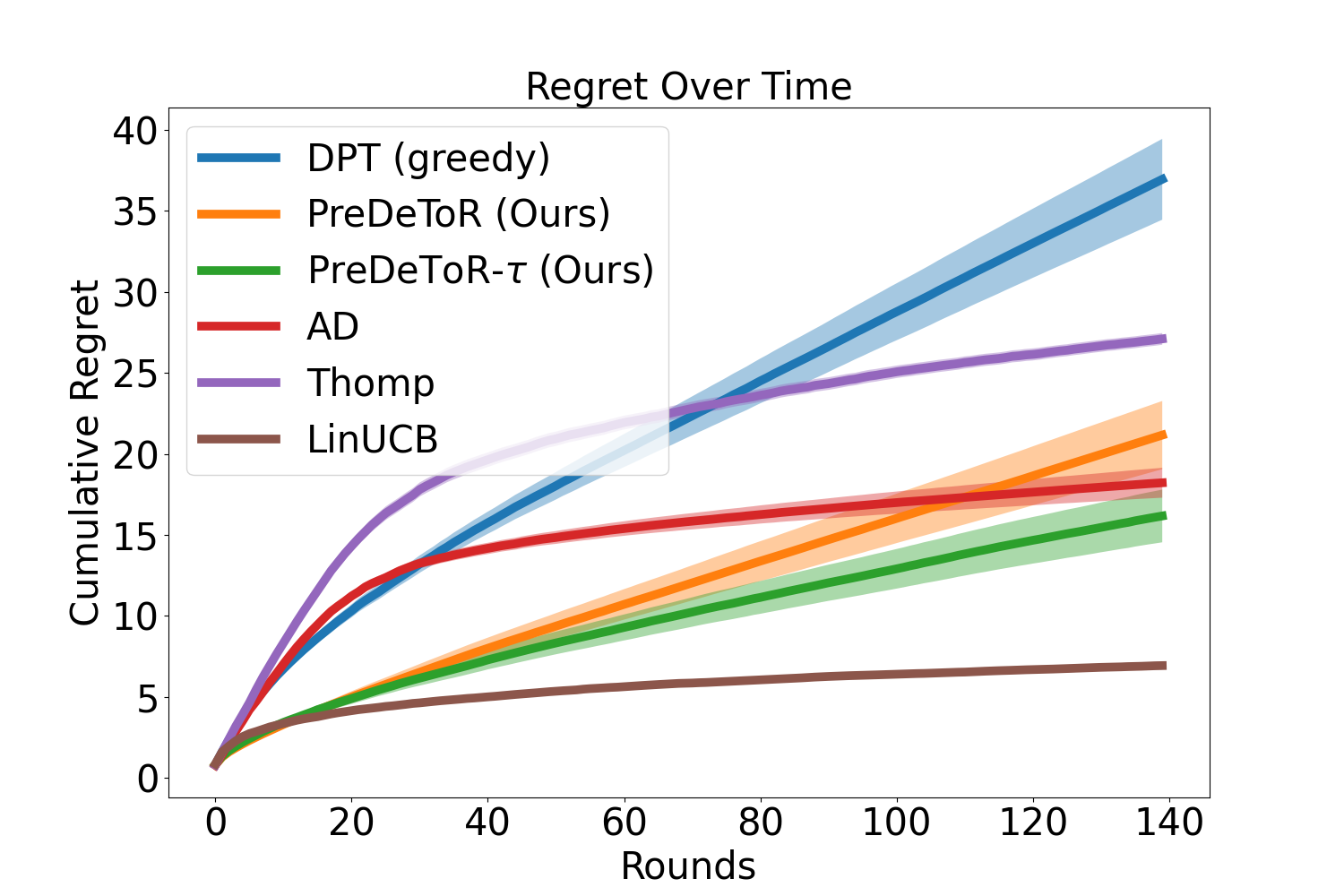}
   \caption{Horizon $140$}
   \label{fig:hor-140}
\end{subfigure}

\begin{subfigure}[b]{0.48\textwidth}
   \includegraphics[scale=0.13]{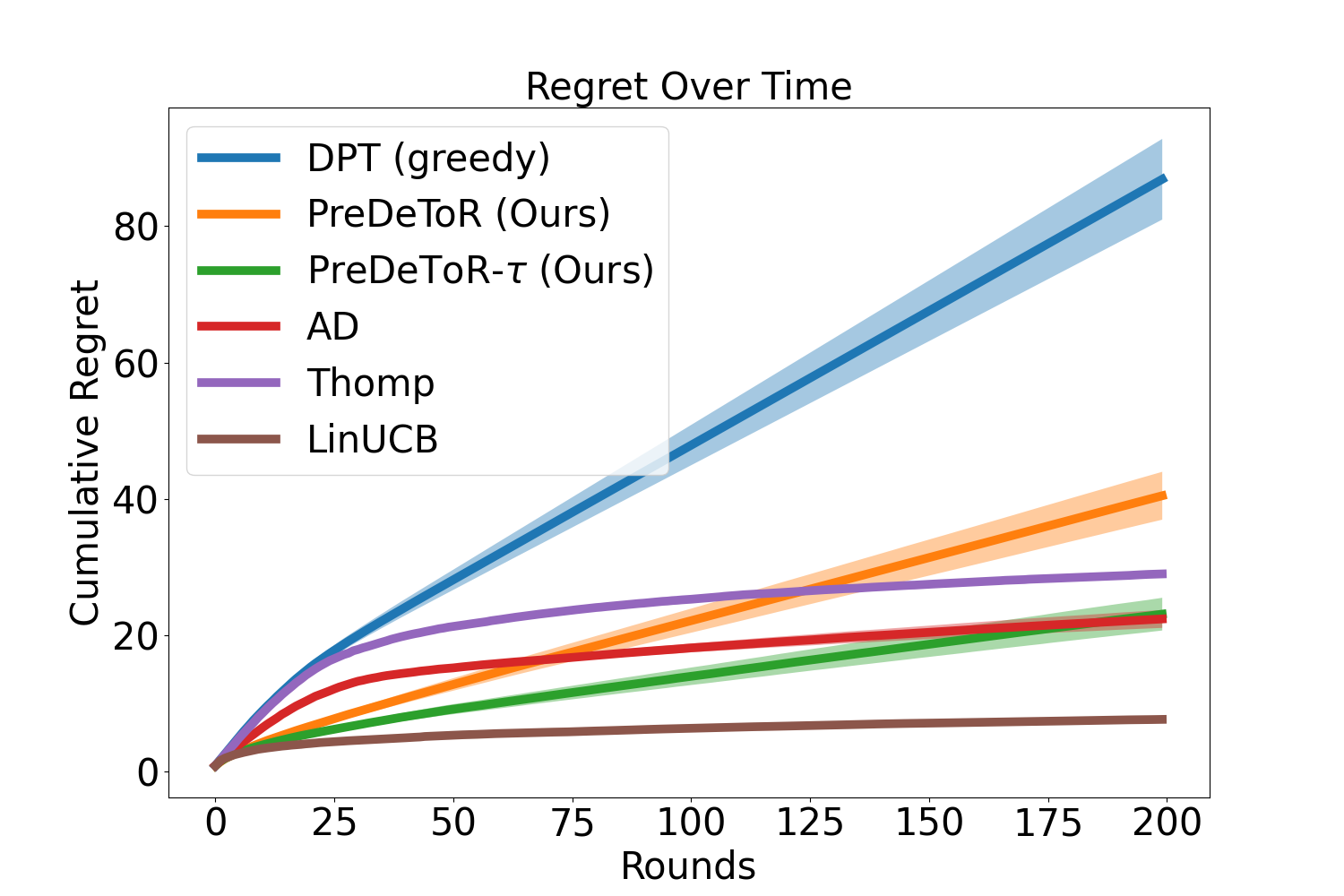}
   \caption{Horizon $200$}
   \label{fig:hor-200}
\end{subfigure}%
\begin{subfigure}[b]{0.48\textwidth}
   \includegraphics[scale=0.13]{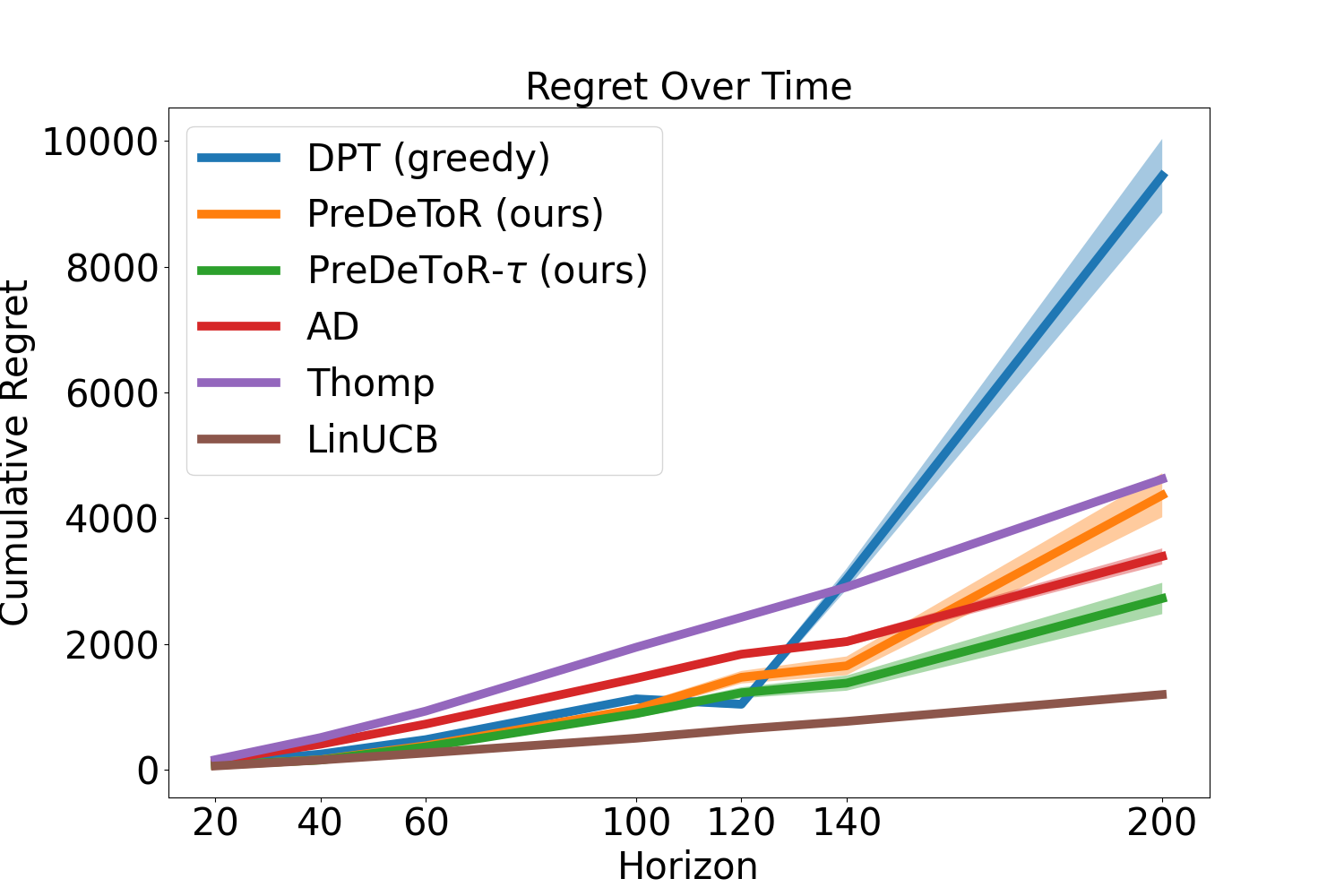}
   \caption{Increasing Horizon}
   \label{fig:hor-inc}
\end{subfigure}
\vspace*{-1em}
\caption{Experiment with increasing horizon. The y-axis shows the cumulative regret.}
\label{fig:expt-horizon}
\vspace{-0.7em}
\end{figure}

\textbf{Experimental Result:} We observe these outcomes in \Cref{fig:expt-horizon}. In \Cref{fig:expt-horizon} we show the linear bandit setting for $\Mpr =  150000$, $\Mts = 200$, $A=20$, $n=\{20, 40, 60, 100, 120, 140, 200\}$ and $d=5$. 
Again, the demonstrator $\pi^w$ is the \ts\ algorithm. We observe that \pred\ (\gt) has lower cumulative regret than \dptg, and \ad. Note that for any task $m$ for the horizon $20$ the \ts\ will be able to sample all the actions at most once.
%
%
Observe from \Cref{fig:hor-20}, \ref{fig:hor-40}, \ref{fig:hor-60}, \Cref{fig:hor-100}, \ref{fig:hor-120}, \ref{fig:hor-140} and \ref{fig:hor-200} that \pred\ (\gt) is closer to \linucb\ and outperforms \ts\ which also shows that \pred\ (\gt) is learning the latent linear structure of the underlying tasks.
In \Cref{fig:hor-inc} we plot the regret of all the baselines with respect to the increasing horizon. Again we see that \pred\ (\gt) is closer to \linucb\ and outperforms \dptg, \ad\ and \ts.
%
%
This shows that \pred\ (\gt) is able to exploit the latent structure and reward correlation across the tasks for varying horizon length.



\subsection{Empirical Study: Increasing Dimension}
\label{sec:dimension}
In this section, we discuss the performance of \pred\ with respect to an increasing dimension for each task $m\in [M]$. Again note that the number of tasks $\Mpr \gg A \geq n$.
%
%
Through this experiment, we want to evaluate the performance of \pred\ and see how it exploits the underlying reward correlation when the horizon is small as well as for increasing dimensions.

\textbf{Baselines:} We again implement the same baselines discussed in \Cref{sec:short-horizon}. The baselines are \pred, \predt\ \dptg, \ad, \ts, and \linucb.

\textbf{Outcomes:} We first discuss the main outcomes of our experimental results for increasing the horizon:

\begin{tcolorbox}
\customfinding \pred\ (\gt) outperforms \dptg, \ad\ with increasing dimension and has lower regret than \linucb\ for larger dimension.
\end{tcolorbox}

\begin{figure}[!hbt]
\centering
\begin{subfigure}[b]{0.32\textwidth}
   \includegraphics[scale=0.13]{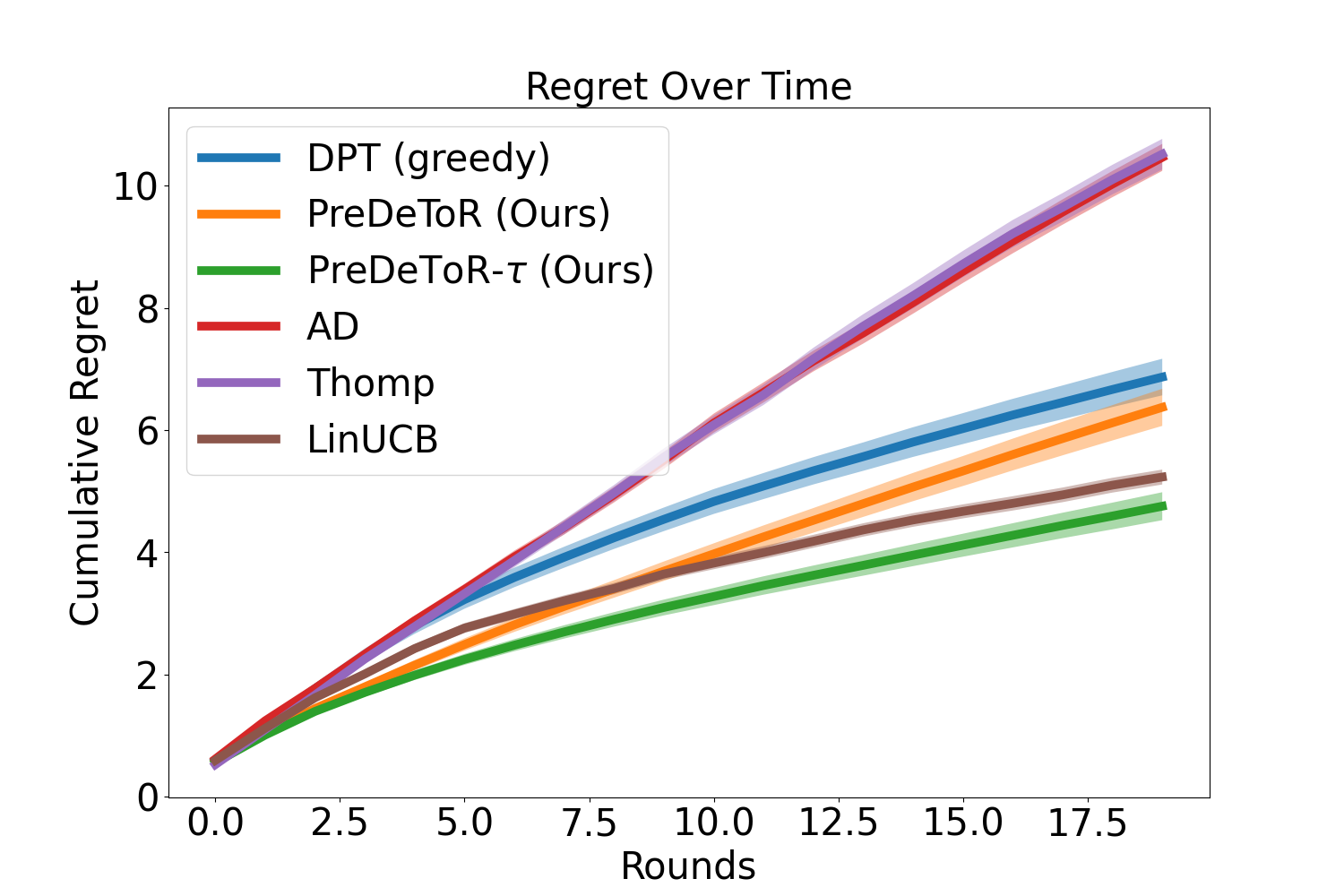}
   \caption{Dimension $10$}
   \label{fig:dim-10}
\end{subfigure}%
\begin{subfigure}[b]{0.32\textwidth}
   \includegraphics[scale=0.13]{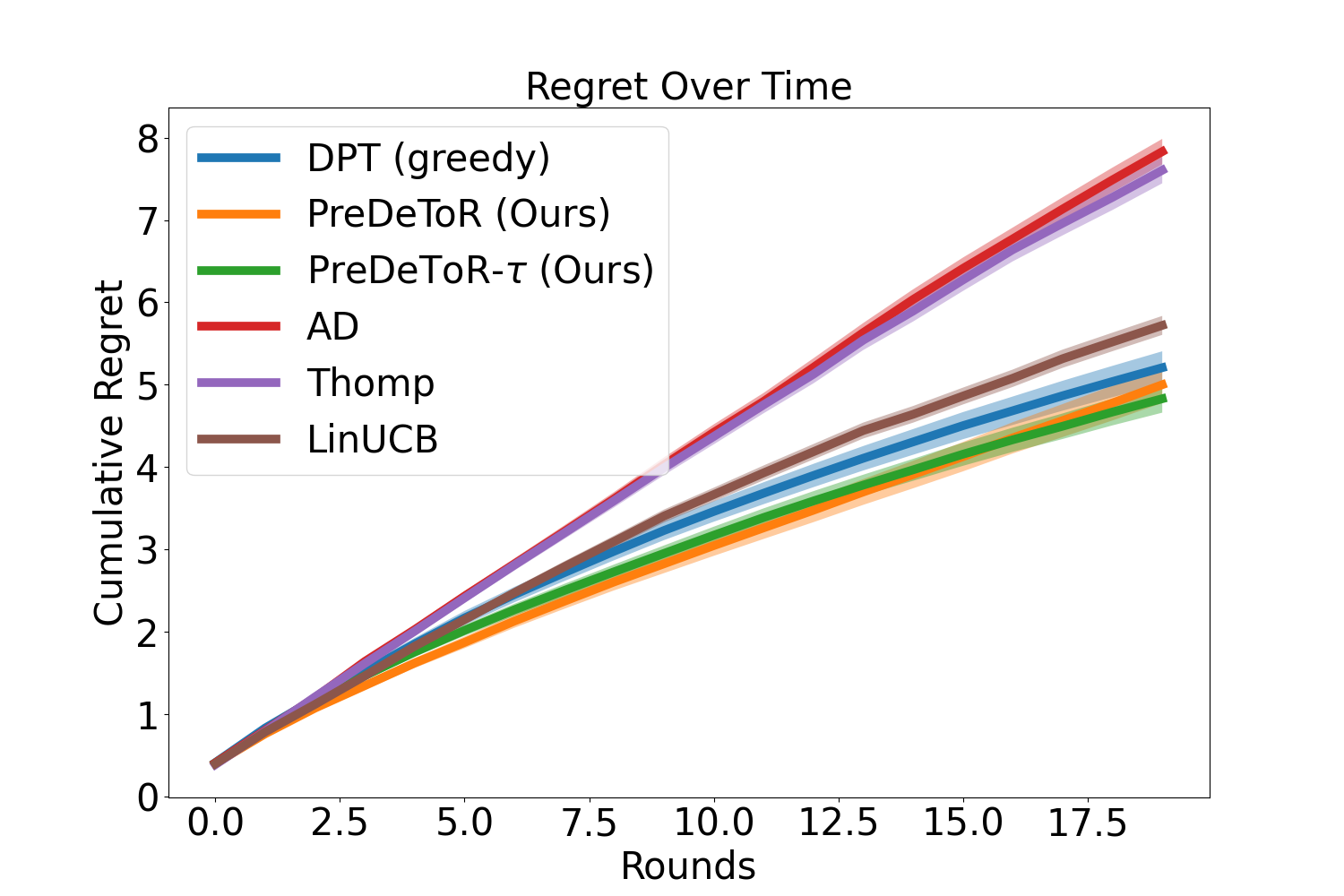}
   \caption{Dimension $20$}
   \label{fig:dim-20}
\end{subfigure}%
\begin{subfigure}[b]{0.32\textwidth}
   \includegraphics[scale=0.13]{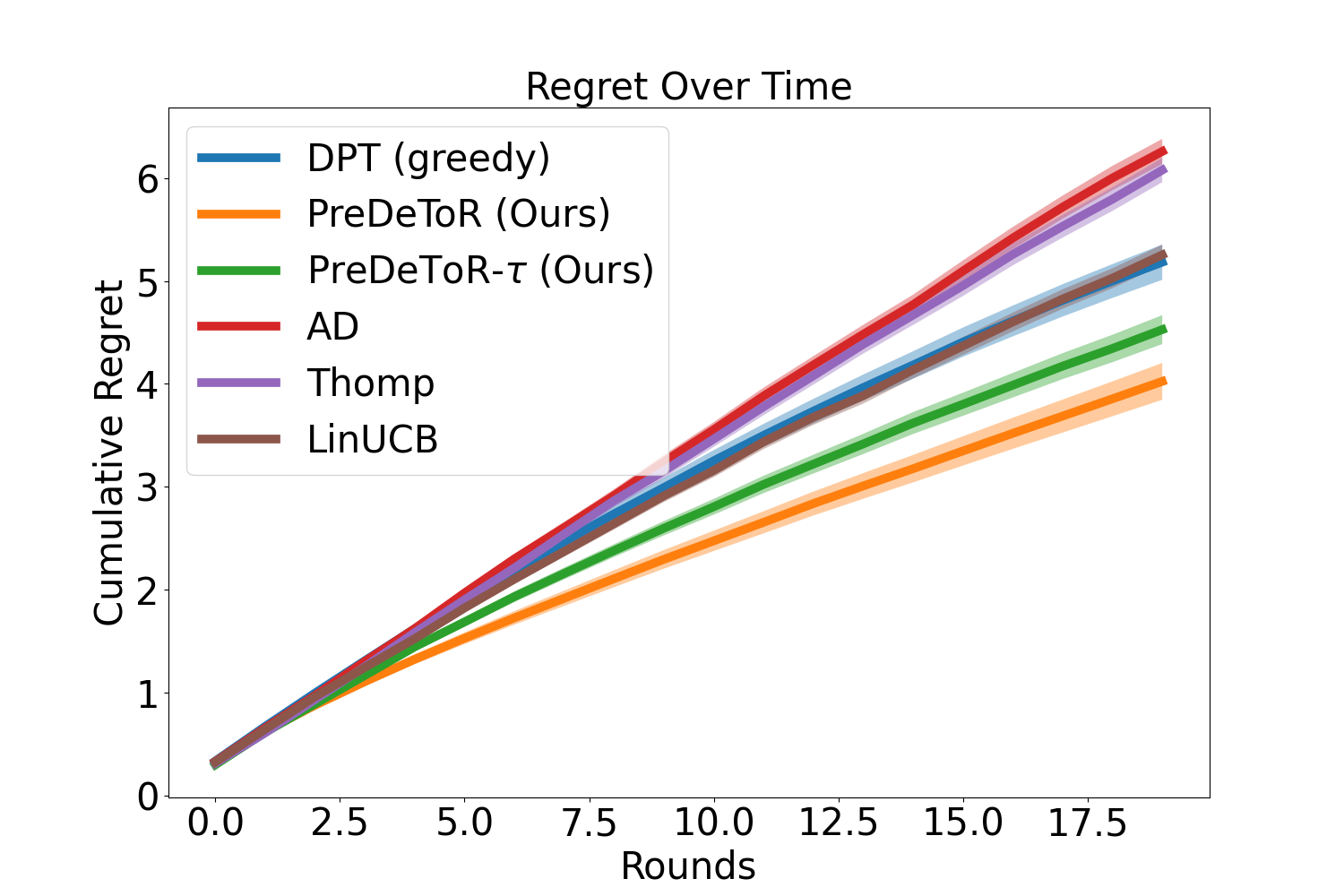}
   \caption{Dimension $30$}
   \label{fig:dim-30}
\end{subfigure}

\begin{subfigure}[b]{0.48\textwidth}
   \includegraphics[scale=0.13]{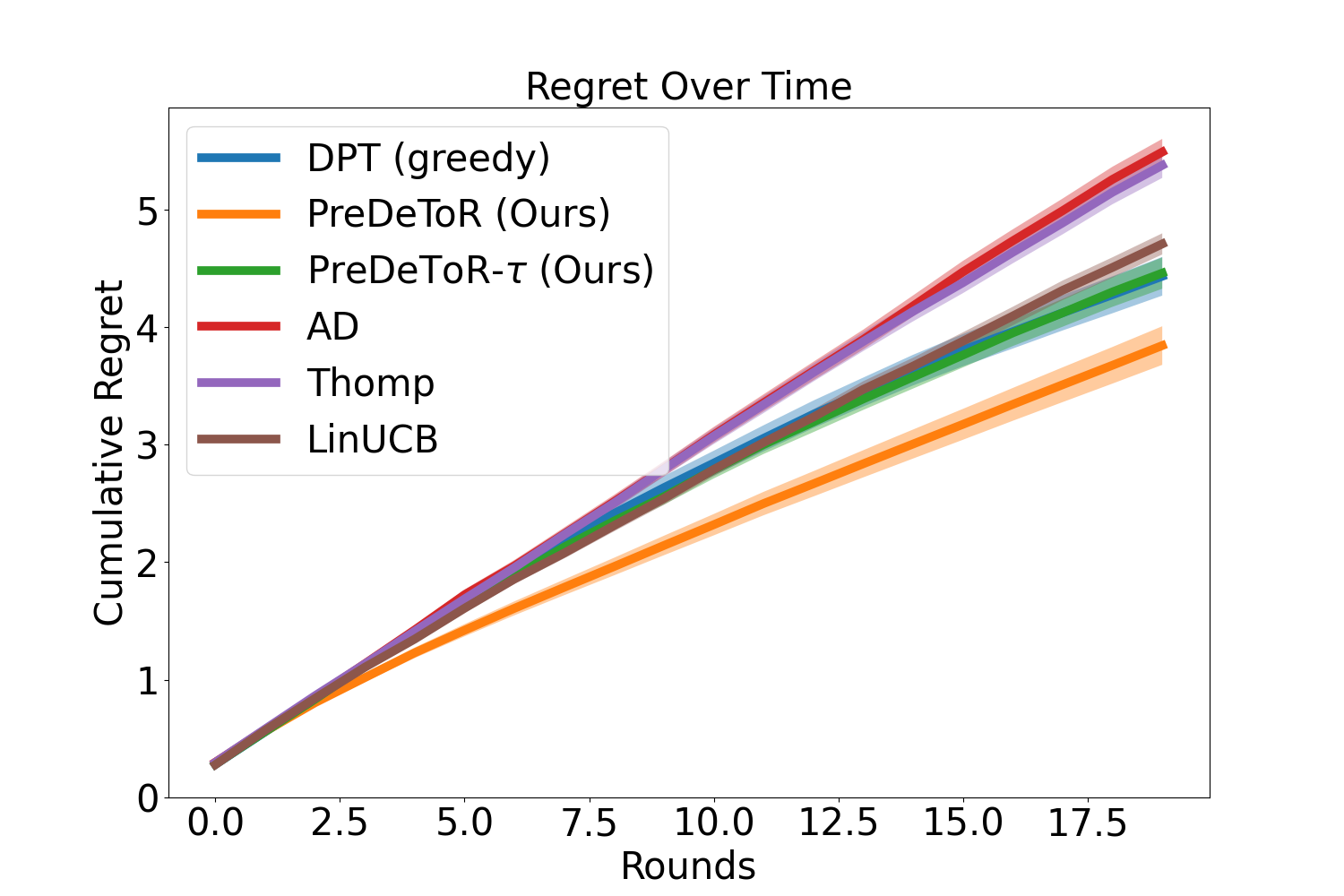}
   \caption{Dimension $40$}
   \label{fig:dim-40}
\end{subfigure}%
\begin{subfigure}[b]{0.48\textwidth}
   \includegraphics[scale=0.13]{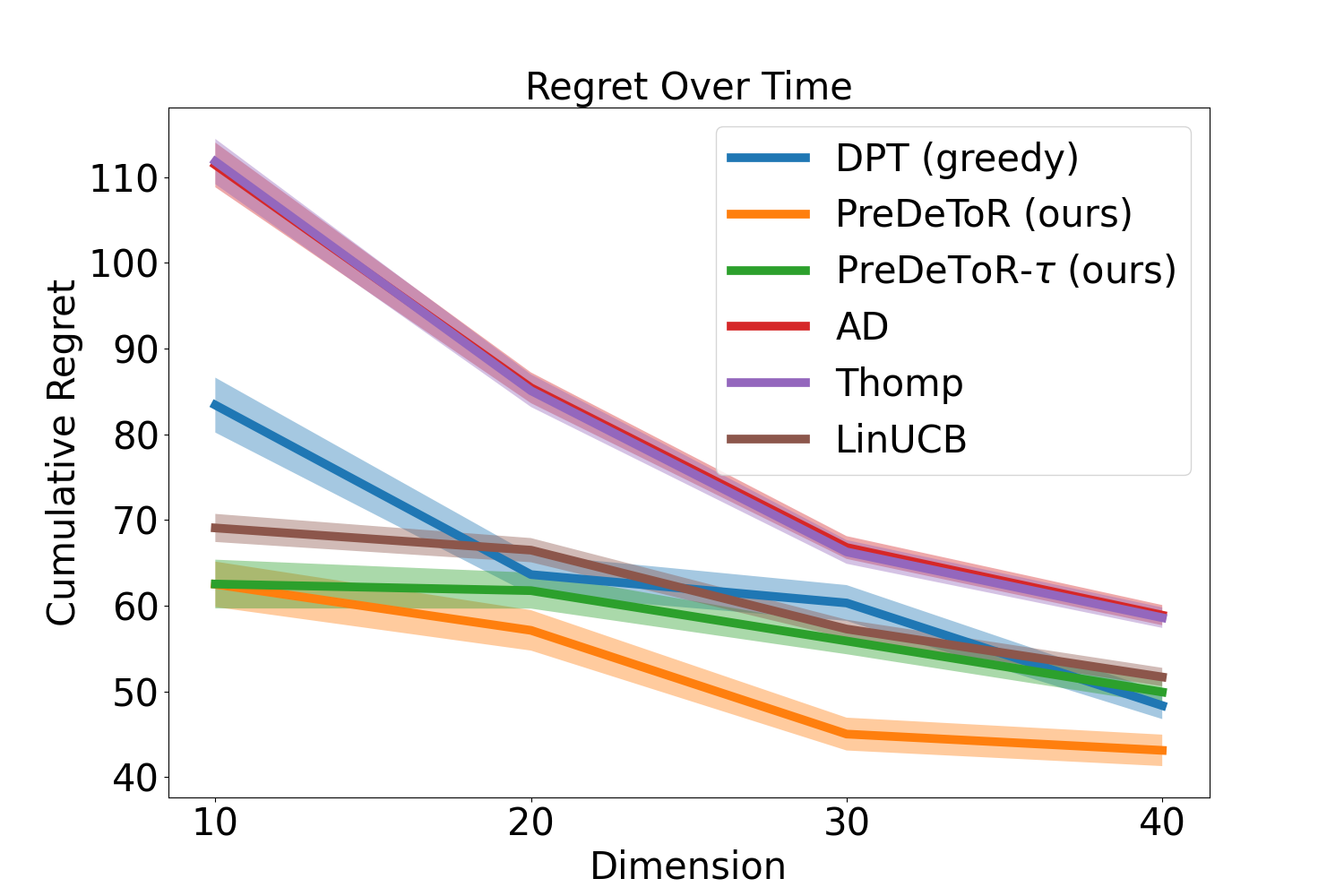}
   \caption{Increasing Dimension}
   \label{fig:dim-inc}
\end{subfigure}
\vspace*{-1em}
\caption{Experiment with increasing dimension. The y-axis shows the cumulative regret.}
\label{fig:expt-dimension}
\vspace{-0.7em}
\end{figure}

\textbf{Experimental Result:} We observe these outcomes in \Cref{fig:expt-horizon}. In \Cref{fig:expt-horizon} we show the linear bandit setting for horizon $n=20$, $\Mpr =  160000$, $\Mts = 200$, $A=20$, and $d=\{10, 20, 30, 40\}$. 
Again, the demonstrator $\pi^w$ is the \ts\ algorithm. We observe that \pred\ (\gt) has lower cumulative regret than \dptg, \ad. Note that for any task $m$ for the horizon $20$ the \ts\ will be able to sample all the actions at most once.
%
Observe from \Cref{fig:dim-10}, \ref{fig:dim-20}, \ref{fig:dim-30}, and \ref{fig:dim-40} that \pred\ (\gt) is closer to \linucb\ and has lower regret than \ts\ which also shows that \pred\ (\gt) is exploiting the latent linear structure of the underlying tasks.
In \Cref{fig:dim-inc} we plot the regret of all the baselines with respect to the increasing dimension. Again we see that \pred\ (\gt) has lower regret than \dptg, \ad\ and \ts. Observe that with  increasing dimension \pred\ is able to outperform \linucb. This shows that the \pred\ (\gt) is able to exploit reward correlation across tasks for varying dimensions.

\subsection{Empirical Study: Increasing Attention Heads}
\label{sec:heads}
In this section, we discuss the performance of \pred\ with respect to an increasing attention heads for the transformer model for the non-linear feedback model. Again note that the number of tasks $\Mpr \gg A \geq n$.
%
%
Through this experiment, we want to evaluate the performance of \pred\ to exploit the underlying reward correlation when the horizon is small and understand the representative power of the transformer by increasing the attention heads.
Note that we choose the non-linear feedback model and low data regime to leverage the representative power of the transformer.

\textbf{Baselines:} We again implement the same baselines discussed in \Cref{sec:short-horizon}. The baselines are \pred, \predt, \dptg, \ad, \ts, and \linucb.

\textbf{Outcomes:} We first discuss the main outcomes of our experimental results for increasing the horizon:




\begin{tcolorbox}
\customfinding \pred\ (\gt) outperforms \dptg, and \ad\ with increasing attention heads.
\end{tcolorbox}

\begin{figure}[!hbt]
\centering
\begin{subfigure}[b]{0.32\textwidth}
   \includegraphics[scale=0.13]{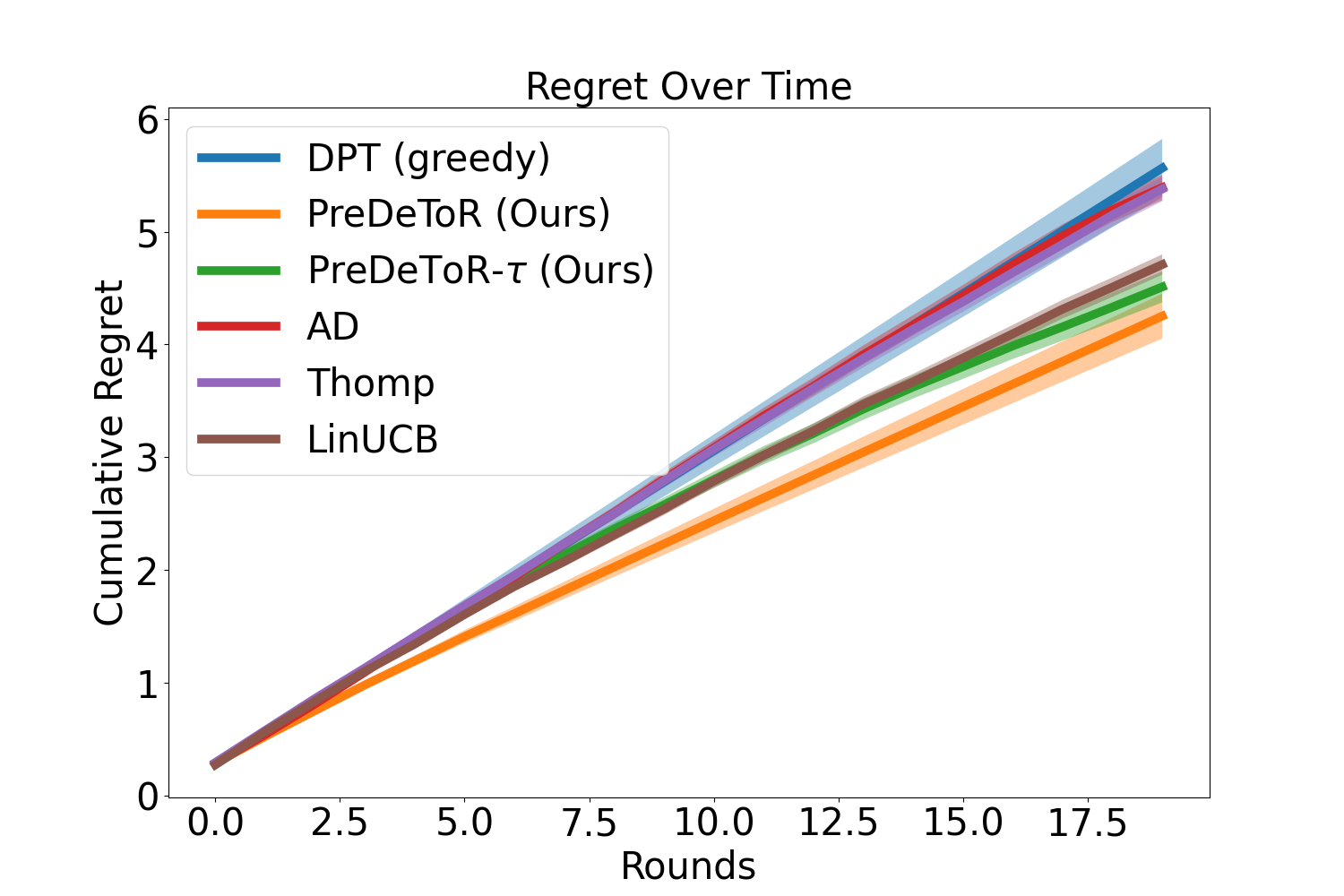}
   \caption{Attention Heads $2$}
   \label{fig:head-2}
\end{subfigure}%
\begin{subfigure}[b]{0.32\textwidth}
   \includegraphics[scale=0.13]{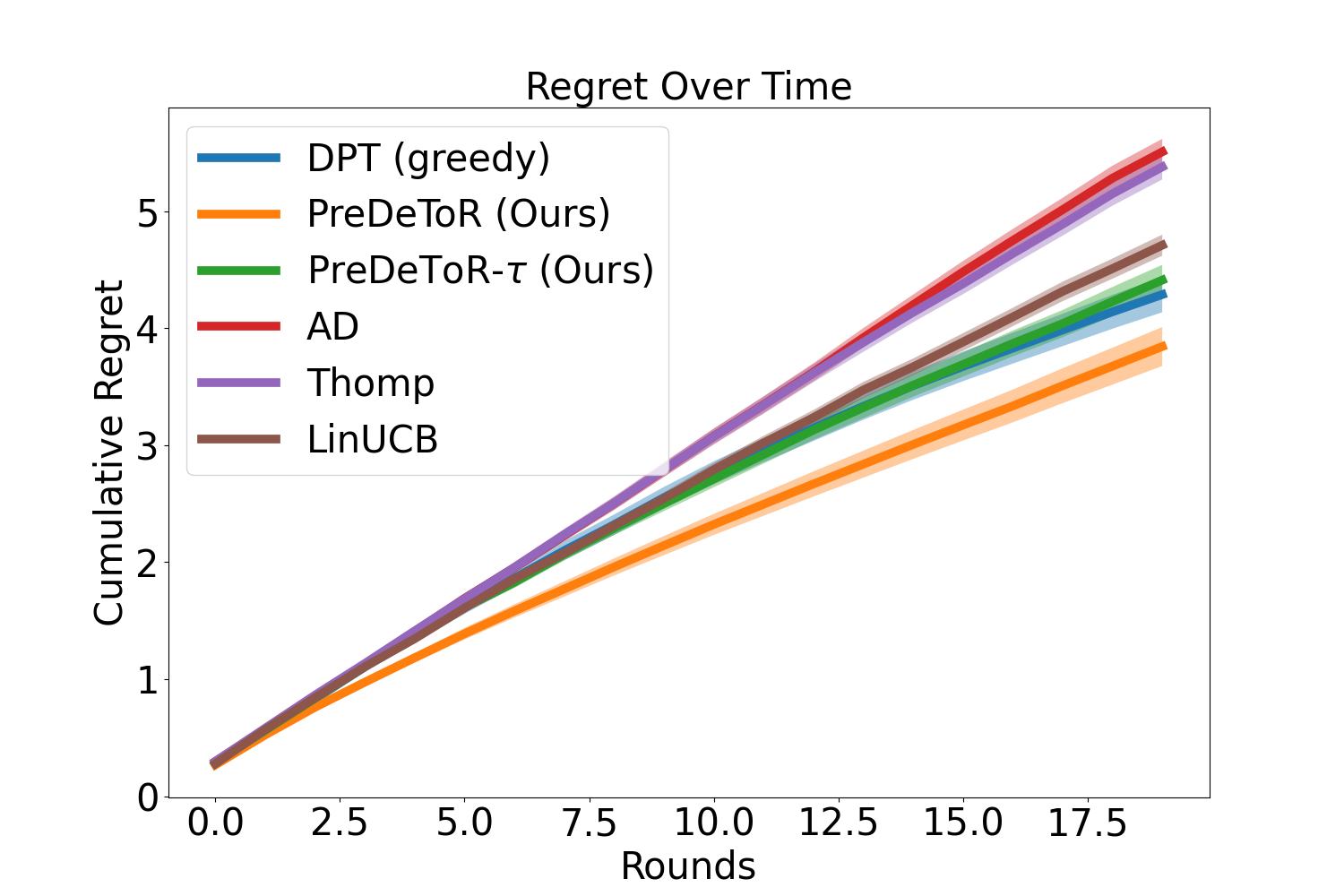}
   \caption{Attention Heads $4$}
   \label{fig:head-4}
\end{subfigure}%
\begin{subfigure}[b]{0.32\textwidth}
   \includegraphics[scale=0.13]{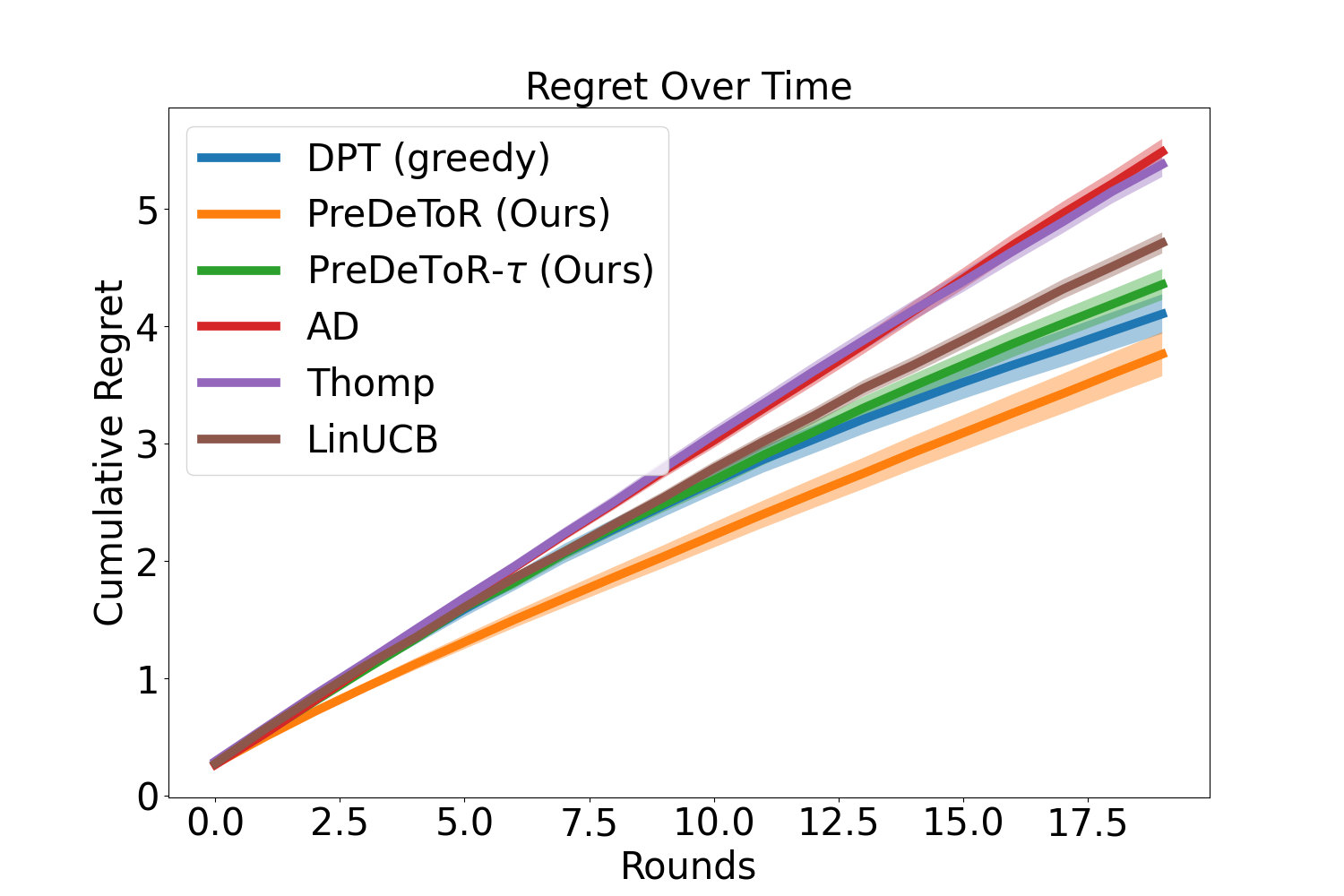}
   \caption{Attention Heads $6$}
   \label{fig:head-6}
\end{subfigure}

\begin{subfigure}[b]{0.32\textwidth}
   \includegraphics[scale=0.13]{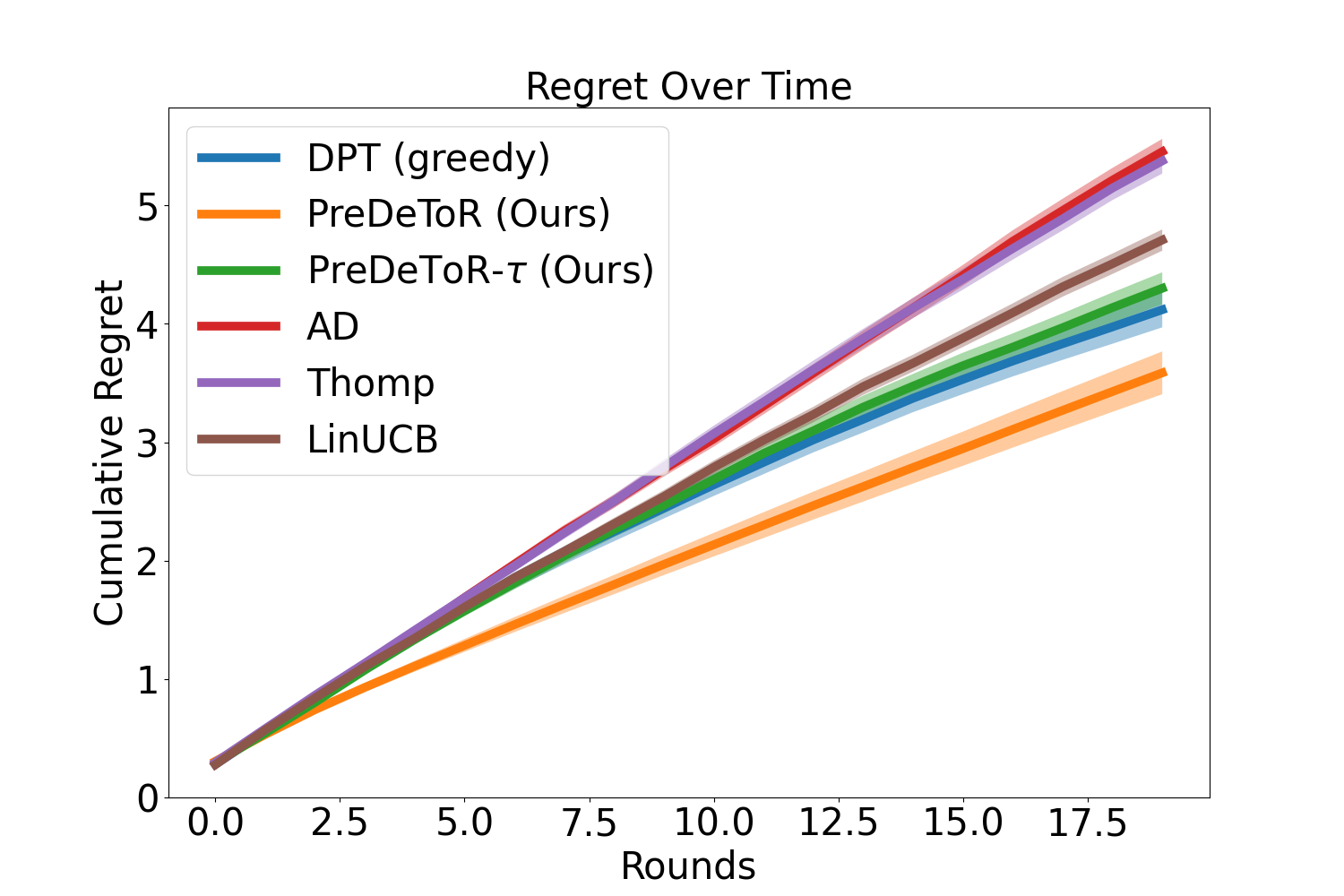}
   \caption{Attention Heads $8$}
   \label{fig:head-8}
\end{subfigure}%
\begin{subfigure}[b]{0.32\textwidth}
   \includegraphics[scale=0.13]{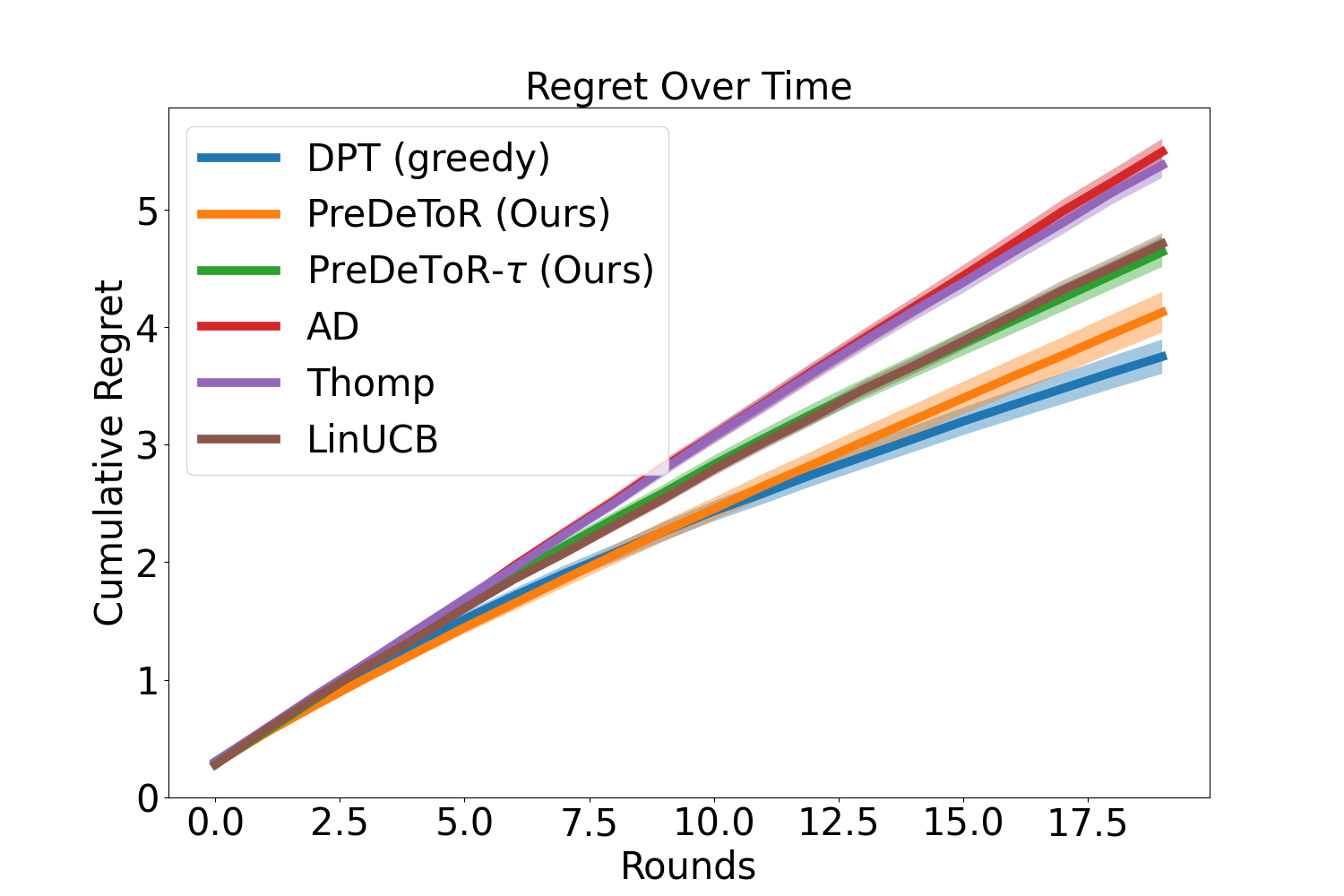}
   \caption{Attention Heads $12$}
   \label{fig:head-12}
\end{subfigure}%
\begin{subfigure}[b]{0.32\textwidth}
   \includegraphics[scale=0.13]{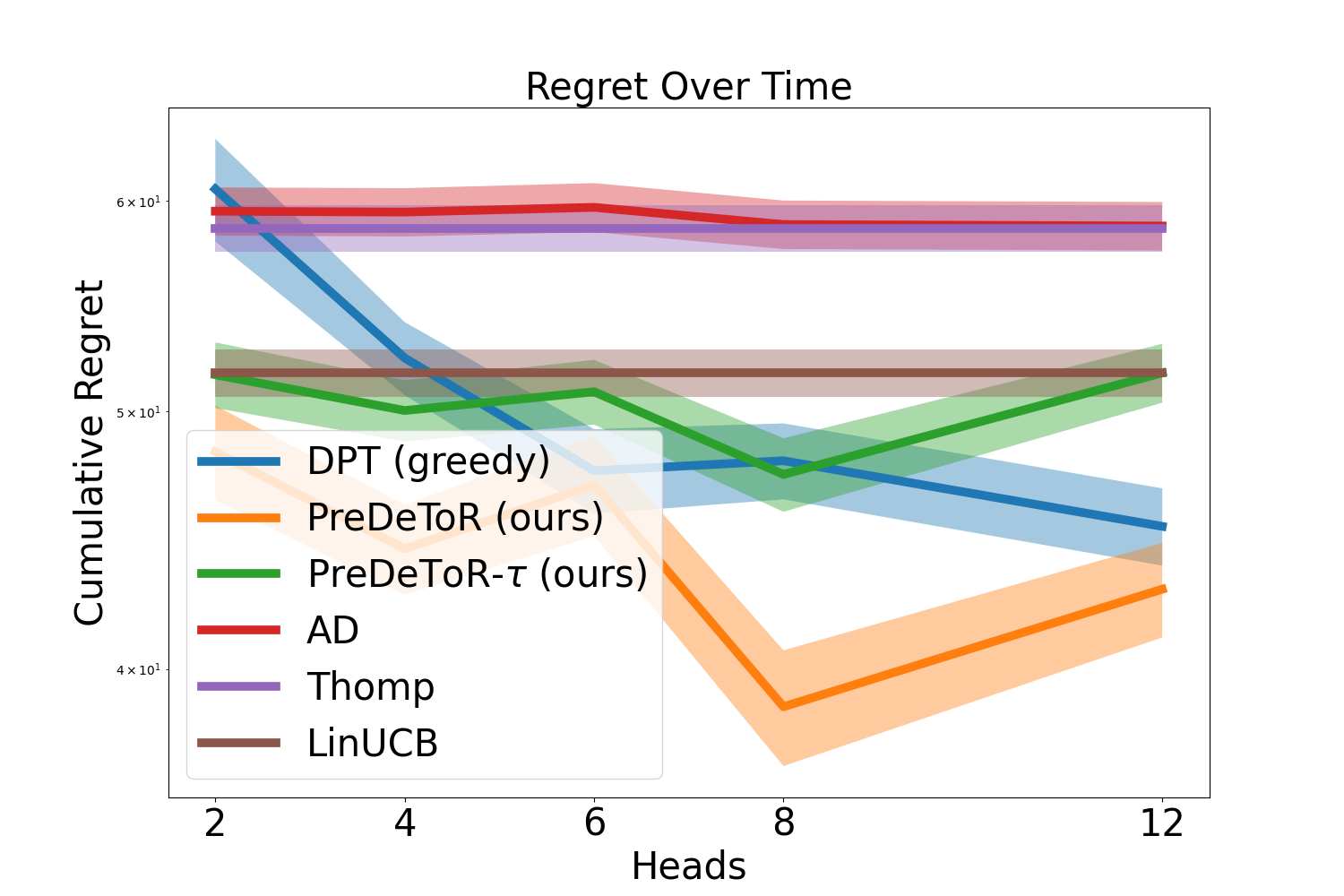}
   \caption{Increasing Attention Heads}
   \label{fig:head-inc}
\end{subfigure}
\vspace*{-1em}
\caption{Experiment with increasing attention heads. The y-axis shows the cumulative regret.}
\label{fig:expt-head}
\vspace{-0.7em}
\end{figure}

\textbf{Experimental Result:} We observe these outcomes in \Cref{fig:expt-head}. In \Cref{fig:expt-head} we show the non-linear bandit setting for horizon $n=20$, $\Mpr =  160000$, $\Mts = 200$, $A=20$, $\mathrm{heads}=\{2, 4, 6, 8\}$ and $d=5$. 
Again, the demonstrator $\pi^w$ is the \ts\ algorithm. We observe that \pred\ (\gt) has lower cumulative regret than \dptg, \ad. Note that for any task $m$ for the horizon $20$ the \ts\ will be able to sample all the actions atmost once.
%
%
Observe from \Cref{fig:head-2}, \ref{fig:head-4}, \ref{fig:head-6}, and \ref{fig:head-8} that \pred\ (\gt) has lower regret than \ad, \ts\ and \linucb\ which also shows that \pred\ (\gt) is exploiting the latent linear structure of the underlying tasks for the non-linear setting.
%
%
In \Cref{fig:head-inc} we plot the regret of all the baselines with respect to the increasing attention heads. Again we see that \pred\ (\gt) regret decreases as we increase the attention heads. 


\subsection{Empirical Study: Increasing Number of Tasks}
\label{sec:envs}
In this section, we discuss the performance of \pred\ with respect to the increasing number of tasks for the linear bandit setting. Again note that the number of tasks $\Mpr \gg A \geq n$.
Through this experiment, we want to evaluate the performance of \pred\ to exploit the underlying reward correlation when the horizon is small and the number of tasks is changing.
Finally, recall that when the horizon is small the weak demonstrator $\pi^w$ does not have sufficient samples for each action. This leads to a poor approximation of the greedy action.

\textbf{Baselines:} We again implement the same baselines discussed in \Cref{sec:short-horizon}. The baselines are \pred, \predt, \dptg, \ad, \ts, and \linucb.

\textbf{Outcomes:} We first discuss the main outcomes of our experimental results for increasing the horizon:

\begin{tcolorbox}
\customfinding \pred\ (\gt) fails to exploit the underlying latent structure and reward correlation from in-context data when the number of tasks is small. 
\end{tcolorbox}

\begin{figure}[!hbt]
\centering
\begin{subfigure}[b]{0.32\textwidth}
   \includegraphics[scale=0.13]{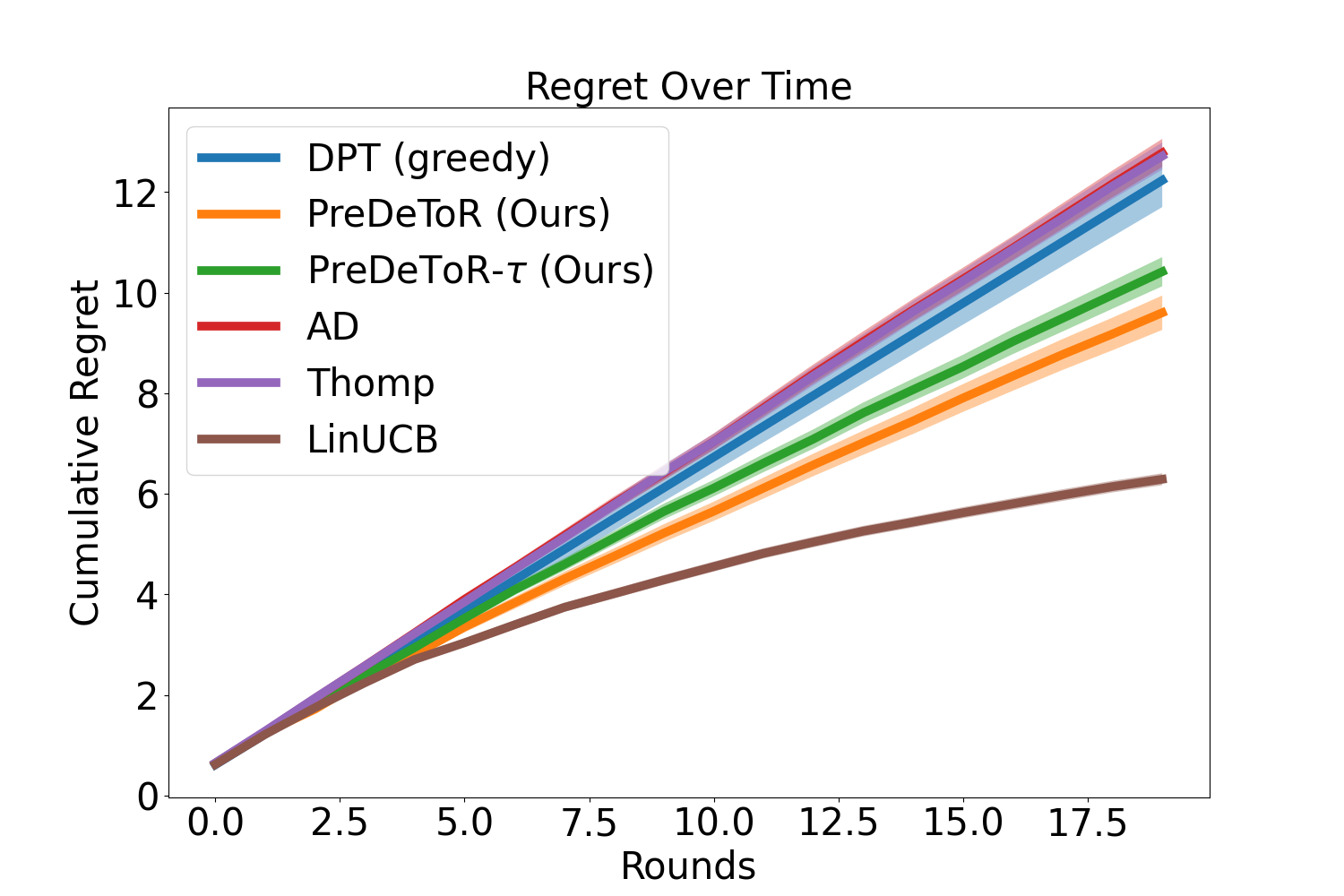}
   \caption{Tasks $\Mtr=5000$}
   \label{fig:env-5}
\end{subfigure}%
\begin{subfigure}[b]{0.32\textwidth}
   \includegraphics[scale=0.13]{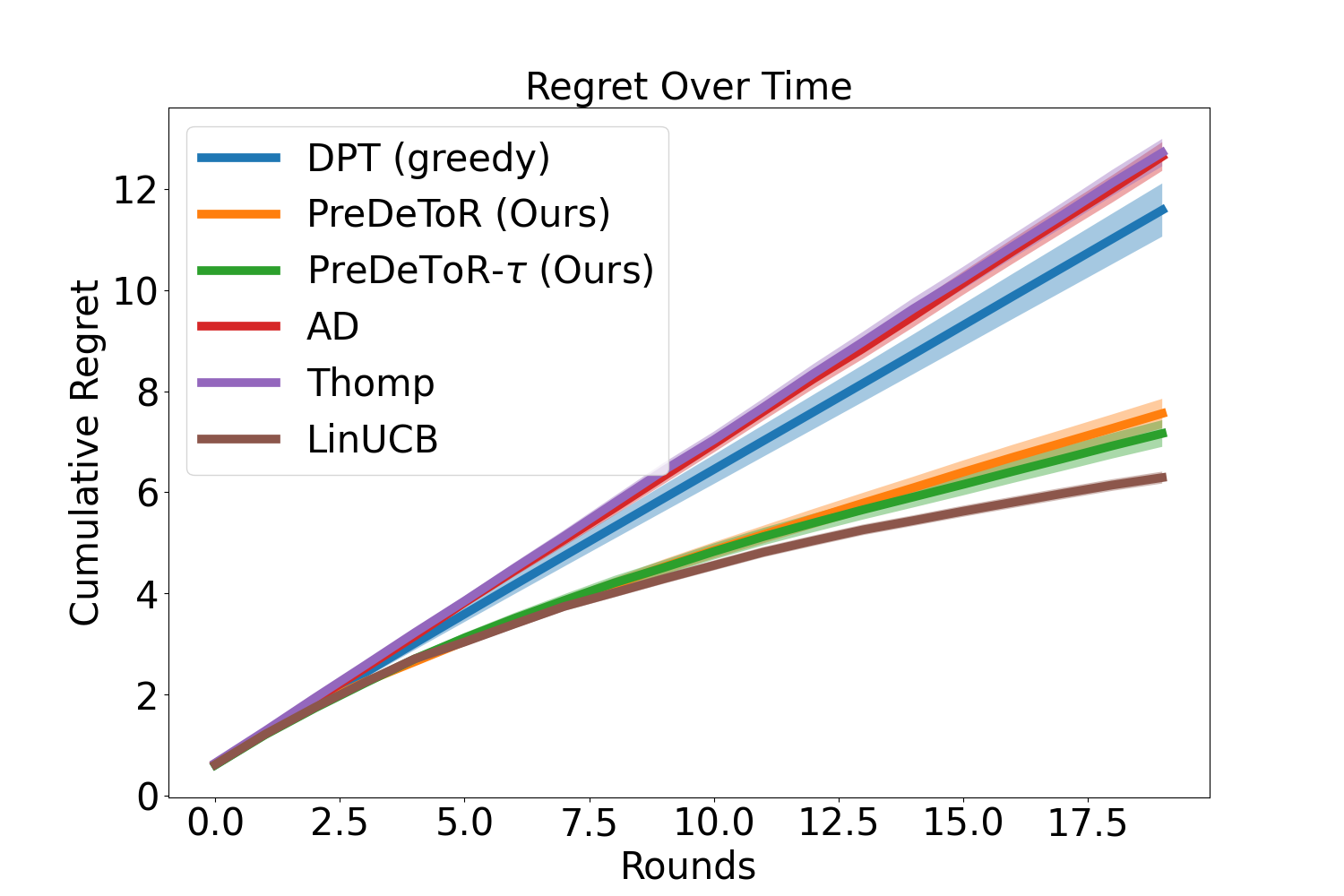}
   \caption{Tasks $\Mtr=10000$}
   \label{fig:env-10}
\end{subfigure}%
\begin{subfigure}[b]{0.32\textwidth}
   \includegraphics[scale=0.13]{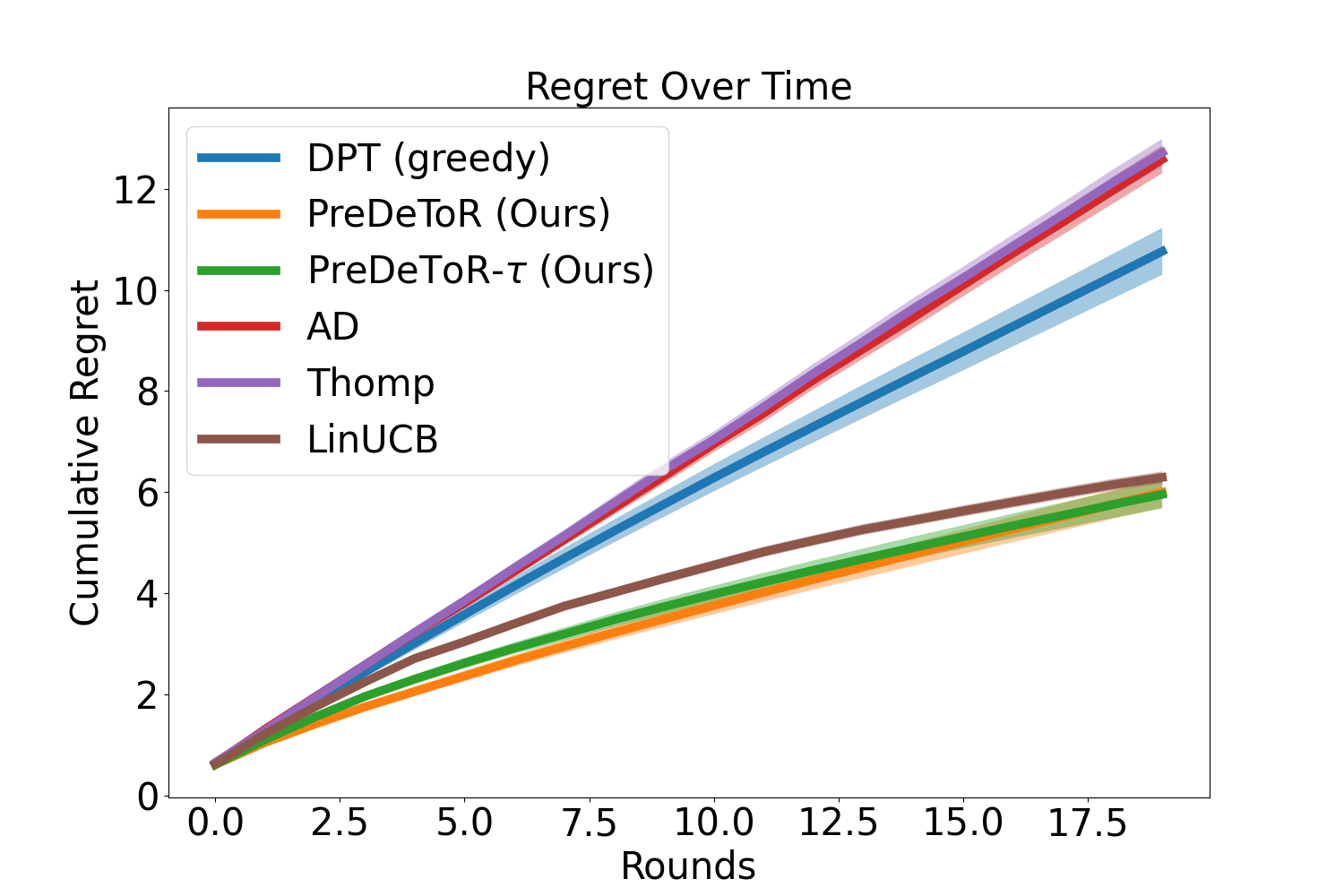}
   \caption{Tasks $\Mtr=50000$}
   \label{fig:env-50}
\end{subfigure}

\begin{subfigure}[b]{0.32\textwidth}
   \includegraphics[scale=0.13]{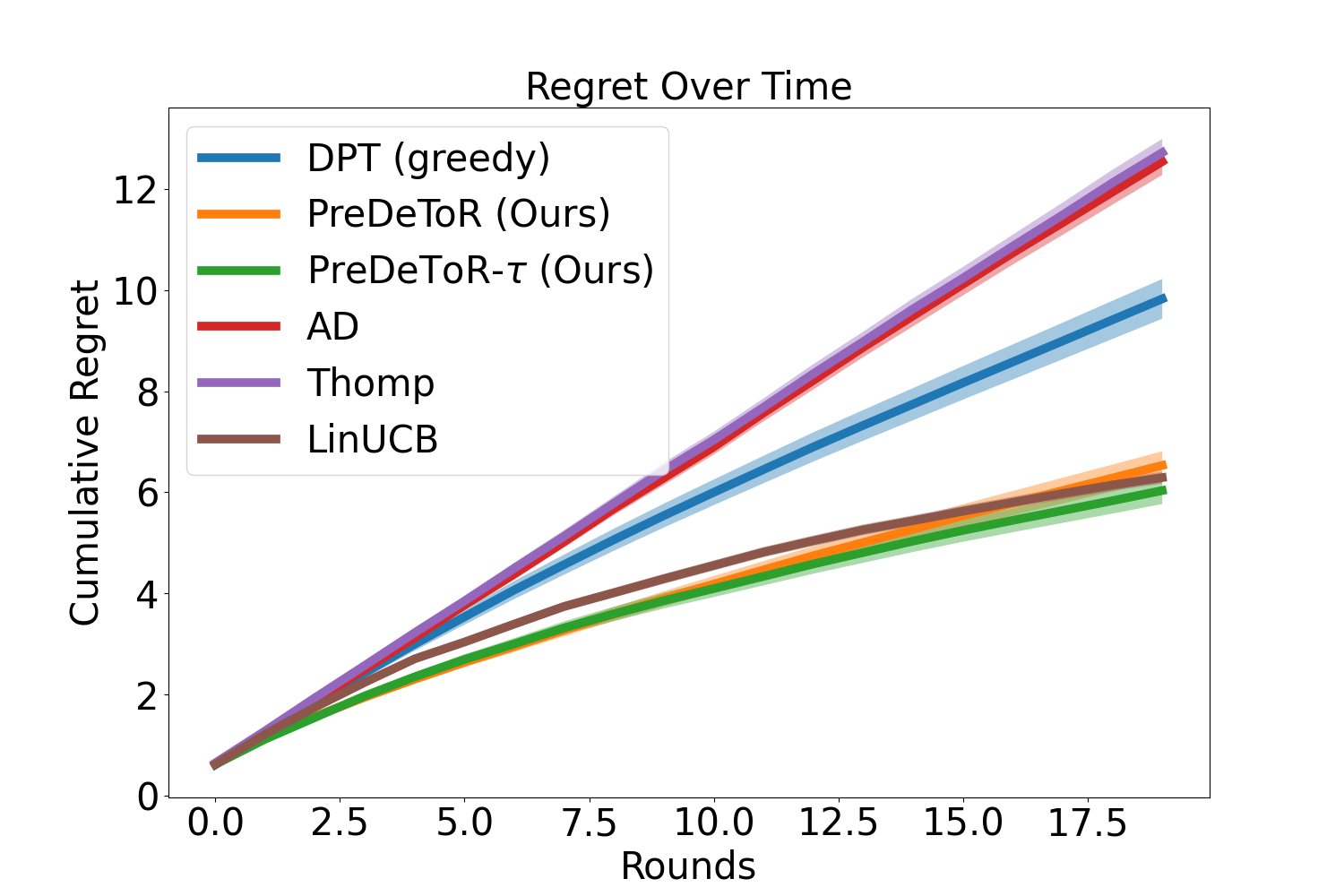}
   \caption{Tasks $\Mtr=100000$}
   \label{fig:env-100}
\end{subfigure}%
\begin{subfigure}[b]{0.32\textwidth}
   \includegraphics[scale=0.13]{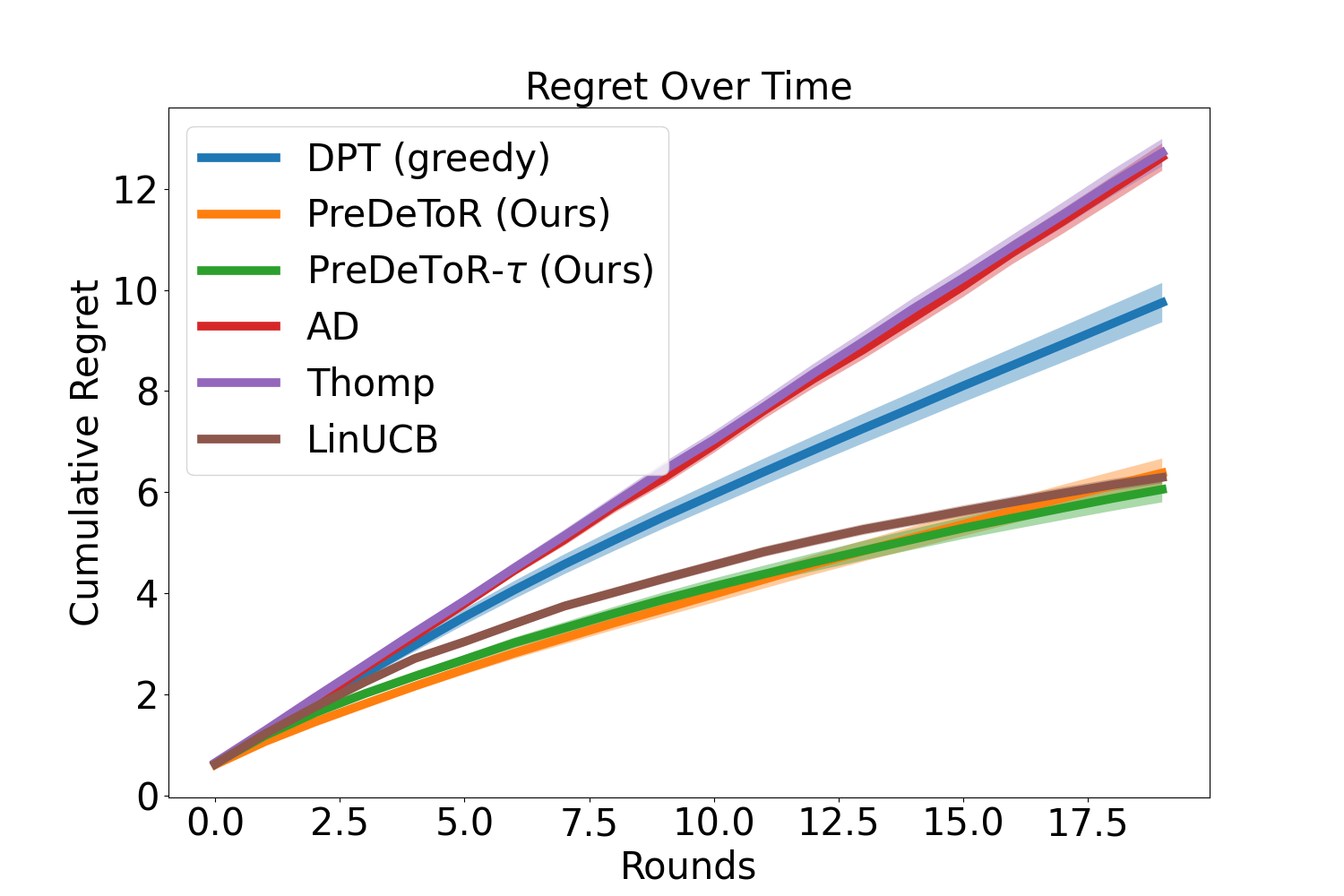}
   \caption{Tasks $\Mtr=150000$}
   \label{fig:env-150}
\end{subfigure}%
\begin{subfigure}[b]{0.32\textwidth}
   \includegraphics[scale=0.13]{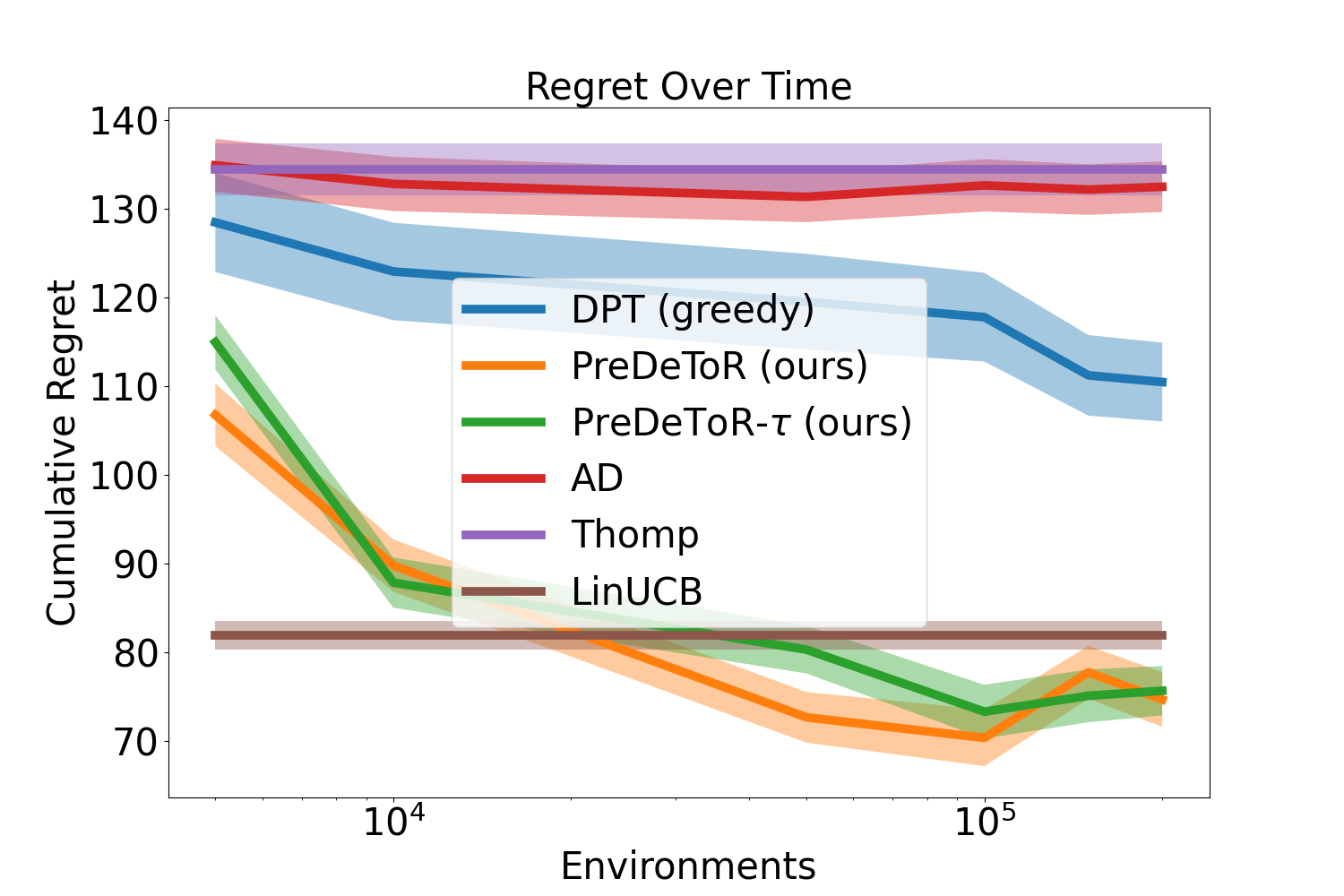}
   \caption{Increasing tasks}
   \label{fig:env-200}
\end{subfigure}
\vspace*{-1em}
\caption{Experiment with an increasing number of tasks. The y-axis shows the cumulative regret.}
\label{fig:expt-inc-env}
\vspace{-0.7em}
\end{figure}

\textbf{Experimental Result:} We observe these outcomes in \Cref{fig:expt-inc-env}. In \Cref{fig:expt-inc-env} we show the linear bandit setting for horizon $n=20$, $\Mpr  \in \{5000, 10000, 50000, 100000, 150000\}$, $\Mts = 200$, $A=20$, and $d=40$. 
Again, the demonstrator $\pi^w$ is the \ts\ algorithm. We observe that \pred\ (\gt), \ad\ and \dptg\ suffer more regret than the \linucb\ when the number of tasks is small ($\Mtr\in\{5000, 10000\}$ in \Cref{fig:env-5}, and \ref{fig:env-10}. However in \Cref{fig:env-50}, \ref{fig:env-100}, \ref{fig:env-150}, and \ref{fig:env-200} we show that \pred\ has lower regret than \ts\ and matches \linucb. This shows that \pred\ (\gt) is exploiting the latent linear structure of the underlying tasks for the non-linear setting.
Moreover, observe that as $\Mtr$ increases the \pred\ has lower cumulative regret than \dptg, \ad. Note that for any task $m$ for the horizon $20$ the \ts\ will be able to sample all the actions at most once. Therefore \dptg\ does not perform as well as \pred.
%
%
Finally, note that the result shows that \pred\ (\gt) is able to exploit the reward correlation across the tasks better as the number of tasks increases.

%
%


\subsection{Exploration of \pred (\gt) in New Arms Setting}
\label{sec:new-arms-app}
In this section, we discuss the exploration of \pred\ (\gt) in the linear and non-linear new arms bandit setting discussed in \Cref{sec:new-actions}.  Recall that we consider the linear bandit setting of horizon $n=50$, $\Mpr = 200000$, $\Mts = 200$, $A=20$, and $d=5$. 
Here during data collection and during collecting the test data, we randomly select one new action from $\R^d$ for each task $m$. 
So the number of invariant actions is $|\Anc| = 19$.

\textbf{Outcomes:} We first discuss the main outcomes of our analysis of exploration in the low-data regime:

\begin{tcolorbox}
    \customfinding The \pred\ (\gt) is robust to changes when the number of in-variant actions is large. \pred\ (\gt) performance drops as shared structure breaks down.
\end{tcolorbox}



We first show in \Cref{fig:train-dist-new-1} the training distribution of the optimal actions. For each bar, the frequency indicates the number of tasks where the action (shown in the x-axis) is the optimal action.

Then in \Cref{fig:analysis-epxploration-new-1} we show how the sampling distribution of \dptg, \pred\, and \predt\ change in the first $10$ and last $10$ rounds for all the tasks where action $17$ is optimal. We plot this graph the same way as discussed in \Cref{sec:linear}.
%
%
From the figure \Cref{fig:analysis-epxploration-new-1} we see that \pred (\gt) consistently pulls the action $17$ more than \dptg. It also explores other optimal actions like $\{1,2,3,8,9,15\}$ but discards them quickly in favor of the optimal action $17$ in these tasks. 

Finally, we plot the feasible action set considered by \dptg, \pred, and \predt\ in \Cref{fig:analysis-exploration-time-new-1}. To plot this graph again we consider the test tasks where the optimal action is $17$. Then we count the number of distinct actions that are taken from round $t$ up until horizon $n$. Finally we average this over all the considered tasks where the optimal action is $17$. We call this the candidate action set considered by the algorithm. From the \Cref{fig:analysis-exploration-time-new-1} we see that \predt\ explores more than \pred\ in this setting.


We also show how the prediction error of the optimal action by \pred\ compared to \linucb\ in this $1$ new arm linear bandit setting. In \Cref{fig:short-horizon-action-dist-new-1} we first show how the $20$ actions are distributed in the $\Mts=200$ test tasks. In \Cref{fig:short-horizon-action-dist-new-1} for each bar, the frequency indicates the number of tasks where the action (shown in the x-axis) is the optimal action. Then in \Cref{fig:short-horizon-action-error-new-1} we show the prediction error of \pred\ (\gt) for each task $m\in[\Mts]$. The prediction error is calculated the same way as stated in \Cref{sec:new-actions} 
From the \Cref{fig:short-horizon-action-error-new-1} we see that for most actions the prediction error of \pred\ (\gt) is closer to \linucb\ showing that the introduction of $1$ new action does not alter the prediction error much.
%
Note that \linucb\ estimates the empirical mean directly from the test task, whereas \pred\ has a strong prior based on the training data. Therefore we see that \pred\ is able to estimate the reward of the optimal action quite well from the training dataset $\Dpr$.
%


\begin{figure}[!hbt]
\centering
\begin{subfigure}[b]{0.32\textwidth}
    \includegraphics[scale=0.27]{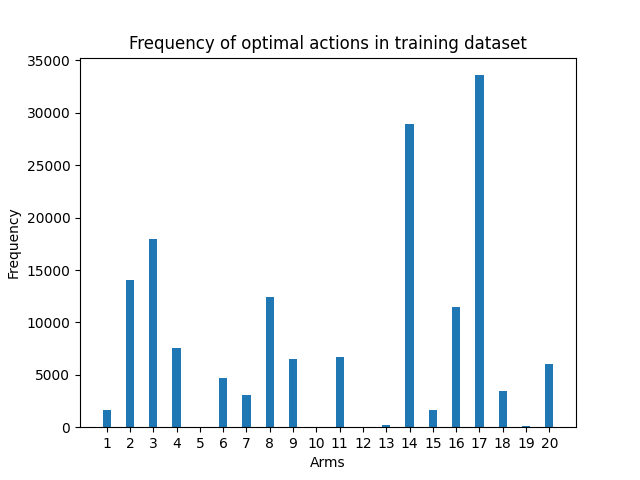}
    \caption{Train Optimal Action Distribution}
    \label{fig:train-dist-new-1}
\end{subfigure}%
\begin{subfigure}[b]{0.32\textwidth}
    \includegraphics[scale=0.27]{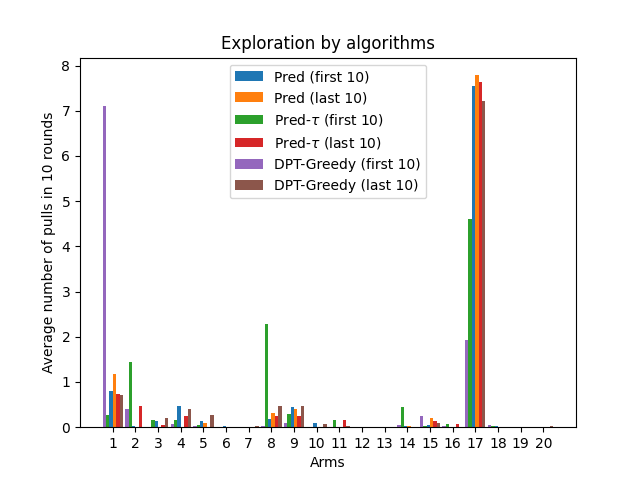}
    \caption{Distribution of action sampling in all tasks where action $17$ is optimal}
    \label{fig:analysis-epxploration-new-1}
\end{subfigure}%
\begin{subfigure}[b]{0.32\textwidth}
    \includegraphics[scale=0.27]{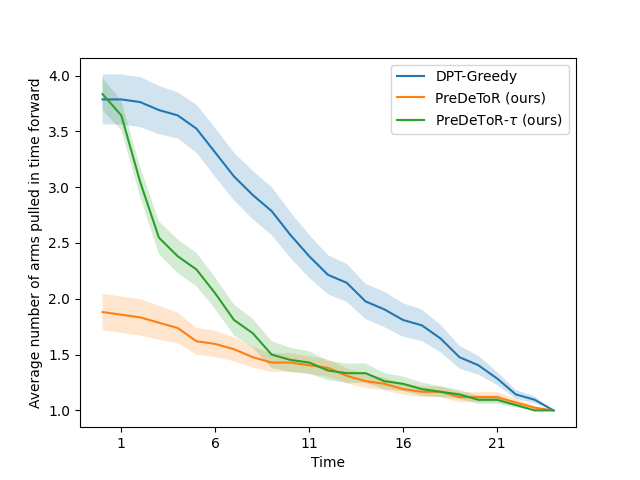}
    \caption{Candidate Action Set in Time averaged over all tasks where action $17$ is optimal}
    \label{fig:analysis-exploration-time-new-1}
\end{subfigure}
\caption{Exploration Analysis of \pred (\gt) in linear 1 new arm setting}
\label{fig:exploration-analysis-time-new-1}
\end{figure}

\begin{figure}[!hbt]
\centering
\vspace*{-1em}
\begin{subfigure}[b]{0.48\textwidth}
    \includegraphics[scale=0.27]{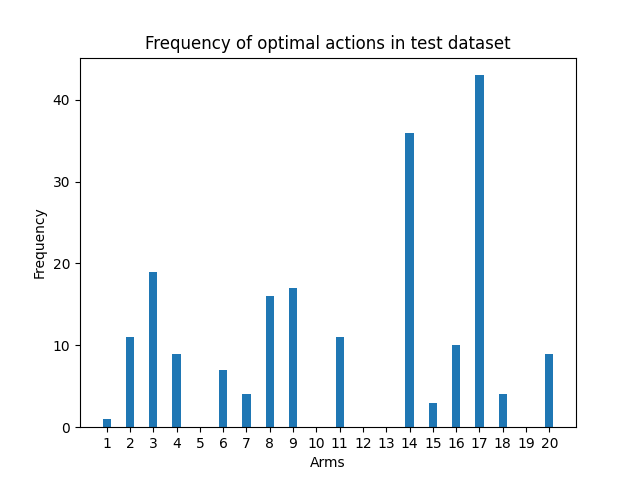}
    \caption{Test action distribution}
    \label{fig:short-horizon-action-dist-new-1}
\end{subfigure}%
\begin{subfigure}[b]{0.48\textwidth}
    \includegraphics[scale=0.27]{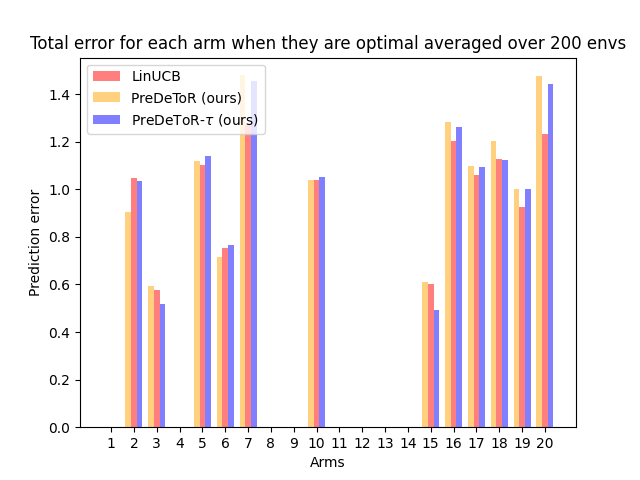}
    \caption{Test Prediction Error}
    \label{fig:short-horizon-action-error-new-1}
\end{subfigure}
\vspace*{-1em}
\caption{Prediction error of \pred (\gt) in linear 1 new arm setting}
\label{fig:expt-short-horizon-new-1}
\vspace{-0.7em}
\end{figure}

We now consider the setting where the number of invariant actions is $|\Anc| = 15$.
We again show in \Cref{fig:train-dist-new-5} the training distribution of the optimal actions. For each bar, the frequency indicates the number of tasks where the action (shown in the x-axis) is the optimal action. Then in \Cref{fig:analysis-epxploration-new-5} we show how the sampling distribution of \dptg, \pred\, and \predt\ change in the first $10$ and last $10$ rounds for all the tasks where action $17$ is optimal. We plot this graph the same way as discussed in \Cref{sec:linear}.
%
%
From the figure \Cref{fig:analysis-epxploration-new-5} we see that none of the algorithms \pred, \predt, \dptg\ consistently pulls the action $17$ more than other actions. This shows that the common underlying actions across the tasks matter for learning the epxloration.

Finally, we plot the feasible action set considered by \dptg, \pred, and \predt\ in \Cref{fig:analysis-exploration-time-new-5}. To plot this graph again we consider the test tasks where the optimal action is $17$. We build the candidate set the same way as before. From the \Cref{fig:analysis-exploration-time-new-5} we see that none of the three algorithms \dptg, \pred, \predt, is able to sample the optimal action $17$ sufficiently high number of times.


We also show how the prediction error of the optimal action by \pred\ compared to \linucb\ in this $1$ new arm linear bandit setting. In \Cref{fig:short-horizon-action-dist-new-5} we first show how the $20$ actions are distributed in the $\Mts=200$ test tasks. In \Cref{fig:short-horizon-action-dist-new-5} for each bar, the frequency indicates the number of tasks where the action (shown in the x-axis) is the optimal action. Then in \Cref{fig:short-horizon-action-error-new-5} we show the prediction error of \pred\ (\gt) for each task $m\in[\Mts]$. The prediction error is calculated the same way as stated in \Cref{sec:new-actions}. 
From the \Cref{fig:short-horizon-action-error-new-5} we see that for most actions the prediction error is higher than \linucb\ showing that the introduction of $5$ new actions (and thereby decreasing the invariant action set) significantly alters the prediction error.

\begin{figure}[!hbt]
\centering
\begin{subfigure}[b]{0.32\textwidth}
   \includegraphics[scale=0.27]{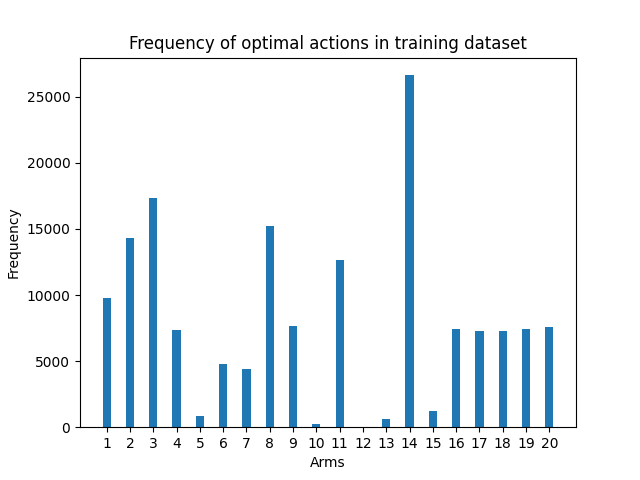}
   \caption{Train Optimal Action Distribution}
   \label{fig:train-dist-new-5}
\end{subfigure}%
\begin{subfigure}[b]{0.32\textwidth}
   \includegraphics[scale=0.27]{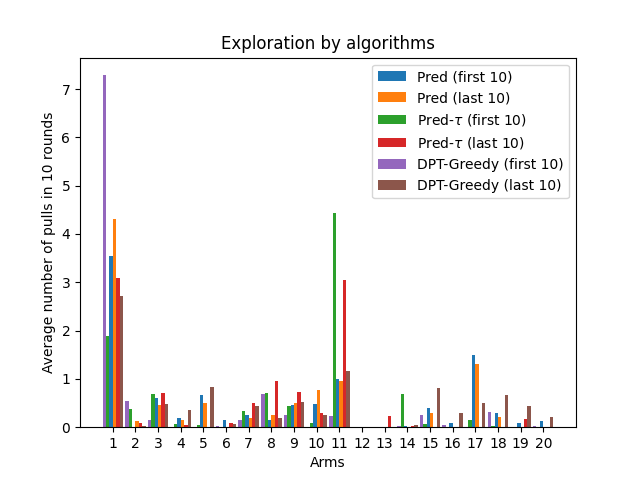}
   \caption{Distribution of action sampling in all tasks where action $17$ is optimal}
   \label{fig:analysis-epxploration-new-5}
\end{subfigure}%
\begin{subfigure}[b]{0.32\textwidth}
   \includegraphics[scale=0.27]{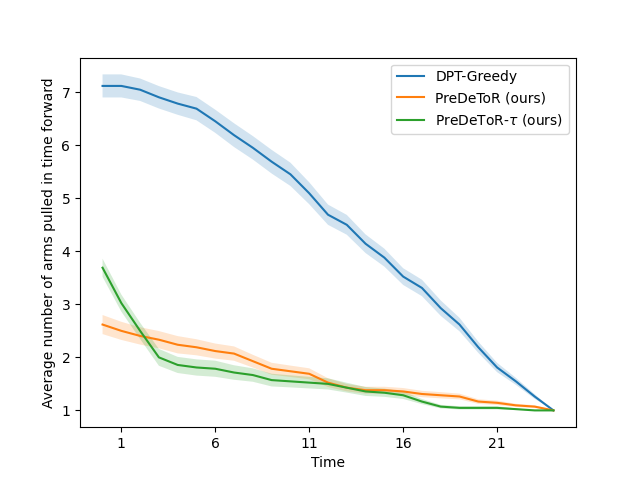}
   \caption{Candidate Action Set in Time averaged all tasks where action $17$ is optimal}
   \label{fig:analysis-exploration-time-new-5}
\end{subfigure}
\caption{Exploration Analysis of \pred (\gt) in linear 5 new arm setting}
\label{fig:exploration-analysis-time-new-5}
\end{figure}

\begin{figure}[!hbt]
\centering
\vspace*{-1em}
\begin{subfigure}[b]{0.48\textwidth}
   \includegraphics[scale=0.27]{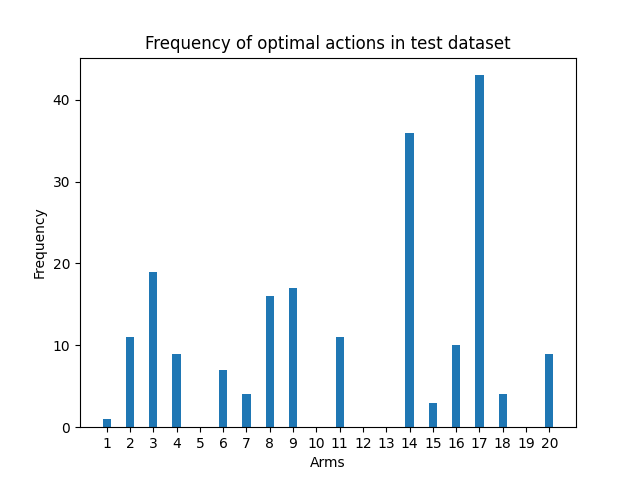}
   \caption{Test action distribution}
   \label{fig:short-horizon-action-dist-new-5}
\end{subfigure}%
\begin{subfigure}[b]{0.48\textwidth}
   \includegraphics[scale=0.27]{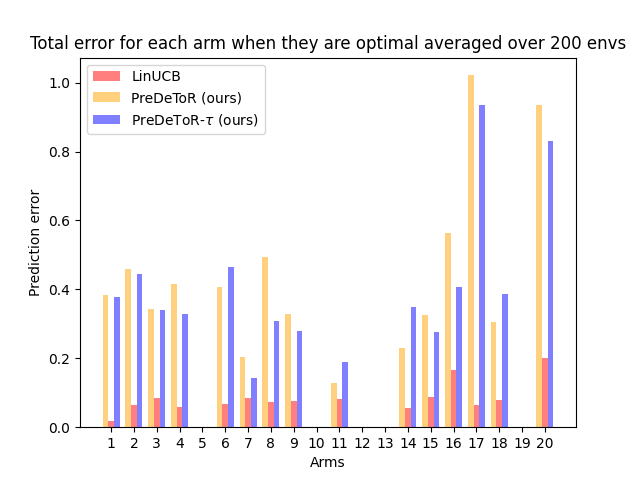}
   \caption{Test Prediction Error}
   \label{fig:short-horizon-action-error-new-5}
\end{subfigure}
\vspace*{-1em}
\caption{Prediction error of \pred (\gt) in linear 1 new arm setting}
\label{fig:expt-short-horizon-new-5}
\vspace{-0.7em}
\end{figure}


\subsection{Empirical Validation of Theoretical Result}
\label{sec:validation}
In this section, we empirically validate the theoretical result proved in \Cref{sec:theory}. We again consider the linear bandit setting discussed in \Cref{sec:short-horizon}. Recall that the linear bandit setting consist of horizon $n=25$, $\Mpr = \{100000, 200000\}$, $\Mts = 200$, $A=10$, and $d=2$. 
The demonstrator $\pi^w$ is the \ts\ algorithm and we observe that \pred\ (\gt) has lower cumulative regret than \dptg, \ad\ and matches the performance of \linucb. 

\textbf{Baseline (\linucbt):} We define soft LinUCB (\linucbt) as follows: At every round $t$ for task $m$, it calculates the ucb value $B_{m,a,t}$ for each action $\bx_{m,a} \in \X$ such that $B_{m,a,t} = \bx_{m,a}^\top \wtheta_{m,t-1} + \alpha\|\bx_{m,a}\|_{\bSigma_{m,t-1}^{-1}}$ where $\alpha > 0$ is a constant and $\wtheta_{m,t}$ is the estimate of the model parameter $\btheta_{m, *}$ at round $t$. 
Here, $\bSigma_{m,t-1} = \sum_{s=1}^{t-1}\bx_{m,s}\bx_{m,s}^\top +\lambda\bI_d$ is the data covariance matrix or the arms already tried.
Then it chooses $I_t \sim \mathrm{softmax}_a^\tau(B_{m,a,t})$, where $\mathrm{softmax}_a^\tau(\cdot)\in\triangle^A$ denotes a softmax distribution over the actions and $\tau$ is a temperature parameter (See \Cref{sec:short-horizon} for definition of $\mathrm{softmax}_a^\tau(\cdot)$).

\textbf{Outcomes:} We first discuss the main outcomes of our experimental results:

\begin{tcolorbox}
\customfinding \pred\ (\gt) excels in predicting the rewards for test tasks when the number of training (source) tasks is large. 
\end{tcolorbox}




\begin{figure}[!hbt]
\centering
\begin{subfigure}[b]{0.32\textwidth}
   \includegraphics[scale=0.27]{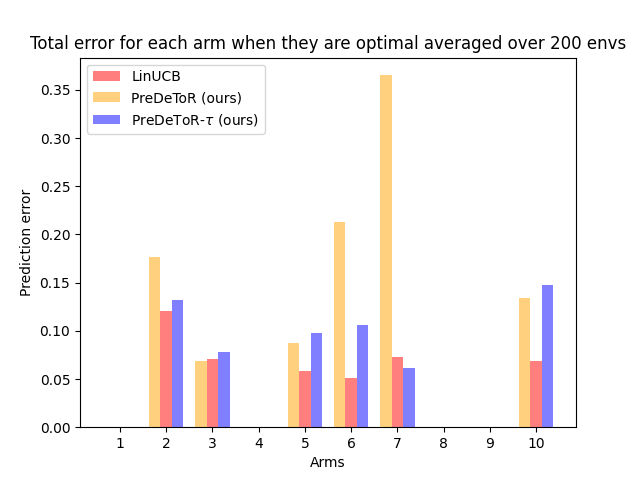}
   \caption{Prediction Error for $10^5$ tasks}
   \label{fig:prediction-error-small}
\end{subfigure}%
\begin{subfigure}[b]{0.32\textwidth}
   \includegraphics[scale=0.27]{img/Neurips/linear/analysis_total.png}
   \caption{Prediction Error for $2\times 10^5$ tasks}
   \label{fig:prediction-error-large}
\end{subfigure}%
\begin{subfigure}[b]{0.32\textwidth}
   \includegraphics[scale=0.13]{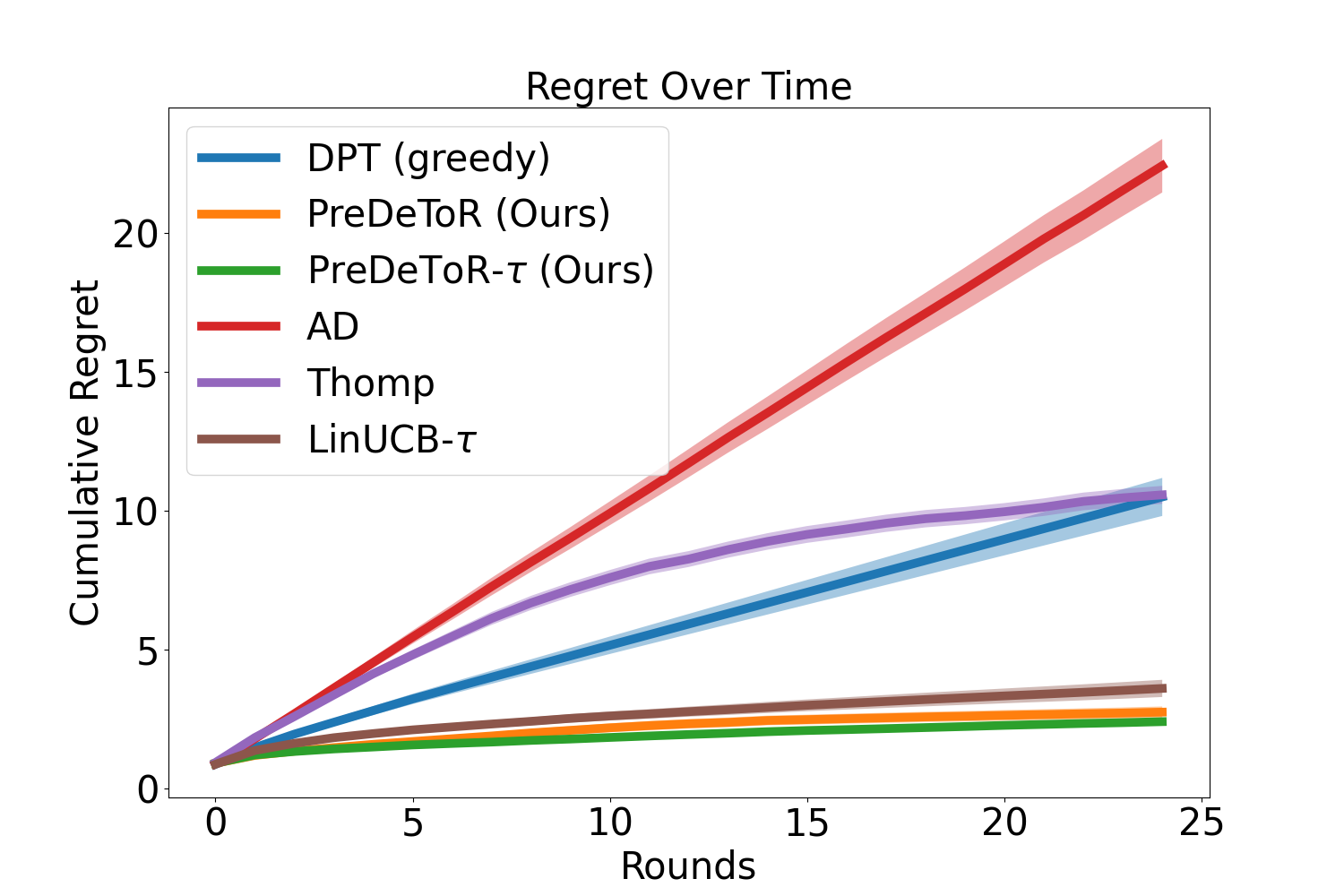}
   \caption{Cumulative Regret of \pred\ (\gt) compared against \linucbt}
   \label{fig:comparison-with-linucb}
\end{subfigure}
\caption{Empirical validation of theoretical analysis}
\label{fig:empirical-validation}
\end{figure}

\textbf{Experimental Result:} These findings are reported in \Cref{fig:empirical-validation}.
In \Cref{fig:prediction-error-small} we show the prediction error of \pred\ (\gt) for each task $m\in[\Mts]$. The prediction error is calculated as $(\wmu_{m,n, *}(a) - \mu_{m, *}(a))^2$ where $\wmu_{m,n, *}(a) = \max_a\wtheta_{m,n}^\top\bx_m(a)$ is the empirical mean at the end of round $n$, and $\mu_{*,m}(a)=\max_a\btheta_{m,*}^\top\bx_m(a)$ is the true mean of the optimal action in task $m$. Then we average the prediction error for the action $a\in [A]$ by the number of times the action $a$ is the optimal action in some task $m$. We see that when the source tasks are $100000$ the reward prediction falls short of \linucb\ prediction for all actions except action $2$.

In \Cref{fig:prediction-error-large} we again show the prediction error of \pred\ (\gt) for each task $m\in[\Mts]$ when the source tasks are $200000$. Note that in both these settings, we kept the horizon $n=25$, and the same set of actions. We now observe that the reward prediction almost matches \linucb\ prediction in almost all the optimal actions.

In \Cref{fig:comparison-with-linucb} we compare \pred\ (\gt) against \linucbt\ and show that they almost match in the linear bandit setting discussed in \Cref{sec:short-horizon} when the source tasks are $100000$. 

\subsection{Empirical Study: Offline Performance}
\label{sec:offline}
In this section, we discuss the offline performance of \pred\ when the number of tasks $\Mpr \gg A \geq n$.

We first discuss how \pred\ (\gt) is modified for the offline setting. 
In the offline setting, the \pred\ first samples a task $m\sim\cTs$, then the test dataset $\H_m\sim \Dts(\cdot|m)$. Then \pred\ and \predt\ act similarly to the online setting, but based on the entire offline dataset $\H_m$. 
%
The full pseudocode of \pred\ is in \Cref{alg:pred-off}.

\begin{algorithm}[!tbh]
\caption{\textbf{Pre}-trained \textbf{De}cision \textbf{T}ransf\textbf{o}rmer with \textbf{R}eward Estimation (\pred)}
\label{alg:pred-off}
    \begin{algorithmic}[1]
    \STATE \textbf{Collecting Pretraining Dataset} 
    \STATE Initialize empty pretraining dataset $\Htr$
    \FOR{$i$ in $[\Mpr]$ }
    \STATE Sample task $m \sim \cTp$, in-context dataset $\H_m \sim \Dpr(\cdot | m)$ and add this to $\Htr$.
    \ENDFOR
    \STATE \textbf{Pretraining model on dataset}
    \STATE Initialize model $\T_{\bTheta}$ with parameters $\bTheta$
    \WHILE{\text{not converged}}
    \STATE Sample $\H_m$ from $\Htr$ and predict $\wr_{m,t}$ for action $(I_{m,t})$ for all $t \in[n]$
    \STATE Compute loss in \eqref{eq:loss-transformer} with respect to $r_{m,t}$ and backpropagate to update model parameter $\bTheta$.
    \ENDWHILE
    \STATE \textbf{Offline test-time deployment}
    \STATE Sample unknown task $m \sim \cTs$, sample dataset $\H_m \sim \Dts(\cdot | m)$
    \STATE Use $\T_{\bTheta}$ on $m$ at round $t$ to choose 
    \begin{align*}
        I_t \begin{cases}
            = \argmax_{a\in\A} \T_{\bTheta}\left( \wr_{m,t}(a) \mid \H_m\right), & \textbf{\pred} \\
            \sim \textrm{softmax}^\tau_a \T_{\bTheta}\left( \wr_{m,t}(a) \mid \H_m\right), & \textbf{\predt}
        \end{cases}
    \end{align*}
    \end{algorithmic}
\end{algorithm}

Recall that $\Dts$ denote a distribution over all possible interactions that can be generated by $\pi^w$ during test time. 
For offline testing, first, a test task $m\sim\cTs$, and then an in-context test dataset $\H_m$ is collected such that $\H_m\sim\Dts(\cdot|m)$. 
%
%
Observe from \Cref{alg:pred-off} that in the offline setting, \pred\ first samples a task $m\sim\cTs$, and then a test dataset $\H_m\sim \Dts(\cdot|m)$ and acts greedily. 
Crucially in the offline setting the 
\pred\ does not add the observed reward $r_t$ at round $t$ to the dataset.
Through this experiment, we want to evaluate the performance of \pred\ to learn the underlying latent structure and reward correlation when the horizon is small.
Finally, recall that when the horizon is small the weak demonstrator $\pi^w$ does not have sufficient samples for each action. This leads to a poor approximation of the greedy action.

%

\textbf{Baselines:} We again implement the same baselines discussed in \Cref{sec:short-horizon}. The baselines are \pred, \predt, \dptg, \ad, \ts, and \linucb. During test time evaluation for offline setting the \dpt\ selects $I_t =  \widehat{a}_{m,t,*}$ where $\widehat{a}_{m,t,*} = \argmax_a \T_{\bTheta}(a|\H^t_m)$ is the predicted optimal action.

\textbf{Outcomes:} We first discuss the main outcomes of our experimental results for increasing the horizon:

\begin{tcolorbox}
\customfinding  \pred\ (\gt) performs comparably to \dptg\ and \ad\ in the offline setting. 
\end{tcolorbox}

\begin{figure}[!hbt]
\centering
\vspace*{-1em}
\begin{subfigure}[b]{0.48\textwidth}
   \includegraphics[scale=0.15]{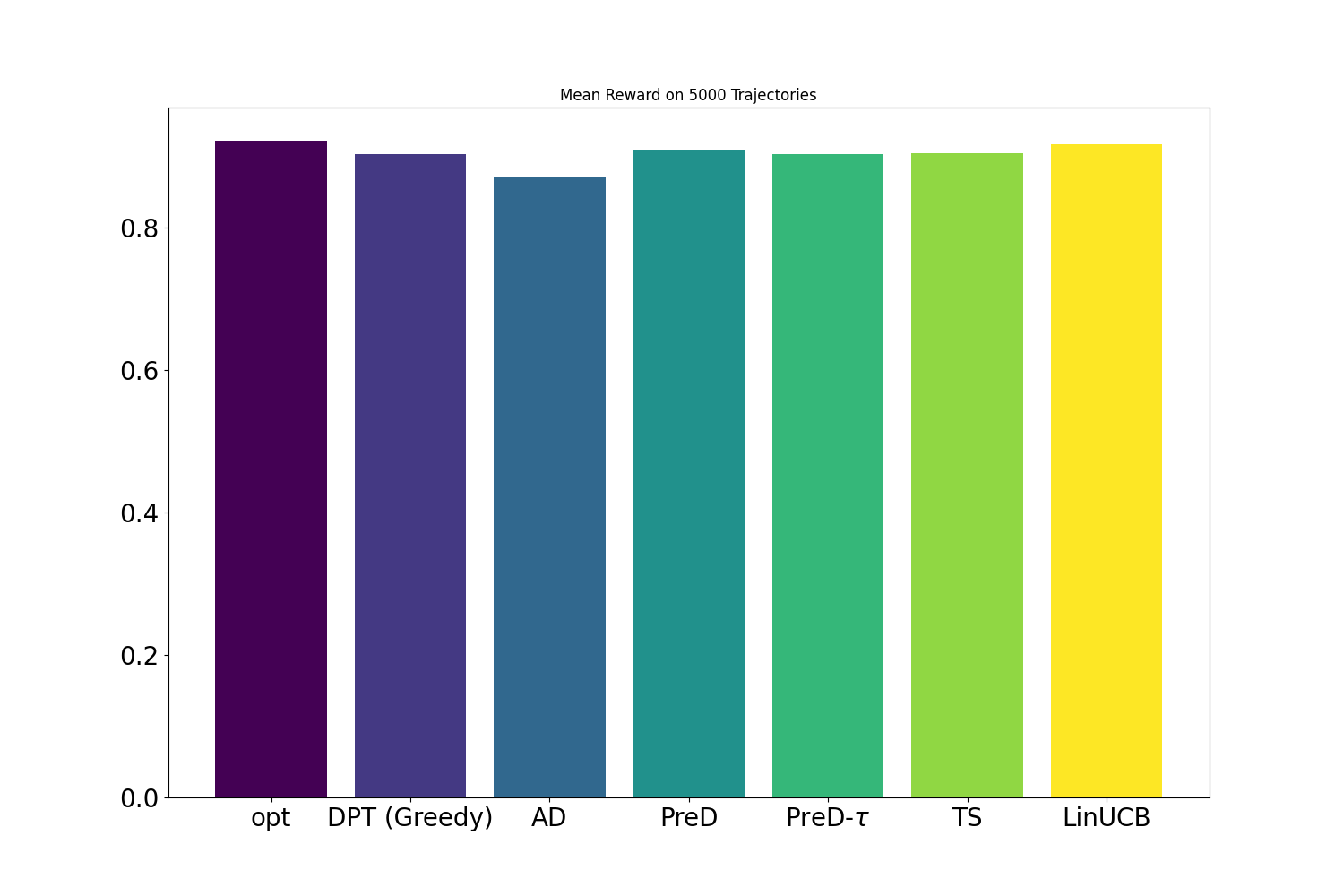}
   \caption{Offline for Linear setting}
   \label{fig:off-lin-small}
\end{subfigure}%
\begin{subfigure}[b]{0.48\textwidth}
   \includegraphics[scale=0.15]{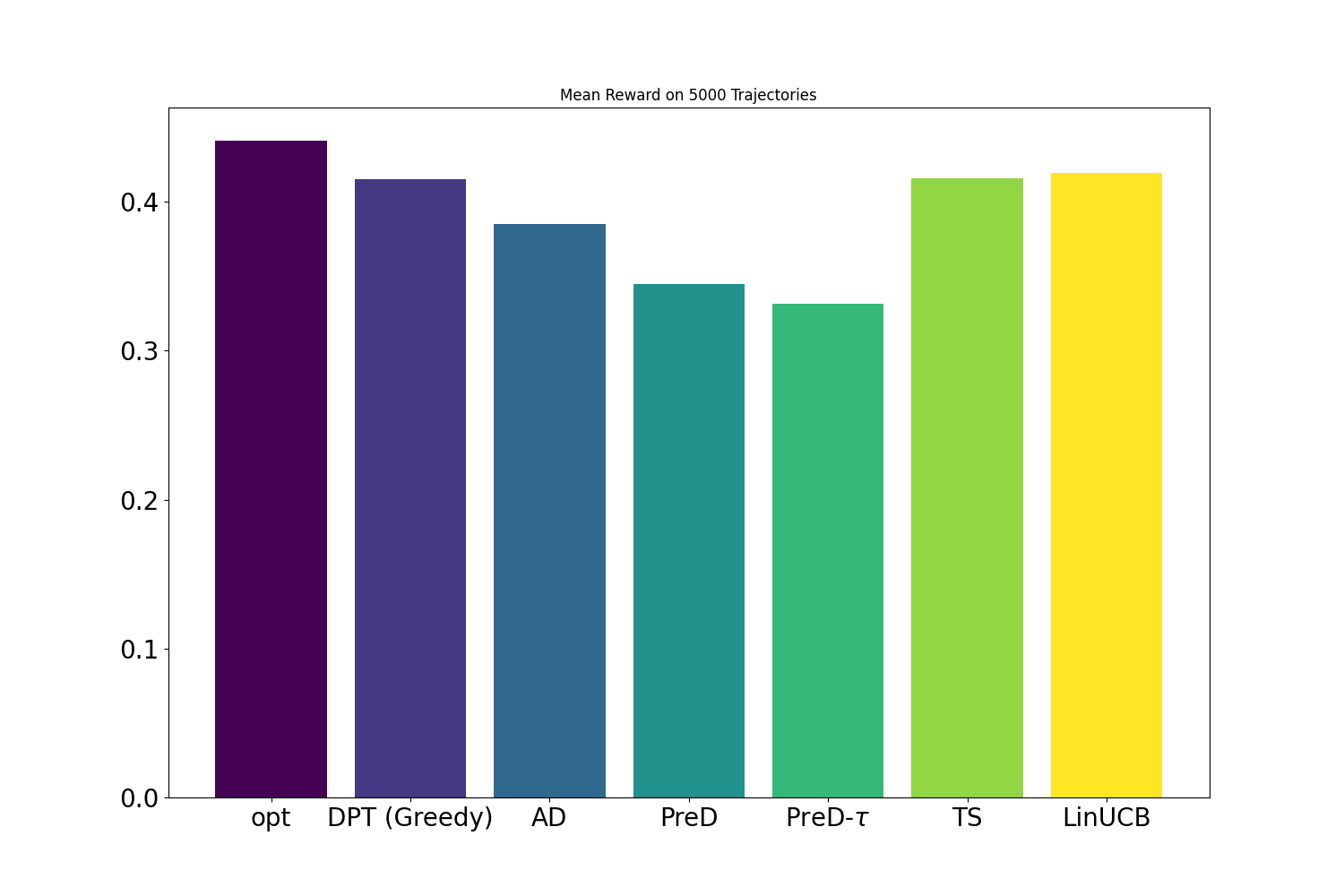}
   \caption{Offline for Non-linear setting}
   \label{fig:off-lin-large}
\end{subfigure}
\vspace*{-1em}
\caption{Offline experiment. The y-axis shows the cumulative reward.}
\label{fig:expt-offline}
\vspace{-0.7em}
\end{figure}

\textbf{Experimental Result:} We observe these outcomes in \Cref{fig:expt-offline}. In \Cref{fig:expt-offline} we show the linear bandit setting for horizon $n=20$, $\Mpr =  200000$, $\Mts = 5000$, $A=20$, and $d=5$ for the low data regime.
%
%
Again, the demonstrator $\pi^w$ is the \ts\ algorithm. We observe that \pred\ (\gt) has comparable cumulative regret to \dptg. Note that for any task $m$ for the horizon $n=20$ the \ts\ will be able to sample all the actions at most once. 
%
%
%
In the non-linear setting of \Cref{fig:off-lin-large} the $n=40$, $\Mpr =  100000$, $A=6$, $d=2$.
Observe that in all of these results, the performance of \pred\ (\gt) is comparable with respect to cumulative regret against \dptg.

\section{Theoretical Analysis}
\label{sec:theory-app}

\subsection{Proof of \Cref{lemma:bayes-reg-1}}
\label{app:proof-lemma-2}

\begin{proof}
The learner collects $n$ rounds of data following $\pi^w$. The weak demonstrator $\pi^w$ only observes the $\{I_t, r_t\}_{t=1}^n$. Recall that $N_n(a)$ denotes the total number of times the action $a$ is sampled for $n$ rounds. 
Define the matrix $\bH_n \in \R^{n \times A}$ where the $t$-th row represents the action sampled at round $t\in [n]$. 
The $t$-th row is a one-hot vector with $1$ as the $a$-th component in the vector for $a\in [A]$.
Then define the reward vector $\bY_n \in \R^n$ as the reward vector where the $t$-th component is the observed reward for the action $I_t$ for $t\in [n]$.
%
Finally define the diagonal matrix $\bD_A\in\R^{A\times A}$ as in \eqref{eq:D-matrix} and the estimated reward covariance matrix as $\bS_A\in\R^{A\times A}$ such that $\bS_A(a,a') = \wmu_n(a)\wmu_n(a')$. This matrix captures the reward correlation between the pairs of actions $a,a'\in [A]$.

Assume $\mu\sim\N(0, \bS_*)$ where $\bS_*\in\R^{A\times A}$. Then the observed mean vector $\bY_n$ is 
\begin{align*}
    \bY_n = \bH_n \mu + \bH_n \bD_A^{1/2} \eta_n
\end{align*}
where, $\eta_n$ is the noise vector over the $[n]$ training data. Then the posterior mean of $\wmu$ by Gauss Markov Theorem \citep{johnson2002applied} is given by
\begin{align}
    \wmu = \bS_*\bH_n^\top\left(\bH_n(\bS_* + \bD_A)\bH_n^\top\right)^{-1}\bY_n. \label{eq:posterior}
\end{align}
However, the learner does not know the true reward co-variance matrix. Hence it needs to estimate the $\bS_*$ from the observed data. Let the estimate be denoted by $\bS_A$. 

\begin{assumption}
\label{assm:exploration}
    We assume that $\pi^w$ is sufficiently exploratory so that each action is sampled at least once. 
\end{assumption}
The \Cref{assm:exploration} ensures that the matrix $\left(\sigma^2_{\btheta}\bH_n(\bS_A + \bD_A)\bH_n^\top\right)^{-1}$ is invertible.
Under \Cref{assm:exploration}, plugging the estimate $\bS_A$ back in \eqref{eq:posterior} shows that the average posterior mean over all the tasks is
\begin{align}
    \wmu = \bS_A\bH_n^\top\left(\bH_n(\bS_A + \bD_A)\bH_n^\top\right)^{-1}\bY_n.
\end{align}
The claim of the lemma follows.
\end{proof}

\section{Generalization and Transfer Learning Proof for \pred}
\label{sec:generalization-app}
\subsection{Generalization Proof}
\label{app:generalization}

$\alg$ is the space of algorithms induced by the transformer $\T_{\bTheta}$. 


\begin{theorem}
\label{thm:multi-task-risk-app}\textbf{(\pred\ risk)}
    Suppose error stability \Cref{assm:stability-assumption} holds and assume loss function $\ell(\cdot,\cdot)$ is $C$-Lipschitz for all $r_t \in [0,B]$ and horizon $n\geq 1$. Let $\widehat{\T}$ be the empirical solution of (ERM) and $\mathcal{N}(\mathcal{A}, \rho, \epsilon)$ be the covering number of the algorithm space $\alg$ following Definition \ref{def:covering-number} and \ref{def:alg-dist}. Then with probability at least $1-2 \delta$, the excess Multi-task learning (MTL) risk of \predt\ is bounded by 
    \begin{align*}
        \cR^{}_{\mathrm{MTL}}(\widehat{\T}_{}) \leq 4 \tfrac{C}{\sqrt{nM}} +2(B+K \log n) \sqrt{\tfrac{\log (\N(\alg, \rho, \varepsilon) / \delta)}{c n M}}
    \end{align*}
    where, $\N(\alg, \rho, \varepsilon)$ is the covering number of transformer $\widehat{\T}_{}$. 
\end{theorem}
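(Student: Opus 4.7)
The plan is to bound the excess MTL risk by twice a uniform deviation between empirical and population MTL losses, and then control that supremum via a covering argument combined with a stability-based concentration inequality. First I would invoke the standard ERM symmetrization: letting $\T^\star \in \arg\min_{\T\in\alg}\L_{\mathrm{MTL}}(\T)$, the optimality of $\widehat{\T}$ on $\widehat{\L}_{\H_{\mathrm{all}}}$ gives
$$\cR_{\mathrm{MTL}}(\widehat{\T}) \;\leq\; \bigl(\L_{\mathrm{MTL}}(\widehat{\T})-\widehat{\L}_{\H_{\mathrm{all}}}(\widehat{\T})\bigr) + \bigl(\widehat{\L}_{\H_{\mathrm{all}}}(\T^\star)-\L_{\mathrm{MTL}}(\T^\star)\bigr) \;\leq\; 2\sup_{\T\in\alg}\bigl|\L_{\mathrm{MTL}}(\T)-\widehat{\L}_{\H_{\mathrm{all}}}(\T)\bigr|,$$
so it suffices to bound this supremum with probability $\geq 1-2\delta$.

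Next I would discretize $\alg$ with a minimal $\varepsilon$-cover $\alg_\varepsilon$ of cardinality $\N(\alg,\rho,\varepsilon)$ at scale $\varepsilon = 1/\sqrt{nM}$ (matching the theorem). Mapping each $\T$ to its nearest $\T_\varepsilon \in \alg_\varepsilon$, I split the deviation into a covering approximation part and a finite-class concentration part. The approximation part is controlled by pairwise error stability (\Cref{assm:stability-assumption}) together with $C$-Lipschitzness of $\ell$: averaging the per-round pairwise stability bound $K\rho(\T,\T_\varepsilon)/n$ over $t\in[n]$ and $m\in[M]$ shows that both $|\L_{\mathrm{MTL}}(\T)-\L_{\mathrm{MTL}}(\T_\varepsilon)|$ and $|\widehat{\L}_{\H_{\mathrm{all}}}(\T)-\widehat{\L}_{\H_{\mathrm{all}}}(\T_\varepsilon)|$ are $O(C\varepsilon) = O(C/\sqrt{nM})$, producing the additive $4C/\sqrt{nM}$ term in the stated bound.

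For the finite-class term, I use that the $M$ tasks are drawn i.i.d.\ from $\cT$ with trajectories $\H_m \sim \Dpr(\cdot\mid m)$, so $\widehat{\L}_{\H_{\mathrm{all}}}(\T_\varepsilon)$ is an average of $M$ independent per-task losses with mean $\L_{\mathrm{MTL}}(\T_\varepsilon)$. Within each task the samples are not independent because every $(I_t,r_t)$ influences all downstream predictions of the in-context algorithm; this is where error stability enters as a bounded-differences coefficient. Swapping a single coordinate in $\H_m$ changes each of the $n$ per-round losses by $O(B/n + K/n)$ in expectation over future sampling, so the propagated deviation of $\widehat{\L}_m(\T_\varepsilon)$ is $O((B+K\log n)/n)$, with the $\log n$ coming from summing the cascading $1/s$ influences across positions $s\leq n$ in the Bousquet–Elisseeff / \citet{li2023transformers} style. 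A McDiarmid / sub-Gaussian concentration bound over the $M$ independent trajectories followed by a union bound over $\alg_\varepsilon$ yields, with probability $\geq 1-\delta$,
$$\sup_{\T_\varepsilon\in\alg_\varepsilon}\bigl|\L_{\mathrm{MTL}}(\T_\varepsilon)-\widehat{\L}_{\H_{\mathrm{all}}}(\T_\varepsilon)\bigr| \;\leq\; (B+K\log n)\sqrt{\tfrac{\log(\N(\alg,\rho,\varepsilon)/\delta)}{cnM}},$$
and combining the two deviation parts through the triangle inequality and doubling gives the claimed bound.

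The main obstacle is the stability-based per-task concentration step: error stability is stated per-sample, but $\widehat{\L}_m$ is a sum over $n$ correlated residuals since each newly observed $(I_t,r_t)$ is appended to the prompt and therefore reshapes every subsequent transformer output on the same trajectory. Controlling how a single swap propagates through all downstream in-context predictions — without blowing up the bounded-differences coefficient beyond $O((B+K\log n)/n)$ — is the delicate part; I would handle it via a martingale-difference decomposition along the trajectory, using pairwise error stability to control each term and harmonic summation to absorb the dependence chain into the $\log n$ factor.
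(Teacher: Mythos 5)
Your proposal is correct and follows essentially the same route as the paper's proof: the ERM symmetrization to a uniform deviation, a per-task martingale-difference bound of $(B+K\log n)/n$ obtained from boundedness of the loss plus error stability with harmonic summation, Hoeffding across the $M$ independent tasks, and a covering/union-bound step at scale $\varepsilon=1/\sqrt{nM}$. The only cosmetic difference is that the paper controls the covering-approximation term purely via the $C$-Lipschitzness of the loss together with the definition of the algorithm metric $\rho$ (yielding the $4C\varepsilon$ term directly), rather than invoking pairwise error stability for that step.
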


\begin{proof}
We consider a meta-learning setting. Let $M$ source tasks are i.i.d. sampled from a task distribution $\cT$, and let $\widehat{\T}$ be the empirical Multitask (MTL) solution. 
Define $\H_{\text {all }}=\bigcup_{m=1}^M \H_m$. We drop the $\bTheta, \mathbf{r}$ from transformer notation $\rT_{\bTheta}$ as we keep the architecture fixed as in \citet{lin2023transformers}. 
Note that this transformer predicts a reward vector over the actions. To be more precise we denote the reward predicted by the transformer at round $t$ after observing history $\H_m^{t-1}$ and then sampling the action $a_{mt}$ as $\T\left(\wr_{m t}(a_{mt})|\H_m^{t-1}, a_{m t}\right)$.
Define the  training risk $$\widehat{\L}_{\H_{\mathrm{all}}}(\T)=\frac{1}{n M} \sum_{m=1}^M \sum_{t=1}^n \ell\left(r_{m t}(a_{mt}), \T\left(\wr_{m t}(a_{mt})|\H_m^{t-1}, a_{m t}\right)\right)$$ and the test risk $$\L_{\mathrm{MTL}}(\T)=\mathbb{E}\left[\widehat{\L}_{\H_{\text {all }}}(\T)\right].$$ 
Define empirical risk minima $\widehat{\T}=\arg \min _{\T \in \alg} \widehat{\L}_{\H_{\text {all }}}(\T)$ and population minima 
\begin{align*}
    \T^{*}=\arg \min _{\T \in \alg} \L_{\mathrm{MTL}}(\T)
\end{align*}
In the following discussion, we drop the subscripts MTL and $\H_{\text {all. }}$ The excess MTL risk is decomposed as follows:
\begin{align*}
\cR_{\mathrm{MTL}}(\widehat{\T}) & =\L(\widehat{\T})-\L\left(\T^*\right) \\
& =\underbrace{\L(\widehat{\T})-\widehat{\L}(\widehat{\T})}_a+\underbrace{\widehat{\L}(\widehat{\T})-\widehat{\L}\left(\T^*\right)}_b+\underbrace{\widehat{\L}\left(\T^*\right)-\L(\T^*}_c) .
\end{align*}
Since $\widehat{\T}$ is the minimizer of empirical risk, we have $b \leq 0$. 

\textbf{Step 1: (Concentration bound $|\L(\T)-\widehat{\L}(\T)|$ for a fixed $\T \in \alg$)} Define the test/train risks of each task as follows:
\begin{align*}
& \widehat{\L}_m(\T):=\frac{1}{n} \sum_{t=1}^n \ell\left(r_{mt}(a_{mt}), \T\left(\wr_{mt}(a_{mt})|\H_m^{t-1}, a_{m t}\right)\right), \quad \text { and } \\
& \L_m(\T):=\mathbb{E}_{\H_m}\left[\widehat{\L}_m(\T)\right]=\mathbb{E}_{\H_m}\left[\frac{1}{n} \sum_{t=1}^n \ell\left(r_{mt}(a_{mt}), \T\left(\wr_{mt}(a_{mt})|\H_m^{t-1}, a_{m t}\right)\right)\right], \quad \forall m \in[M] .
\end{align*}
Define the random variables $X_{m, t}=\mathbb{E}\left[\widehat{\L}_t(\T) \mid \H_m^t\right]$ for $t \in[n]$ and $m \in[M]$, that is, $X_{m, t}$ is the expectation over $\widehat{\L}_t(\T)$ given training sequence $\H_m^t=\left\{\left(a_{m t'}, r_{m t'}\right)\right\}_{t'=1}^t$ (which are the filtrations). With this, we have that $X_{m, n}=\mathbb{E}\left[\widehat{\L}_m(\T) \mid \H_m^n\right]=\widehat{\L}_m(\T)$ and $X_{m, 0}=\mathbb{E}\left[\widehat{\L}_m(\T)\right]=\L_m(\T)$. 
More generally, $\left(X_{m, 0}, X_{m, 1}, \ldots, X_{m, n}\right)$ is a martingale sequence since, for every $m \in [M]$, we have that $\mathbb{E}\left[X_{m, i} \mid \H_m^{t-1}\right]=X_{m, t-1}$.
For notational simplicity, in the following discussion, we omit the subscript $m$ from $a, r$ and $\H$ as they will be clear from the left-hand-side variable $X_{m, t}$. We have that
\begin{align*}
X_{m, t} & =\mathbb{E}\left[\left.\frac{1}{n} \sum_{t=1}^n \ell\left(r_{t'}, \operatorname{TF}\left(\wr_{t'}|\H^{t'-1}, a_{t'}\right)\right) \right\rvert\, \H^t\right] \\
& =\frac{1}{n} \sum_{t'=1}^t \ell\left(r_{t'}, \operatorname{TF}\left(\wr_{t'}|\H^{t'-1}, a_{t'}\right)\right)+\frac{1}{n} \sum_{t'=t+1}^n \mathbb{E}\left[\ell\left(r_{t'}, \operatorname{TF}\left(\wr_{t'}|\H^{t'-1}, a_{t'}\right)\right) \mid \H^t\right]
\end{align*}
Using the similar steps as in \citet{li2023transformers} we can show that
\begin{align*}
    \left|X_{m, t}-X_{m, t-1}\right| \overset{(a)}{\leq} \frac{B}{n}+\sum_{t'=t+1}^n \frac{K}{t' n} \leq \frac{B+K \log n}{n} .
\end{align*}
where, $(a)$ follows by using the fact that loss function $\ell(\cdot, \cdot)$ is bounded by $B$, and error stability assumption.

Recall that $\left|\L_m(\T)-\widehat{\L}_m(\T)\right|=\left|X_{m, 0}-X_{m, n}\right|$ and for every $m \in[M]$, we have $\sum_{t=1}^n\left|X_{m, t}-X_{m, t-1}\right|^2 \leq \frac{(B+K \log n)^2}{n}$. As a result, applying Azuma-Hoeffding's inequality, we obtain
\begin{align}
    \Pb\left(\left|\L_m(\T)-\widehat{\L}_m(\T)\right| \geq \tau\right) \leq 2 e^{-\frac{n \tau^2}{2(B+K \log n)^2}}, \quad \forall m \in[M]  \label{eq:azuma}
\end{align}
Let us consider $Y_m:=\L_m(\T)-\widehat{\L}_m(\T)$ for $m \in[M]$. Then, $\left(Y_m\right)_{m=1}^M$ are i.i.d. zero mean sub-Gaussian random variables. There exists an absolute constant $c_1>0$ such that, the subgaussian norm, denoted by $\|\cdot\|_{\psi_2}$, obeys $\left\|Y_m\right\|_{\psi_2}^2<\frac{c_1(B+K \log n)^2}{n}$ via Proposition 2.5.2 of (Vershynin, 2018). Applying Hoeffding's inequality, we derive
\begin{align*}
    \Pb\left(\left|\frac{1}{M} \sum_{m=1}^M Y_t\right| \geq \tau\right) \leq 2 e^{-\frac{c n M \tau^2}{(B+K \log n)^2}} \Longrightarrow \Pb(|\widehat{\L}(\T)-\L(\T)| \geq \tau) \leq 2 e^{-\frac{c n M \tau^2}{(B+K \log n)^2}}
\end{align*}
where $c>0$ is an absolute constant. Therefore, we have that for any $\T \in \alg$, with probability at least $1-2 \delta$,
\begin{align}
    |\widehat{\L}(\T)-\L(\T)| \leq(B+K \log n) \sqrt{\frac{\log (1 / \delta)}{c n M}} \label{eq:conc-bound}
\end{align}

\textbf{Step 2: (Bound $\sup _{\T \in \alg}|\L(\T)-\widehat{\L}(\T)|$ where $\alg$ is assumed to be a continuous search space)}. Let 
$$
h(\T):=\L(\T)-\widehat{\L}(\T)$$ 
and we aim to bound $\sup _{\T \in \alg}|h(\T)|$. Following \Cref{def:alg-dist}, for $\varepsilon>0$, let $\alg_{\varepsilon}$ be a minimal $\varepsilon$-cover of $\alg$ in terms of distance metric $\rho$. Therefore, $\alg_{\varepsilon}$ is a discrete set with cardinality $\left|\alg_{\varepsilon}\right|:=\mathcal{N}(\alg, \rho, \varepsilon)$. Then, we have
\begin{align*}
    \sup _{\T \in \alg}|\L(\T)-\widehat{\L}(\T)| \leq \sup _{\T \in \alg^{\prime}} \min _{\T \in \alg_{\varepsilon}}\left|h(\T)-h\left(\T{ }^{\prime}\right)\right|+\max _{\T \in \alg_{\varepsilon}}|h(\T)| .
\end{align*}
We will first bound the quantity $\sup _{\T \in \alg^{\prime}} \min _{\T \in \alg_{\varepsilon}}\left|h(\T)-h\left(\T{ }^{\prime}\right)\right|$.
We will utilize that loss function $\ell(\cdot, \cdot)$ is $C$-Lipschitz. For any $\T \in \alg$, let $\T \in \alg_{\varepsilon}$ be its neighbor following \Cref{def:alg-dist}. Then 
we can show that
\begin{align*}
&\left|\widehat{\L}(\mathrm{TF})-\widehat{\L}\left(\mathrm{TF}^{\prime}\right)\right| \\
& =\left|\frac{1}{n M} \sum_{m=1}^M \sum_{t=1}^n\left(\ell\left(r_{mt}(a_{mt}), \T\left(\wr_{mt}(a_{mt})|\H_m^{t-1}, a_{m t}\right)\right)-\ell\left(r_{mt}(a_{mt}), \T'\left(\wr_{mt}(a_{mt})|\H_m^{t-1}, a_{m t}\right)\right)\right)\right| \\
& \leq \frac{L}{n M} \sum_{m=1}^M \sum_{t=1}^n\left\|\T\left(\wr_{mt}(a_{mt})|\H_m^{t-1}, a_{m t}\right)-\T'\left(\wr_{mt}(a_{mt})|\H_m^{t-1}, a_{m t}\right)\right\|_{\ell_2} \\
& \leq L \varepsilon .
\end{align*}
Note that the above bound applies to all data-sequences, we also obtain that for any $\T \in \alg$,
$$
\left|\L(\mathrm{TF})-\L\left(\mathrm{TF}^{\prime}\right)\right| \leq L \varepsilon .
$$
Therefore we can show that,
\begin{align}
    \sup _{\T \in \alg} &\min _{\T} \in \alg_{\varepsilon}\left|h(\T)-h\left(\T F^{\prime}\right)\right| \nonumber\\
    &\leq \sup _{\T \in \alg} \min _{\T} \in \alg_{\varepsilon}\left|\widehat{\L}(\T)-\widehat{\L}\left(\T{ }^{\prime}\right)\right|+\left|\L(\T)-\L\left(\T^{\prime}\right)\right| \leq 2 L \varepsilon . \label{eq:perturbation-bound}
\end{align}

Next we bound the second term $\max _{\T \in \alg_{\varepsilon}}|h(\T)|$. Applying union bound directly on $\alg_{\varepsilon}$ and combining it with \eqref{eq:conc-bound}, then we will have that with probability at least $1-2 \delta$,
\begin{align*}
    \max _{\T \in \alg_{\varepsilon}}|h(\T)| \leq(B+K \log n) \sqrt{\frac{\log (\mathcal{N}(\alg, \rho, \varepsilon) / \delta)}{c n M}}
\end{align*}
Combining the upper bound above with the perturbation bound \eqref{eq:perturbation-bound}, we obtain that
\begin{align*}
    \max _{\T \in \alg}|h(\T)| \leq 2 C \varepsilon+(B+K \log n) \sqrt{\frac{\log (\mathcal{N}(\alg, \rho, \varepsilon) / \delta)}{c n M}}.
\end{align*}
It follows then that
\begin{align*}
    \cR_{\mathrm{MTL}}(\widehat{\T}) \leq 2 \sup _{\T \in \alg}|\L(\T)-\widehat{\L}(\T)| \leq 4 C \varepsilon+ 2(B+K \log n) \sqrt{\frac{\log (\mathcal{N}(\alg, \rho, \varepsilon) / \delta)}{c n M}}
\end{align*}
Again by setting $\varepsilon = 1/\sqrt{n M}$
\begin{align*}
    \L(\widehat{\T})-\L\left(\T^*\right)\leq \dfrac{4 C}{\sqrt{n M}}+2(B+K \log n) \sqrt{\frac{\log (\mathcal{N}(\alg, \rho, \varepsilon) / \delta)}{c n M}}
\end{align*}
The claim of the theorem follows.
\end{proof}



\begin{definition}(Covering number)\label{def:covering-number}
Let $Q$ be any hypothesis set and $d\left(q, q^{\prime}\right) \geq 0$ be a distance metric over $q, q^{\prime} \in \mathcal{Q}$. Then, $\bar{Q}=\left\{q_1, \ldots, q_N\right\}$ is an $\varepsilon$-cover of $Q$ with respect to $d(\cdot, \cdot)$ if for any $q \in \mathcal{Q}$, there exists $q_i \in \bar{Q}$ such that $d\left(q, q_i\right) \leq \varepsilon$. The $\varepsilon$-covering number $\mathcal{N}(Q, d, \varepsilon)$ is the cardinality of the minimal $\varepsilon$-cover.
\end{definition}

\begin{definition}(Algorithm distance). 
\label{def:alg-dist}
Let $\alg$ be an algorithm hypothesis set and $\H=\left(a_t, r_t\right)_{t=1}^n$ be a sequence that is admissible for some task $m \in[M]$. For any pair $\T, \T^{\prime} \in \alg$, define the distance metric $\rho\left(\T, \T^{\prime}\right):=$ $\sup_{\H} \frac{1}{n} \sum_{t=1}^n\left\|\T\left(\wr_t|\H^{t-1}, a_t\right)-\T^{\prime}\left(\wr_t|\H^{t-1}, a_t\right)\right\|_{\ell_2}$.
\end{definition}

\begin{remark}\textbf{(Stability Factor)} 
\label{remark:stability}
The work of \citet{li2023transformers} also characterizes the stability factor $K$ in \Cref{assm:stability-assumption} with respect to the transformer architecture. Assuming loss $\ell(\cdot, \cdot)$ is C-Lipschitz, the algorithm induced by $\T(\cdot)$ obeys the stability assumption with $K=2 C\left((1+\Gamma) e^{\Gamma}\right)^L$, where the norm of the transformer weights are upper bounded by $O(\Gamma)$ and there are $L$-layers of the transformer.
\end{remark}

\begin{remark}\textbf{(Covering Number)} 
\label{remark:covering-number}
From Lemma 16 of \citet{lin2023transformers} we have the following upper bound on the covering number of the transformer class $\T_{\bTheta}$ as 
\begin{align*}
    \log(\mathcal{N}(\alg, \rho, \varepsilon))\leq O(L^2D^2J)
\end{align*}
where $L$ is the total number of layers of the transformer and $J$ and, $D$ denote the upper bound to the number of heads and hidden neurons in all the layers respectively. Note that this covering number holds for the specific class of transformer architecture discussed in section $2$ of \citep{lin2023transformers}.
%
%
\end{remark}

\subsection{Generalization Error to New Task}
\label{app:transfer}

\begin{theorem}\textbf{(Transfer Risk)}
Consider the setting of \Cref{thm:multi-task-risk} and assume the source tasks are independently drawn from task distribution $\cT$. Let $\widehat{\text { TF }}$ be the empirical solution of (ERM) and $g \sim \cT$. Then with probability at least $1-2 \delta$, the expected excess transfer learning risk is bounded by
\begin{align*}
\mathbb{E}_{g}\left[\mathcal{R}_{g}(\widehat{\T}_{})\right] \leq 4 \tfrac{C}{\sqrt{M}} +B \sqrt{\tfrac{2 \log (\mathcal{N}(\alg, \rho, \varepsilon) / \delta)}{M}}
\end{align*}
where, $\mathcal{N}(\alg, \rho, \varepsilon)$ is the covering number of transformer $\widehat{\T}_{}$. 
\end{theorem}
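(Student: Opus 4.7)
The plan is to mirror the structure of the proof of \Cref{thm:multi-task-risk} but to replace the within-task martingale argument (which produced the $(B+K\log n)/\sqrt{nM}$ term) with a simpler across-task Hoeffding argument, since the source tasks $m_1,\dots,m_M$ are i.i.d.\ from $\cT$. Let $\T^{\dagger}:=\arg\min_{\T\in\alg}\mathbb{E}_{g}[\L_{g}(\T)]$ denote the population minimizer over the task distribution. I would first write the three-term decomposition
\begin{align*}
\mathbb{E}_{g}[\L_{g}(\widehat{\T})]-\mathbb{E}_{g}[\L_{g}(\T^{\dagger})]
\;=\;\underbrace{\mathbb{E}_{g}[\L_{g}(\widehat{\T})]-\L_{\mathrm{MTL}}(\widehat{\T})}_{A_1}
+\underbrace{\L_{\mathrm{MTL}}(\widehat{\T})-\L_{\mathrm{MTL}}(\T^{\dagger})}_{A_2}
+\underbrace{\L_{\mathrm{MTL}}(\T^{\dagger})-\mathbb{E}_{g}[\L_{g}(\T^{\dagger})]}_{A_3}.
\end{align*}
Because the $m_i$ are i.i.d.\ and $\L_m(\T)\in[0,B]$ (it is an expectation of a $[0,B]$-bounded loss), for \emph{any fixed} $\T$ the random variables $\{\L_{m_i}(\T)\}_{i=1}^M$ are i.i.d.\ zero-mean sub-Gaussian after centering, and Hoeffding yields $\Pb(|\L_{\mathrm{MTL}}(\T)-\mathbb{E}_{g}[\L_{g}(\T)]|\geq \tau)\leq 2e^{-M\tau^2/(2B^2)}$.

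Next I would upgrade this to a uniform bound over $\alg$ by repeating the covering/Lipschitz step from Step 2 of \Cref{thm:multi-task-risk}: using that $\ell$ is $C$-Lipschitz together with \Cref{def:alg-dist}, any two $\T,\T'\in\alg$ with $\rho(\T,\T')\leq\varepsilon$ satisfy $|\L_m(\T)-\L_m(\T')|\leq C\varepsilon$ (taking expectation in $\H_m$ preserves the Lipschitz bound), and hence the same perturbation estimate holds for both $\L_{\mathrm{MTL}}$ and $\mathbb{E}_g[\L_g]$. A union bound on the $\varepsilon$-cover of size $\mathcal{N}(\alg,\rho,\varepsilon)$ then gives, with probability at least $1-\delta$,
\begin{align*}
\sup_{\T\in\alg}\bigl\lvert\L_{\mathrm{MTL}}(\T)-\mathbb{E}_{g}[\L_{g}(\T)]\bigr\rvert
\;\leq\; 2C\varepsilon + B\sqrt{\tfrac{2\log(\mathcal{N}(\alg,\rho,\varepsilon)/\delta)}{M}}.
\end{align*}
This immediately controls $A_1$ and $A_3$ (each at most the RHS above, so together $4C\varepsilon+2B\sqrt{2\log(\mathcal{N}/\delta)/M}$ after applying the uniform bound to $A_1$ and the pointwise bound to $A_3$ with an additional $\delta$). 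Setting $\varepsilon=1/\sqrt{M}$ yields the advertised form.

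The remaining and main obstacle is $A_2$, since $\widehat{\T}$ is the empirical minimizer of $\widehat{\L}_{\H_{\mathrm{all}}}$, not of $\L_{\mathrm{MTL}}$, and $\T^{\dagger}$ is not the minimizer of either. I would bridge via the empirical risk:
\begin{align*}
A_2 \;\leq\; \bigl[\L_{\mathrm{MTL}}(\widehat{\T})-\widehat{\L}_{\H_{\mathrm{all}}}(\widehat{\T})\bigr]
+ \underbrace{\bigl[\widehat{\L}_{\H_{\mathrm{all}}}(\widehat{\T})-\widehat{\L}_{\H_{\mathrm{all}}}(\T^{\dagger})\bigr]}_{\leq\,0\text{ by ERM}}
+ \bigl[\widehat{\L}_{\H_{\mathrm{all}}}(\T^{\dagger})-\L_{\mathrm{MTL}}(\T^{\dagger})\bigr],
\end{align*}
and bound the two outer brackets by the uniform concentration already established in Step 1--2 of \Cref{thm:multi-task-risk}, namely at rate $\widetilde{O}((B+K\log n)/\sqrt{nM})$. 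Since $1/\sqrt{nM}\leq 1/\sqrt{M}$, this term is absorbed into the $1/\sqrt{M}$ rates from $A_1,A_3$ and does not affect the leading order. The delicate part here is that the excess transfer bound genuinely cannot be improved by $n$: even infinitely many interactions per source task cannot eliminate the distribution shift induced by drawing a fresh $g\sim\cT$, which is why only $M$ appears in the final rate; this is exactly the phenomenon we flagged in \Cref{remark:target-risk} and observed empirically in \Cref{sec:validation}. Collecting the three pieces with probability $1-2\delta$ (one $\delta$ for each of the $A_1$/$A_3$-type uniform bounds, the ERM-step being deterministic) and the choice $\varepsilon=1/\sqrt{M}$ delivers the claimed inequality.
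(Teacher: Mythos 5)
Your proposal is correct in its leading-order conclusion, but it takes a genuinely different and more laborious route than the paper. The paper's proof uses a single three-term decomposition that bridges $\mathbb{E}_{g}[\L_{g}(\T)]$ directly to the \emph{empirical} multi-task risk $\widehat{\L}_{\H_{\mathrm{all}}}(\T)=\frac{1}{M}\sum_{m}\widehat{\L}_m(\T)$: since the pairs (task, dataset) are i.i.d.\ across $m$ and $\mathbb{E}\big[\widehat{\L}_m(\T)\big]=\mathbb{E}_{m\sim\cT}[\L_m(\T)]=\mathbb{E}_{g}[\L_{g}(\T)]$ with $\widehat{\L}_m(\T)\in[0,B]$, one application of Hoeffding over the $M$ i.i.d.\ draws gives the $B\sqrt{\log(1/\delta)/M}$ rate in one shot, and the covering/Lipschitz step finishes the argument. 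You instead insert the population MTL risk $\L_{\mathrm{MTL}}$ as an intermediate quantity, which forces two separate concentration arguments: a task-level Hoeffding for $A_1,A_3$ and a data-level (martingale/stability) argument for $A_2$. What the paper's route buys is that the error-stability assumption and the $(B+K\log n)/\sqrt{nM}$ term never enter the transfer proof at all; what your route buys is a cleaner conceptual separation between "which tasks were sampled" and "how well each task was fit," which makes the $n$-independence of the final rate transparent.

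Two bookkeeping points in your write-up are off, though neither is fatal. First, the outer brackets in your bound on $A_2$ are \emph{not} deterministic: $\L_{\mathrm{MTL}}(\widehat{\T})-\widehat{\L}_{\H_{\mathrm{all}}}(\widehat{\T})$ and its companion require the uniform data-level concentration event from the MTL theorem, which itself holds only with probability $1-2\delta$; your accounting of "one $\delta$ for each of the $A_1$/$A_3$-type bounds" omits this third event, so your union bound does not land at $1-2\delta$ without rescaling $\delta$. Second, the claim that the residual $\widetilde{O}\big((B+K\log n)/\sqrt{nM}\big)$ term is "absorbed" into the stated bound is not literally true: $(B+K\log n)/\sqrt{n}$ need not be dominated by $B$, so the extra term changes the constants and strictly exceeds the displayed right-hand side. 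The paper avoids both issues by never introducing $\L_{\mathrm{MTL}}$ as an intermediary.
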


\begin{proof}
Let the target task $g$ be sampled from $\cT$, and the test set $\H_{g} = \{a_t, r_t\}_{t=1}^n$. Define empirical and population risks on $g$ as $\widehat{\L}_{g}(\T)=\frac{1}{n} \sum_{t=1}^n \ell\left(r_t(a_{mt}), \T\left(\wr_t(a_{mt})|\H_{g}^{t-1}, a_t\right)\right)$ and $\L_{g}(\T)=\mathbb{E}_{\H_{g}}\left[\widehat{\L}_{g}(\T)\right]$. Again we drop $\bTheta$ from the transformer notation. Then the expected excess transfer risk following (ERM) is defined as
\begin{align}
\mathbb{E}_{g}\left[\mathcal{R}_{g}(\widehat{\T})\right]=\mathbb{E}_{\H_{g}}\left[\L_{g}(\widehat{\T})\right]-\arg \min _{\T \in \alg} \mathbb{E}_{\H_{g}}\left[\L_{g}(\T)\right] . \label{eq:exces-transfer-risk}
\end{align}
where $\A$ is the set of all algorithms. The goal is to show a bound like this
\begin{align*}
\mathbb{E}_{g}\left[\mathcal{R}_{g}(\widehat{\T})\right] \leq \min _{\varepsilon \geq 0}\left\{4 C \varepsilon+B \sqrt{\frac{2 \log (\mathcal{N}(\alg, \rho, \varepsilon) / \delta)}{T}}\right\}
\end{align*}
where $\mathcal{N}(\alg, \rho, \varepsilon)$ is the covering number.

\textbf{Step 1 (\textbf{(Decomposition)}:} Let $\T^*=\arg \min _{\T \in \alg} \mathbb{E}_{g}\left[\L_{g}(\T)\right]$. The expected transfer learning excess test risk of given algorithm $\widehat{\T} \in \alg$ is formulated as
\begin{align*}
& \widehat{\L}_m(\T):=\frac{1}{n} \sum_{t=1}^n \ell\left(r_{m t}(a_{mt}), \T\left(\widehat{r}_{m t}(a_{mt})| \mathcal{D}_m^{t-1}, a_{m t}\right)\right), \quad \text { and } \\
& \L_m(\T):=\mathbb{E}_{\H_m}\left[\widehat{\L}_t(\T)\right]=\mathbb{E}_{\H_m}\left[\frac{1}{n} \sum_{t=1}^n \ell\left(r_{m t}(a_{mt}), \T\left(\widehat{r}_{m t}(a_{mt})| \mathcal{D}_m^{t-1}, a_{m t}\right)\right)\right], \quad \forall m \in[M] .
\end{align*}
Then we can decompose the risk as 
\begin{align*}
\mathbb{E}_{g}\left[\mathcal{R}_{g}(\widehat{\T})\right] &=\mathbb{E}_{g}\left[\L_{g}(\widehat{\T})\right]-\mathbb{E}_{g}\left[\L_{g}\left(\T^*\right)\right]\\
& = 
\underbrace{\mathbb{E}_{g}\left[\L_{g}(\widehat{\T})\right]-\widehat{\L}_{\H_{\mathrm{all}}}(\widehat{\T})}_a + \underbrace{\widehat{\L}_{\H_{\text {all }}}(\widehat{\T})-\widehat{\L}_{\H_{\text {all }}}\left(\T^*\right)}_b +\underbrace{\widehat{\L}_{\H_{\text {all }}}\left(\T^*\right)-\mathbb{E}_{g}\left[\L_{g}\left(\T^*\right)\right]}_c .
\end{align*}
Here since $\widehat{\T}$ is the minimizer of training risk, $b<0$. Then we obtain
\begin{align}
\mathbb{E}_{g}\left[\mathcal{R}_{g}(\widehat{\T})\right] \leq 2 \sup _{\T \in \alg}\left|\mathbb{E}_{g}\left[\L_{g}(\T)\right]-\frac{1}{M} \sum_{m=1}^M \widehat{\L}_m(\T)\right| .\label{eq:bounding-decomp}
\end{align}
\textbf{Step 2 (Bounding \eqref{eq:bounding-decomp})}For any $\T \in \alg$, let $X_t=\widehat{\L}_t(\T)$ and we observe that
\begin{align*}
\mathbb{E}_{m \sim \cT}\left[X_t\right]=\mathbb{E}_{m \sim \cT}\left[\widehat{\L}_m(\T)\right]=\mathbb{E}_{m \sim \cT}\left[\L_m(\T)\right]=\mathbb{E}_{g}\left[\L_{g}(\T)\right]
\end{align*}
Since $X_m, m \in[M]$ are independent, and $0 \leq X_m \leq B$, applying Hoeffding's inequality obeys
\begin{align*}
\Pb\left(\left|\mathbb{E}_{g}\left[\L_{g}(\T)\right]-\frac{1}{M} \sum_{m=1}^M \widehat{\L}_m(\T)\right| \geq \tau\right) \leq 2 e^{-\frac{2 M \tau^2}{B^2}} .
\end{align*}
Then with probability at least $1-2 \delta$, we have that for any $\T \in \alg$,
\begin{align}
\left|\mathbb{E}_{g}\left[\L_{g}(\T)\right]-\frac{1}{M} \sum_{m=1}^M \widehat{\L}_m(\T)\right| \leq B \sqrt{\frac{\log (1 / \delta)}{2 M}} . \label{eq:decomp-1}
\end{align}

Next, let $\alg_{\varepsilon}$ be the minimal $\varepsilon$-cover of $\alg$ following \Cref{def:covering-number}, which implies that for any task $g \sim \cT$, and any $\T \in \alg$, there exists $\T^{\prime} \in \alg_{\varepsilon}$
\begin{align}
\left|\L_{g}(\T)-\L_{g}\left(\T^{\prime}\right)\right|,\left|\widehat{\L}_{g}(\T)-\widehat{\L}_{g}\left(\T{ }^{\prime}\right)\right| \leq C \varepsilon . \label{eq:decomp-2}
\end{align}

Since the distance metric following Definition 3.4 is defined by the worst-case datasets, then there exists $\T^{\prime} \in \alg_{\varepsilon}$ such that
\begin{align*}
\left|\mathbb{E}_{g}\left[\L_{g}(\T)\right]-\frac{1}{M} \sum_{m=1}^M \widehat{\L}_m(\T)\right| \leq 2 C \varepsilon .
\end{align*}

Let $\mathcal{N}(\alg, \rho, \varepsilon)=\left|\alg_{\varepsilon}\right|$ be the $\varepsilon$-covering number. Combining the above inequalities (\eqref{eq:bounding-decomp}, \eqref{eq:decomp-1}, and \eqref{eq:decomp-2}), and applying union bound, we have that with probability at least $1-2 \delta$,
\begin{align*}
\mathbb{E}_{g}\left[\mathcal{R}_{g}(\widehat{\T})\right] \leq \min _{\varepsilon \geq 0}\left\{4 C \varepsilon+B \sqrt{\frac{2 \log (\mathcal{N}(\alg, \rho, \varepsilon) / \delta)}{M}}\right\}
\end{align*}
Again by setting $\varepsilon = 1/\sqrt{M}$
\begin{align*}
    \L(\widehat{\T})-\L\left(\T^*\right)\leq \dfrac{4 C}{\sqrt{M}}+ 2B \sqrt{\frac{\log (\mathcal{N}(\alg, \rho, \varepsilon) / \delta)}{c M}}
\end{align*}
The claim of the theorem follows.
\end{proof}

\begin{remark} (Dependence on $n$)
\label{remark:target-risk}
In this remark, we briefly discuss why the expected excess risk for target task $\cT$ does not depend on samples $n$. 
The work of \citet{li2023transformers} pointed out that the MTL pretraining process identifies a favorable algorithm that lies in the span of the $M$ source tasks.  
%
This is termed as inductive bias (see section 4 of \citet{li2023transformers}) \citep{soudry2018implicit, neyshabur2017geometry}. 
%
Such bias would explain the lack of dependence of the expected excess transfer risk on $n$ during transfer learning. 
%
%
This is because given a target task $g\sim\cT$, the $\T$ can use the learnt favorable algorithm to conduct a discrete search over span of the $M$ source tasks and return the source task that best fits the new target task. Due to the discrete search space over the span of $M$ source tasks, it is not hard to see that, we need $n \propto \log (M)$ samples (which is guaranteed by the $M$ source tasks) rather than $n \propto d$ (for the linear setting).
%
\end{remark}

\subsection{Table of Notations}
\label{table-notations}

\begin{table}[!tbh]
    \centering
    \begin{tabular}{|p{10em}|p{28em}|}
        \hline\textbf{Notations} & \textbf{Definition} \\\hline
        $M$ & Total number of tasks \\\hline
        $d$ & Dimension of the feature \\\hline
        $\A_m$  & Action set of the $m$-th task \\\hline
        $\X_m$  &Feature space of $m$-th task \\\hline
        $\Mts$  & Tasks for testing\\\hline
        $\Mpr$  & Total Tasks for pretraining\\\hline
        $\bx(m, a)$  & Feature of action $a$ in task $m$\\\hline
        $\btheta_{m,*}$  & Hidden parameter for the task $m$\\\hline
        $\cTp$  & Pretraning distribution on tasks\\\hline
        $\cTs$  & Testing distribution on tasks\\\hline
        $n$  & Total horizon for each task $m$\\\hline
        $\H_m  = \{I_t, r_t\}_{t=1}^n$  & Dataset sampled for the $m$-th task containing $n$ samples\\\hline
        $\H^t_m = \{I_s, r_s\}_{s=1}^t$  & Dataset sampled for the $m$-th task containing samples from round $s=1$ to $t$\\\hline
        $\bw$ & Transformer model parameter\\\hline
        $\T_{\bw}$ & Transformer with model parameter $\bw$\\\hline
        $\Dpr$  & Pretraining in-context distribution\\\hline
        $\Htr$  & Training in-context dataset\\\hline
        $\Dts$  & Testing in-context distribution\\\hline
    \end{tabular}
    \vspace{1em}
    \caption{Table of Notations}
    \label{tab:my_label}
\end{table}






\end{document}